\title[Optimistically Optimistic Exploration for Efficient Infinite-Horizon RL]{Optimistically Optimistic 
Exploration for Provably Efficient Infinite-Horizon Reinforcement and Imitation Learning}
\author[Moulin, Neu and Viano]{%
 \Name{Antoine Moulin} \Email{antoine.moulin@upf.edu}\\
 \addr Universitat Pompeu Fabra, Barcelona, Spain%
 \AND
 \Name{Gergely Neu} \Email{gergely.neu@gmail.com}\\
 \addr Universitat Pompeu Fabra, Barcelona, Spain
 \AND
 \Name{Luca Viano} \Email{luca.viano@epfl.ch}\\
 \addr EPFL, Lausanne, Switzerland
}
\newcommand{\bc}[1]{\left\{{#1}\right\}}
\newcommand{\brr}[1]{\left({#1}\right)}
\newcommand{\bs}[1]{\left[{#1}\right]}
\newcommand{\aspace}{\mcf{A}}     
\newcounter{protocol}
\newenvironment{protocol}[1][htb]{%
  \let\c@algorithm\c@protocol
  \renewcommand{\ALG@name}{Protocol}
  \begin{algorithm}[#1]%
  }{\end{algorithm}
}
\newcommand{\trans}{\intercal}
\newcommand{\mcf}{\mathcal}
\newcommand{\innerprod}[2]{\left\langle{#1},{#2}\right\rangle}
\newcommand{\expert}{{\pi_E}}
\newcommand{\experti}{\pi_E^i}
\newcommand{\cost}{r}
\newcommand{\true}{r_{\textup{true}}}
\newcommand{\FRAalg}{\texttt{\textbf{FRA-IL}}\xspace}
\newcommand{\phim}{\Phi}
\newcommand{\initial}{\nu_0}
\newcommand{\lv}[1]{%
    \ifmmode
    \text{\textcolor{orange}{[LV: #1]}}
    \else
    \textcolor{orange}{[LV: #1]}
    \fi
}
\newcommand{\greentick}{\textcolor{green}{\ding{51}}}
\newcommand{\redcross}{\textcolor{red}{\ding{55}}}
\newtheorem{assumption}{\protect\assumptionname}
\providecommand{\assumptionname}{Assumption}
\newcommand{\mytag}[2]{%
  \text{#1}%
  \@bsphack
  \protected@write\@auxout{}%
         {\string\newlabel{#2}{{#1}{\thepage}}}%
  \@esphack
}
\newcommand{\raviUCB}{\texttt{\textbf{RAVI-UCB}}\xspace}
\newcommand{\RMAXalg}{\texttt{\textbf{Rmax}}\xspace}
\newcommand{\algname}{\texttt{\textbf{Rmax-RAVI-UCB}}\xspace}
\newcommand{\RMAX}{R_{\max}}
\newcommand{\DDKL}[2]{\mathcal{D}_{\text{KL}}\pa{#1\middle\|#2}}
\newcommand{\X}{\mathcal{X}}
\newcommand{\A}{\mathcal{A}}
\newcommand{\real}{\mathbb{R}}
\newcommand{\II}[1]{\mathbb{I}_{\left\{#1\right\}}}
\newcommand{\EE}[1]{\mathbb{E}\left[#1\right]}
\newcommand{\EEs}[2]{\mathbb{E}_{#2}\left[#1\right]}
\newcommand{\ra}{\rightarrow}
\newcommand{\iprod}[2]{\left\langle#1,#2\right\rangle}
\newcommand{\biprod}[2]{\bigl\langle#1,#2\bigr\rangle}
\newcommand{\norm}[1]{\left\|#1\right\|}
\newcommand{\ev}[1]{\left\{#1\right\}}
\newcommand{\pa}[1]{\left(#1\right)}
\newcommand{\abs}[1]{\left|#1\right|}
\newcommand{\bpa}[1]{\bigl(#1\bigr)}
\newcommand{\wh}{\widehat}
\newcommand{\transpose}{^\mathsf{\scriptscriptstyle T}}
\definecolor{PalePurp}{rgb}{0.66,0.57,0.66}
\newcommand{\regret}{\mathfrak{R}_T}
\newcommand{\regretK}{\mathfrak{R}_K}
\newcommand{\antoine}[1]{%
    \ifmmode
    \text{\textcolor{blue}{[Antoine: #1]}}
    \else
    \textcolor{blue}{[Antoine: #1]}
    \fi
}
\newcommand{\tmpassumption}[1]{%
    \ifmmode
    \text{\textcolor{orange}{[Assumption: #1]}}
    \else
    \textcolor{orange}{[Assumption: #1]}
    \fi
}
\DeclareRobustCommand\onedot{\futurelet\@let@token\@onedot}
\def\@onedot{\ifx\@let@token.\else.\null\fi\xspace}
\def\eg{\textit{e.g}\onedot}
\def\ie{\textit{i.e}\onedot}
\newcommand{\spr}[1]{\left( #1 \right)}
\newcommand{\sbr}[1]{\left[ #1 \right]}
\newcommand{\scbr}[1]{\left\{ #1 \right\}}
\newcommand{\inp}[1]{\left\langle #1 \right\rangle}
\newcommand{\sumkK}{\sum_{k=1}^K}
\newcommand{\sumtT}{\sum_{t=1}^T}
\newcommand{\given}{\mathrel{}\middle|\mathrel{}}
\newcommand{\KL}{\mathcal{D}_{\text{KL}}}
\newcommand{\upplus}{\mathsf{\scriptscriptstyle +}} 
\newcommand{\bbE}{\mathbb{E}}
\newcommand{\bbN}{\mathbb{N}}
\newcommand{\bbP}{\mathbb{P}}
\newcommand{\bbR}{\mathbb{R}}
\newcommand{\bfone}{\mathbf{1}}
\newcommand{\bfe}{\mathbf{e}}
\newcommand{\cA}{\mathcal{A}}
\newcommand{\cD}{\mathcal{D}}
\newcommand{\cE}{\mathcal{E}}
\newcommand{\cF}{\mathcal{F}}
\newcommand{\cH}{\mathcal{H}}
\newcommand{\cK}{\mathcal{K}}
\newcommand{\cM}{\mathcal{M}}
\newcommand{\cN}{\mathcal{N}}
\newcommand{\cO}{\mathcal{O}}
\newcommand{\cQ}{\mathcal{Q}}
\newcommand{\cR}{\mathcal{R}}
\newcommand{\cT}{\mathcal{T}}
\newcommand{\cU}{\mathcal{U}}
\newcommand{\cV}{\mathcal{V}}
\newcommand{\cW}{\mathcal{W}}
\newcommand{\cX}{\mathcal{X}}
\newcommand{\cZ}{\mathcal{Z}}
\newcommand{\opE}{E}
\newcommand{\opP}{P}
\newcommand{\ophPk}{\widehat{P}_k}
\newcommand{\opPplusk}{P_k^\mathsf{\scriptscriptstyle +}}
\newcommand{\ophPplusk}{\widehat{P}_k^\mathsf{\scriptscriptstyle +}}
\DeclareMathOperator{\optrace}{trace}
\DeclareMathOperator{\regretKplus}{\mathfrak{R}_K^\mathsf{\scriptscriptstyle +}}
\DeclareMathOperator{\regretIL}{\mathfrak{R}_K^\mathrm{I\hspace{.05em}L}}
\DeclareMathOperator{\CB}{CB}
\DeclareMathOperator{\kl}{kl}
\newcommand{\piunif}{\pi_{\texttt{unif}}}
\newcommand{\QMAX}{Q_{\max}}
\newcommand{\VMAX}{Q_{\max}}
\newcommand{\LMAX}{L_{\max}}
\newcommand{\WMAX}{W_{\max}}
\newcommand{\reset}{reset\xspace}
\DeclareMathOperator{\LSE}{LSE}
\newcommand{\Reg}{\mathfrak{R}} 
\renewcommand{\phi}{\varphi}
\newcommand{\rewardbias}{\texttt{reward-bias}}
\newcommand{\modelbias}{\texttt{model-bias}}
\newcommand{\algcomment}[1]{\textcolor{blue!70!black}{\transparent{0.7}\small{\texttt{\textbf{\#\hspace{2pt}#1}}}}}
\newlength{\minipagewidth}
\newcolumntype{P}[1]{>{\centering\arraybackslash}p{#1}}
\newcolumntype{M}[1]{>{\centering\arraybackslash}m{#1}}
\begin{document}

\maketitle

\begin{abstract}%
  We study the problem of reinforcement learning in infinite-horizon discounted linear Markov decision processes (MDPs), and propose the first computationally efficient algorithm achieving rate-optimal regret guarantees in this setting. Our main idea is to combine two classic techniques for optimistic exploration: additive exploration bonuses applied to the reward function, and artificial transitions made to an absorbing state with maximal return. We show that, combined with a regularized approximate dynamic-programming scheme, the resulting algorithm achieves a regret of order $\tilde{\mathcal{O}} (\sqrt{d^3 (1 - \gamma)^{- 7 / 2} T})$, where $T$ is the total number of sample transitions, $\gamma \in (0,1)$ is the discount factor, and $d$ is the feature dimensionality. The results continue to hold against adversarial reward sequences, enabling application of our method to the problem of imitation learning in linear MDPs, where we achieve state-of-the-art results.
\end{abstract}

\begin{keywords}%
  Optimistic exploration, discounted MDPs, linear MDPs, imitation learning%
\end{keywords}

\section{Introduction}

Since the breakthrough work of \citet{jin2019provably}, the class of linear Markov decision processes (MDPs) has become a standard model for theoretical analysis of reinforcement learning (RL) algorithms under linear function approximation. This work demonstrated the possibility of constructing computationally and statistically efficient methods for large-scale RL, and pioneered an analysis technique that influenced the entire field of RL theory. Hundreds of follow-up papers have studied variations of this model, studying extensions such as learning with adversarial rewards \citep{Neu:2021,zhong2024theoretical,sherman2023,dai2023, sherman2023rate,cassel2024warmupfree,liu2023towards}, without rewards \citep{Wang:2020b,wagenmaker2022reward,hu2022towards}, or with unknown features \cite{agarwal2020flambe,uehara2021representation,zhang2022efficient,mhammedi2024efficient,modi2024model}. The linearity constraint itself has been relaxed in a variety of ways \cite{zanette2020learning,Cai:2020,du2021bilinear,weisz2024online,golowich2024linear,wu2024computationally}. However, practically all of these developments retained one major limitation of the original work of \citet{jin2019provably}: it only applies to finite-horizon MDPs. Generalizations to the more challenging (and practically much more popular) infinite-horizon MDP models have so far remained very limited, yielding only highly impractical methods or suboptimal performance guarantees \citep{WJLJ20}. In this paper, we propose an efficient algorithm that successfully addresses this long-standing open problem.

We consider the problem of learning a nearly optimal policy in $\gamma$-discounted MDPs \citep{Puterman:1994}, under the linear MDP assumption first proposed by \citet{jin2019provably} (see also \citealp{Yang:2019}). We consider an interaction protocol where a learning agent interacts with the environment in a sequence of $K$ episodes of geometrically distributed length, and aims to pick a sequence of policies such that its regret against the best fixed policy is as small as possible. Our algorithm achieves a regret bound of order $H\sqrt{d^3T} + H^{7/4} \sqrt{d T \log |\cA|}$, where $d$ is the feature dimensionality, $H = \frac{1}{1-\gamma}$ is the effective horizon, $\cA$ is the action space, and $T$ is the number of interactions. This implies a bound on the sample complexity of learning an $\varepsilon$-optimal policy of the order $\frac{H^3 d^3 + H^{7/2} d \log \abs{\cA}}{\varepsilon^2}$. The algorithm returns a single softmax policy that is fully described in terms of a $d$-dimensional parameter vector and a $d^2$-dimensional feature-covariance matrix. This constitutes the first sample-complexity result of the optimal order $1/\varepsilon^2$ achieved by a computationally efficient algorithm. The regret guarantees are also shown to hold if the reward function changes adversarially over time, and we additionally provide an extension of our method for the setting of imitation learning.

On the technical side, our main contribution is the development of a new optimistic exploration mechanism that combines two classic ideas from two different eras of RL theory. First, following the recipe of \citet{jin2019provably}, we make use of additive UCB-style exploration bonuses which have been successfuly used for several decades in both bandit problems \citep{LR85,ACF02,A02,dani08stoch,APS11} and reinforcement learning \citep{kaelbling1996reinforcement,Strehl:2008,jaksch10ucrl,azar2017minimax}. Second (and more importantly), we adapt another classic (but apparently recently less well-known) idea underlying the \RMAXalg algorithm of \citet{brafman2002r} (see also \citealp{szita2010model} and Chapter~8 in \citealp{kakade2003sample}). Roughly speaking, this technique amounts to replacing the standard empirical model estimate with a fixed optimistic estimate in state-action pairs that are very poorly explored. This addresses the notorious problem of empirical estimates in linear MDPs that they tend to have extremely high variance in under-explored states, which can only be offset with very large additive exploration bonuses. Our \RMAXalg-style scheme counteracts these large bonuses by effectively swapping out the possibly over-optimistic estimates obtained via additive bonuses with more reasonably sized estimates. Besides \citet{brafman2002r}, our algorithm design and analysis is also strongly inspired by the recent work of \citet{cassel2024warmupfree} who proposed a slightly limited variant of the same exploration mechanism for finite-horizon MDPs.

\vspace{-2mm}
\section{Preliminaries}
\vspace{-1mm}

In this section, we first provide the general definitions that will repeatedly appear throughout the paper, and then go on to describe a set of ideas that will be heavily featured in our algorithm design and analysis. We finally describe the concrete learning setting in detail at the end of the section.

\vspace{-2mm}
\subsection{Markov decision processes}

A Markov decision process (MDP) with reward function $r$ is defined by the tuple $\cM(r) = \spr{\cX, \cA, r, P, \gamma, \nu_0}$, where $\cX$ is the (possibly infinite) state space, $\cA$ is the finite action space, $r: \cX \times \cA \times \cX \rightarrow \sbr{0, 1}$ is the reward function assigning rewards to each state-action-next-state transition, $P: \cX \times \cA \rightarrow \Delta \spr{\cX}$ is the transition kernel, $\gamma \in \spr{0, 1}$ is the discount factor, and $\nu_0 \in \Delta \spr{\cX}$ is the initial-state distribution. For convenience, we will assume that $\cX$ is countable but note that this can be lifted at the expense of making the measure-theoretic notation much heavier.
The MDP $\cM(r)$ models a sequential decision-making problem between a decision-making \emph{agent} and its \emph{environment}. The interaction starts with the environment drawing the random initial state $X_0\sim\nu_0$, whereafter in each time step $t = 0, 1, 2, \dots$, the following steps are repeated: the agent observes state $X_t \in \cX$, takes action $A_t \in \cA$, and consequently the environment generates the next state $X_{t + 1} \sim P(\cdot | X_t, A_t)$, resulting in reward $R_t = r(X_t, A_t, X_{t + 1})$. With a slight abuse of notation, we denote the mean reward of a state-action pair $\spr{x, a} \in \cX \times \cA$ by $r \spr{x, a} = \EEs{r(x,a,X')}{X'\sim P(\cdot|x,a)}$.

A \emph{stationary state-feedback policy} (or, in short, \emph{policy}) is a randomized behavior rule $\pi: \cX \rightarrow \Delta \spr{\cA}$ that determines the action taken in each time step $t$ as $A_t \sim \pi(\cdot | X_t)$. The \emph{action-value function} of a policy $\pi$ in $\cM$ is defined for any state-action pair $\spr{x, a}$ as
\begin{align*}
    Q_{P, r}^\pi \spr{x, a} &= \bbE_{P, \pi} \sbr{\sum_{\tau=0}^\infty \gamma^\tau r \spr{X_\tau, A_\tau} \middle| (X_0,A_0) = (x,a)}\,,
\end{align*}
where $\bbE_{P, \pi}$ denotes the expectation with respect to the random sequence of states and actions generated by the transition kernel $P$ and the policy $\pi$. The \emph{value function} of $\pi$ at state $x$  is defined as $V_{P, r}^\pi \spr{x} = \bbE_{A \sim \pi \spr{\cdot \middle| x}} \bigl[Q_{P, r}^\pi \spr{x, A}\bigr]$. With some abuse of notation, we define the conditional expectation operator $P: \bbR^{\cX} \rightarrow \bbR^{\cX \times \cA}$ via its action $\bpa{P f} \spr{x, a} = \bbE_{X' \sim P \spr{\cdot \middle| x, a}} \sbr{f \spr{X'}}$ for any function $f \in \bbR^{\cX}$ and state-action pair $\spr{x, a}$. Its adjoint $P\transpose$ is the operator that acts on distributions $\mu \in \Delta \spr{\cX \times \cA}$ as $P\transpose \mu = \bbE_{\spr{X, A} \sim \mu} \sbr{P \spr{\cdot \middle| X, A}}$. With this notation, the value functions can be shown to satisfy the \emph{Bellman equations} written as
\begin{equation*}
    Q_{P, r}^\pi = r + \gamma P V_{P, r}^\pi\,.
\end{equation*}
For convenience, we also introduce the operator $E: \bbR^{\cX} \rightarrow \bbR^{\cX \times \cA}$ defined via $\bpa{E f} \spr{x, a} = f \spr{x}$ and whose adjoint acts on state-action distributions as $\bpa{E\transpose \mu }\spr{x} = \sum_{a \in \cA} \mu \spr{x, a}$. When interacting with an  MDP, any stationary policy $\pi$ induces a unique \emph{state-occupancy measure} denoted as $\nu\spr{\pi} \in \Delta \spr{\cX}$ and a state-action occupancy measure $\mu\spr{\pi} \in \Delta \spr{\cX\times\cA}$ defined (with an unusual but helpful abuse of notation) as
\begin{equation*}
    \nu\spr{\pi,\cdot} = \spr{1 - \gamma} \sum_{\tau=0}^\infty \gamma^\tau \bbP_{P, \pi} \sbr{X_\tau \in \cdot} \quad \mbox{and} \quad \mu\spr{\pi,\cdot} = \spr{1 - \gamma} \sum_{\tau=0}^\infty \gamma^\tau \bbP_{P, \pi} \sbr{(X_\tau,A_\tau) \in \cdot}.
\end{equation*}

\subsection{Optimistically augmented Markov decision processes}\label{sec:OAMDP}

A key concept in our algorithm design is that of \emph{optimistically augmented Markov decision processes} (OA-MDPs), inspired by the construction of \citet{brafman2002r}. The OA-MDP associated with $\cM(r)$ is defined on the augmented state space $\cX^\upplus = \cX \cup \ev{x^\upplus}$, where $x^\upplus$ is an artificial \emph{heaven} state appended to the original set of states. The transition dynamics are defined via a perturbation of the original transition function, governed by the \emph{ascension function} $p^\upplus:\cX^\upplus\times\cA\ra[0,1]$. In particular, the transition kernel from state-action pair $x,a$ to $x'$ is defined as
\begin{align*}
    P^\upplus \!\!\spr{\cdot \middle| x, a} = \bpa{1 - p^\upplus  \!\!\spr{x, a}} P \spr{\cdot \middle| x, a} + p^\upplus \!\!\spr{x, a} \II{x^\upplus\in \cdot}\,.
\end{align*}
In words, the sequence of states in the augmented MDP follows the dynamics of the original process, except that the process \emph{ascends} to heaven with probability $p^\upplus (X_t, A_t)$ in round $t$. The augmented reward function is the same for all triples in the original MDP $x,a,x'$, and ascension to heaven results in maximal reward $r(x,a,x^\upplus) = \RMAX$. The resulting state-action reward function is then
\begin{equation*}
    r^\upplus \spr{x, a} = \EEs{r \spr{x, a, X'}}{X'\sim P^\upplus \!\spr{\cdot \middle| x, a}} = \bpa{1 - p^\upplus \!\!\spr{x, a}} r \spr{x, a} + p^\upplus \!\!\spr{x, a} \RMAX\,.
\end{equation*}
Once the process enters $x^\upplus$, it remains there forever (\ie, $P^\upplus(\ev{x^\upplus}|x^\upplus,a) =1$ for all actions $a$) and obtains maximal reward $\RMAX$ in each round. Without loss of generality (and for notational convenience), we will assume throughout the state $x^\upplus$ also exists in the original MDP $\cM(r)$, but is not reachable either via regular transitions ($P \spr{\ev{x^\upplus} \middle| x, a} = 0$) or initialization ($\nu_0(\ev{x^\upplus})=0$). We will also follow the convention that $p^\upplus(x^\upplus,a)=0$ for all actions $a$. We will refer to the optimistically augmented MDP as $\cM^\upplus(r,p^\upplus)$, and illustrate the relation of the two processes in Figure~\ref{fig:illustration-mdps}.

Our algorithm and its analysis will feature a sequence of ascension functions denoted by $p_k^\upplus$, and the associated transition function will be denoted by $P_k^\upplus$. Within the augmented MDP induced by $p_k^\upplus$, we denote the value functions of a policy $\pi$ in $\cM^\upplus(r,p_k^\upplus)$ as $V_{P_k^\upplus, r^\upplus}^\pi$ and $Q_{P_k^\upplus, r^\upplus}^\pi$. Likewise, we will use $\nu_k^\upplus \spr{\pi}$ and $\mu_k^\upplus \spr{\pi}$ to refer to the occupancy measures of $\pi$ in $\cM^\upplus(r,p_k^\upplus)$. It is easy to see that for any policy $\pi$, the value functions satisfy $V_{P_k^\upplus, r^\upplus}^\pi \ge V_{P, r}^\pi$ and $Q_{P_k^\upplus, r^\upplus}^\pi \ge Q_{P, r}^\pi$, which explains why we call the resulting MDP ``optimistic''. Furthermore, for all non-heaven states $x$, we have $\nu_k^\upplus \spr{\pi,x} \le \nu\spr{\pi,x}$ and $\mu_k^\upplus \spr{\pi,x,a} \le \mu^\upplus \spr{\pi,x,a}$. Our analysis will heavily rely on these facts (which will be proved formally later).

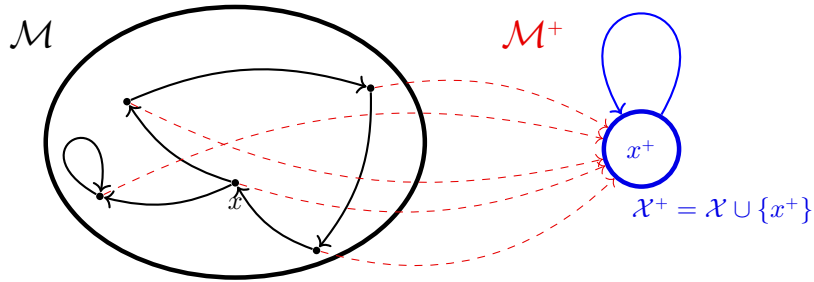
\begin{figure}
    \centering
    \begin{tikzpicture}[scale=1.8, every node/.style={font=\small}]
        \draw[black, thick, line width=.6mm] (0,0) ellipse (1.4cm and 1cm);
        \node[black] at (-1.5,.8) {\Large $\cM$};
        
        \node[circle, fill=black, inner sep=1pt, label=below:{$x$}] (x) at (0,-0.3) {};
        
        \node[circle, fill=black, inner sep=1pt] (p1) at (-0.8,0.3) {};
        \node[circle, fill=black, inner sep=1pt] (p2) at (0.6,-0.8) {};
        \node[circle, fill=black, inner sep=1pt] (p3) at (1,0.4) {};
        \node[circle, fill=black, inner sep=1pt] (p4) at (-1,-0.4) {};
        
        \draw[->, thick] (x) to[bend left=20] (p1);
        \draw[->, thick] (p1) to[bend left=20] (p3);
        \draw[->, thick] (p3) to[bend left=20] (p2);
        \draw[->, thick] (p2) to[bend left=20] (x);
        \draw[->, thick] (x) to[bend left=20] (p4);
        \draw[->, thick] (p4) to[in=80, out=150, looseness=50] (p4);
        \node[draw, circle, thick, blue4, minimum size=1cm, line width=.6mm] (x_plus) at (3,-0.05) {\textcolor{blue4}{$x^\upplus$}};
        \node at (3.6,-0.5) {\textcolor{blue4}{$\cX^\upplus = \cX \cup \scbr{x^\upplus}$}};
        \draw[blue4, thick, ->] (x_plus) edge[loop above, looseness=10, out=60, in=120] (x_plus);

        \draw[red!90!black, dashed, ->] (x) to[bend right=20] (x_plus);
        \draw[red!90!black, dashed, ->] (p1) to[bend right=20] (x_plus);
        \draw[red!90!black, dashed, ->] (p2) to[bend right=30] (x_plus);
        \draw[red!90!black, dashed, ->] (p3) to[bend left=20] (x_plus);
        \draw[red!90!black, dashed, ->] (p4) to[bend left=20] (x_plus);

        \node[red!90!black] at (2.2,.8) {\Large $\cM^\upplus$};
    \end{tikzpicture}
    \caption{Illustration of the MDP $\cM$ in black and its extension in blue. The MDP $\cM^\upplus$ contains the 
additional red dashed edges that allow ascension to heaven.}
    \label{fig:illustration-mdps}
\end{figure}

\subsection{Online learning in linear MDPs}

We consider a variation of the MDP setup described above, incorporating two modifications: \emph{i)} periodic resets of the state evolution to the initial-state distribution $\nu_0$, and \emph{ii)} the ability of the environment to change the reward function adversarially after each reset. This is a natural adaptation\footnote{See Section~\ref{sec:conclusion} for a discussion of the role of resets and related online-learning settings.} of the well-explored setting of online learning in adversarial MDPs to the discounted-reward case we study in this work. More precisely, we consider the following sequential interaction process between the learning agent and its environment. The interaction proceeds through $T$ time steps, organized into $K$ episodes (of random length) as follows. The initial state drawn as $X_0 \sim \nu_0$ and then the following steps are repeated in every consecutive round $t = 0, 1, \dots, T$: 
\begin{itemize}
 \item The agent observes the state $X_t \in \cX$, 
 \item the agent chooses an action $A_t \in \cA$,
 \item the environment generates the next state $X_{t+1}' \sim P \spr{\cdot \middle| X_t, A_t}$,
 \item the environment selects a reward function $r_t$,
 \item the agent receives a reward $r_t \spr{X_t, A_t} \in \sbr{0, 1}$ and observes the function $r_t$,
 \item with probability $\gamma$, the process moves to the next state $X_{t+1} = X_{t+1}'$, otherwise a new episode begins and the process is reset to the initial-state distribution as $X_{t+1}\sim \nu_0$.
\end{itemize}
Without significant loss of generality, we will restrict the environment to update the reward function only at the end of each episode, and use $r_k$ to refer to the reward function within episode $k$. Other than this restriction, the environment is free to choose the rewards in an adaptive (and possibly adversarial) way. The objective for the agent is to select a sequence of policies $\pi_k$ so as to minimize its \emph{pseudo-regret} over $K$ episodes with respect to an arbitrary comparator policy $\pi^\star: \cX^\upplus \rightarrow \Delta \spr{\cA}$, given by \looseness=-1
\begin{equation*}
    \Reg_K = \sumkK \inp{\nu_0, V_{P, r_k}^{\pi^\star} - V_{P, r_k}^{\pi_k}} = \frac{1}{1 - \gamma} \sumkK \inp{\mu \spr{\pi^\star} - \mu \spr{\pi_k}, r_k}\,,
\end{equation*}
where the second equality follows from the definition of occupancy measures. Since the learning agent can only learn about the transition function via interaction with the environment, it needs to address the classic dilemma of exploration versus exploitation. Clearly, this setup generalizes the more standard problem formulation where $r_k = r$ holds for all episodes $k$. In this case, $\frac{\Reg_K}{K}$ corresponds to the expected suboptimality of the average policy played by the agent.

In later sections, we will consider the following structural assumption on the transitions.
\begin{assumption}[Linear MDP]\label{ass:LinMDP}
    A discounted MDP $\cM = \spr{\cX, \cA, r, P, \gamma, \nu_0}$ is a \emph{linear MDP} if there exist a known feature map $\phi: \cX \times \cA \rightarrow \bbR^d$, an unknown map $m: \cX \rightarrow \bbR^d$ and an unknown vector $w \in \bbR^d$ such that for any triplet $\spr{x, a, x'} \in \cX \times \cA \times \cX$,
    \begin{equation*}
        P \spr{x' \middle| x, a} = \inp{\phi \spr{x, a}, m \spr{x'}}, \quad r \spr{x, a} = \inp{\phi \spr{x, a}, w}\,.
    \end{equation*}
    We will also use the operators $\phim: \mathbb{R}^d \rightarrow \real^{\X\times\A }$ such that $\pa{\Phi\theta}\pa{x,a} = \iprod{\theta}{\varphi(x,a)}$ holds for any $\theta\in\real^d$ and $M: \real^\X \rightarrow \mathbb{R}^d$ such that $\pa{Mf}_i = \sum_{x' \in \cX} f(x') m_i(x')$. Thus, we can write the transition operator and the reward function as $P=\phim M$ and $r = \phim w$. Moreover, we assume that $\norm{w}\leq \WMAX$ and that for all $x,a\in\X\times\A$, the features have bounded norm, \ie $\norm{\phi(x,a)}\leq B$.
\end{assumption}

\paragraph{Further notation.} We will use $\piunif$ to denote both the uniform probability distribution over $\cA$ and the policy that plays uniformly at random at every state. $\Delta (A)$ denotes the simplex over a discrete set $A$. Given two distributions $p,q \in \Delta(\cZ)$ on the countable set $\cZ$, we denote the Kullback--Leibler divergence as $\DDKL{p}{q} = \sum_{z\in\cZ} \log \brr{\frac{p(z)}{q(z)}} p(z)$ and we use the convention that $\DDKL{p}{q}= +\infty$ whenever there exists an element $z\in \cZ$ such that $ q(z) = 0$ and $p(z) > 0$. For a distribution $p\in\Delta(\cZ)$ and a function $f\in\real^{\cZ}$, we will use the notation $\iprod{p}{f} = \EEs{f(Z)}{Z\sim p}$.

\section{Algorithm}
\label{sec:algo}

Our algorithm implements the principle of \emph{optimism in the face of uncertainty} (OFU), by combining two classic ideas for optimistic exploration in reinforcement learning. We refer to these two separate mechanisms as two \emph{degrees of optimism}, with \emph{first-degree optimism} defined using the idea of exploration bonuses added to the rewards, and \emph{second-degree optimism} leveraging the notion of optimistically augmented MDPs defined in Section~\ref{sec:OAMDP}. These two incentives for exploration are respectively inspired by the upper-confidence-bound (UCB) methods popularized by \cite{azar2017minimax}, and the classic \RMAXalg algorithm of \citet{brafman2002r}. These two mechanisms are combined with the regularized approximate dynamic programming method of \citet{MN23}, called ``regularized approximate value iteration with upper confidence bounds'' (\raviUCB).

\subsection{Overview}

We begin by describing each element of our solution in general terms, and provide the pseudocode of the resulting algorithm (specifically tailored to linear MDPs) as Algorithm~\ref{alg:linear-rmax-ravi-ucb}. In recognition of the influence of the two algorithms mentioned above, we refer to our method as \algname.

\paragraph{Regularized dynamic programming.} If the transition kernel $P$ were known, the learner could achieve low regret by deploying the following regularized value iteration (RVI) scheme:
\begin{align} \label{eq:DPupdate}
    Q_{k+1} = r_k + \gamma P V_k, ~~~ V_{k+1}(x) = \max_{u\in\Delta_{\A}} \ev{\innerprod{u}{Q_{k+1}(x,\cdot)} - \frac{1}{\eta} \DDKL{u }{\pi_k(\cdot|x)}},
\end{align}
and update its policies as $\pi_{k+1}(x|a) \propto \pi_{k}(x|a)e^{\eta Q_{k+1} (x, a)}$ for some positive learning-rate parameter $\eta > 0$. As observed by \citet{MN23}, the regularization in the policy updates is helpful for controlling the difference between consecutive policies and occupancy measures, thus addressing a major challenge that one faces when analyzing approximate DP methods in infinite-horizon MDPs. Unfortunately, $P$ is unknown and needs to be estimated. The estimation error introduced in this process is taken care of by the two degrees of optimistic adjustments we describe next.

\paragraph{First-degree optimism.} Our method will make use of a (possibly implicitly defined) sequence of estimates of the transition operator $\wh{P}_k: \real^{\cX}\ra \real^{\cX\times\cA}$, and an associated sequence of \emph{exploration bonuses} $\CB_k: \cX\times\cA \ra \real$. We say that the sequence of bonuses is \emph{valid} if it satisfies $\abs{\pa{\bpa{\wh{P}_k - P} V_k}(x,a)} \le \CB_k(x,a)$ holds for all value-function estimates $V_k$ calculated by the algorithm, simultaneously for all $x,a$ and $k$. Using this property, a key idea in our algorithm is to use $\wh{P}_k V_k + \CB_k$ as an upper confidence bound on $P V_k$, therefore providing an optimistic estimate of the ideal value-function updates~\eqref{eq:DPupdate}.

\paragraph{Second-degree optimism.} Unfortunately, relying only on first-degree optimism as defined above may result in value estimates that grow without bounds, thus leading to unstable policy updates. In order to prevent this unbounded growth, we employ the idea of optimistically augmented MDPs (defined in Section~\ref{sec:OAMDP}), with the ascension function defined as $p_k^\upplus(x,a) = \sigma \spr{\alpha \CB_k \spr{x, a} - \omega}$, where $\sigma: z\mapsto \frac{e^{z}}{1+e^{z}}$ is the sigmoid function and $\alpha > 0$ and $\omega > 0$ are positive hyperparameters. Technically, this is implemented by defining our action-value updates for each $x,a$ as
\begin{equation*}
    Q_{k+1} \spr{x, a} = \spr{1 - p_k^\upplus \!\!\spr{x, a}} \spr{r_k \spr{x, a} + \CB_k \spr{x, a} + \gamma \wh{ P}_k V_k \spr{x, a}} + p_k^\upplus \!\!\spr{x, a} \frac{\RMAX}{1 - \gamma}\,.
\end{equation*}
This adjustment makes sure that the value estimates remain bounded, thanks to the multiplicative effect of the ascension function that effectively trades off the possibly huge values of $\CB_k + \gamma\wh{P}_k V_k$ by the constant upper bound $\RMAX/(1-\gamma)$ in highly uncertain state-action pairs. Specifically, the effective bonus $\spr{1 - p_k^\upplus} \odot \CB_k$ is bounded \emph{independently} of the magnitude of the value estimates $V_k$, which is crucial for preventing the exponential growth of their magnitude with $k$. Additionally, supposing that $\CB_k$ is a valid sequence of exploration bonuses, one can verify that the inequality $Q_{k+1}  \ge r_k^\upplus + \gamma P_k^\upplus V_k$ holds elementwise---that is, the estimates $Q_k$ provide upper bounds on the ideal action-value updates~\eqref{eq:DPupdate} defined in the optimistically augmented MDP.

\subsection{Technical details}

To complete the outline given above, we specify the missing technical details specific to linear MDPs, provide a pseudocode sketch in \Cref{alg:linear-rmax-ravi-ucb}, and a detailed algorithm in Appendix~\ref{app:pseudocodes}.

\paragraph{Model estimation and bonus design.} For estimating the transition model and defining the exploration bonuses, we follow the classic approach of \citet{jin2019provably} (see also \citealp{Neu:2020}). Specifically, we define the least-squares model estimate $\wh{P}_k = \Phi \wh{M}_k$ as the operator that maps a function $v\in\real^\cX$ to $\phim \wh{M_k v}\in \real^{\cX\times\cA}$. The vector $\wh{M_k v}\in\real^d$ is the solution to the least-squares regression problem with features $\bc{\phi(X_t,A_t)}^{T_k - 1}_{t=1}$ and targets $\bc{v(X_t)}^{T_k}_{t=2}$, where $T_k$ denotes the beginning of episode $k$. The problem (with target $v = V_k$) admits the closed form solution given in line~\ref{line-alg:ridge} of \Cref{alg:linear-rmax-ravi-ucb}. The matrix $\Lambda_{T_k}= \sum^{T_k}_{t=1} \phi(X_t,A_t)\phi(X_t,A_t)\transpose + I$ used in the computation of the least-squares model estimate is called the \emph{empirical covariance matrix}. Finally, we define the \emph{exploration bonuses} for each $(x,a)\in\cX\times\cA$ as $\CB_k(x,a) = \beta \norm{\phi(x,a)}_{\Lambda^{-1}_{t_e}} = \beta \sqrt{\biprod{\phi(x,a)}{\Lambda^{-1}_{t_e}\phi(x,a)}}$, where $\beta>0$ will be chosen during the analysis to be large enough to guarantee bonus validity, and $t_e$ is the time step marking the beginning of the $e$th epoch (see below). The state $x^\upplus$ is given special treatment: for all actions $a$, we fix $\CB_k(x^\upplus,a) = 0$, $p^\upplus_k(x^\upplus,a) = 0$, and $Q_k(x^\upplus,a) = V_k(x^\upplus) =\frac{\RMAX}{1-\gamma}$.

\paragraph{Bookkeeping.} In order to turn the above ideas into a tractable algorithm amenable to theoretical analysis, a few additional bookkeeping steps are necessary. The most important of these is the introduction of an \emph{epoch schedule}, which is instrumental in keeping the complexity of the exploration bonuses and the policies low. Using a classic trick from \citet{APS11}, a new epoch is started every time that there is a significant reduction in the uncertainty of the model estimates (as measured by the determinant of the empirical covariance matrix of the features in the case of linear MDPs---see line~\ref{line-alg:epoch} in Algorithm~\ref{alg:linear-rmax-ravi-ucb}). When a epoch change is triggered, the exploration bonuses are recomputed and the policy is reset to a uniform policy.

\begin{algorithm}[!h]
    \caption{\algname for Linear MDPs.}
      \begin{algorithmic}[1]
      \label{alg:linear-rmax-ravi-ucb}
          \STATE {\bfseries Inputs:} Number of resets $K$, learning rate $\eta > 0$, exploration coefficient $\beta > 0$, threshold $\omega > 0$, slope sigmoid $\alpha > 0$, $\pi_1 = \piunif$, $Q_1 = 0$, $\cD_1 = \emptyset$, $\Lambda_1 = I$, $t = 1$, $e = 0$.
          \FOR{$k = 1, \dots, K$}
          \STATE \algcomment{interact with the environment}
          \STATE The adversary adaptively chooses $r_k$.
          \STATE Rollout $\pi_k$ to collect a new trajectory, $\cD_{T_k} = \cD_{T_{k-1}} \cup \spr{X_t, A_t, X_{t+1}}^{T_k-1}_{t=T_{k-1}+1}$.
          \STATE Update the covariance matrix $\Lambda_{T_k} = \Lambda_{T_{k-1}} + \sum^{T_k-1}_{t=T_{k-1}+1}\phi \spr{X_t, A_t} \phi \spr{X_t, A_t}\transpose$.
          \STATE \algcomment{initialize new epoch}
          \IF{$t = T_1$ \OR $\det \Lambda_{T_k} \geq 2 \det \Lambda_{t_e}$} \label{line-alg:epoch}
          \STATE Set $e = e + 1$, $k_e = k$ and $t_e = t$. Reset the policy $\pi_k = \piunif$.
          \ENDIF
          \STATE For any $\spr{x, a} \in \cX \times \cA$, $\CB_k \spr{x, a} = \beta \norm{\phi \spr{x, a}}_{\Lambda_{t_e}^{-1}}\mathds{1}\bc{x \neq x^\upplus}$. \label{line-alg:bonuses}
          \STATE For any $\spr{x, a} \in \cX \times \cA$, $p_k^\upplus \spr{x, a} = \sigma \spr{\alpha \CB_k \spr{x, a} - \omega}\mathds{1}\bc{x \neq x^\upplus}$. \label{line-alg:pplus}
          \STATE \algcomment{optimistic regularized value iteration}
          \STATE $r_k^\upplus = \spr{1 - p_k^\upplus} \odot r_k + p_k^\upplus \cdot \RMAX$.
          \STATE $\wh{M V_k} = \Lambda_{T_k}^{-1} \sum_{\spr{x, a, x'} \in \cD_{T_k}} \phi \spr{x, a} V_k \spr{x'}$. \label{line-alg:ridge}
          \STATE $\wh{P_k^\upplus V_k} = \spr{1 - p_k^\upplus} \odot \phim \wh{M V_k} + p_k^\upplus \cdot V_k \spr{x^\upplus}$,\;\;and\; $\wh{P_k^\upplus V_k} \spr{x^\upplus, \cdot} = \frac{\RMAX}{1 - \gamma}$. \label{line-alg:transf}
          \STATE $Q_{k + 1} = r_k^\upplus + \spr{1 - p_k^\upplus} \odot \CB_k + \gamma \wh{P_k^\upplus V_k}$.
          \STATE $V_{k+1} \spr{x} = \frac1\eta \log \spr{\sum_a \pi_{k} \spr{a \given x} e^{\eta Q_{k+1} \spr{x, a}}}$.
          \STATE $\pi_{k+1} = \pi_{k} \odot e^{\eta \spr{Q_{k+1} - E V_{k+1}}}$.
          \ENDFOR
          \STATE {\bfseries Output:} $\pi_I$, with $I \sim \cU \spr{\sbr{K}}$.
      \end{algorithmic}
    \end{algorithm}

\subsection{Discussion}

Before moving to the the analysis, we highlight some important features of our method.

\paragraph{Computational and storage complexity.} Due to its design outlined above (and detailed in \Cref{alg:linear-rmax-ravi-ucb}), \algname produces a sequence of policies that are simply parameterized by a $d$-dimensional vector and a $d^2$-dimensional covariance matrix. Specifically, this is made possible by keeping the bonus function fixed within each epoch and resetting the policy to uniform at the beginning of each new epoch. Therefore, storing the policies in memory and drawing actions in each new state can be both done efficiently. This should be contrasted with most other known algorithms making use of softmax policies, which crucially rely on clipped value-function estimates that cannot be stored or sampled from efficiently (as they require storing the entire history of parameter vectors and exploration bonuses). Examples of such methods include \cite{Cai:2020,zhong2024theoretical,sherman2023,MN23}. This not only makes the implementation of these algorithms impractical, but also results in suboptimal regret guarantees due to the excessive complexity of the policy and value-function classes. This major improvement is made possible in our algorithm by the incorporation of second-degree optimism (inspired by both \citealp{brafman2002r} and \citealp{cassel2024warmupfree}), which obviates the need for explicit clipping of the value estimates and keeps these bounded via alternative means. All other elements in our algorithm (such as estimating the value estimates via least-squares regression) are standard, and match the complexity of other efficient methods for online learning in linear MDPs \cite{jin2019provably,wang2021provably,he2023nearly}.

\paragraph{Relation with existing algorithms.} Being a combination of \RMAXalg and \raviUCB, our algorithm can recover these two extremes and several other known methods by an appropriate choice of hyperparameters. Setting $\CB_k = 0$, we recover algorithms which leverage only \emph{second-degree} optimism. For example, in the particular case of $\alpha = \infty$ and tabular features, we recover \RMAXalg up to some very minor changes \citep{brafman2002r,szita2010model}. On the other hand, using the ascension function $p^\upplus_k(x,a) = \mathds{1}\bc{r_k \spr{x, a} + \CB_k \spr{x, a} + \gamma \wh{ P}_k V_k \spr{x, a} \ge \frac{\RMAX}{1-\gamma}}$ essentially recovers the standard truncation rule applied by most related methods (under the condition that the exploration bonuses be valid). In particular, under this choice of $p^\upplus_k$, our algorithm reduces to \raviUCB. Setting the regularization parameter $\eta=\infty$ in the resulting method recovers optimistic value iteration methods such as \cite{azar2017minimax,jin2019provably}.

\section{Main Result and Analysis}
\label{sec:analysis}

The following theorem states our main result about the performance of \algname.
\begin{theorem} \label{thm:main}
    Suppose that Assumption~\ref{ass:LinMDP} holds, and that Algorithm~\ref{alg:linear-rmax-ravi-ucb} is executed with parameters specified in Appendix~\ref{app:putting-together-main} for a fixed number $K$ of episodes. Then, with probability at least $1 - \delta$,
    \begin{equation*}
        \regretK = \tilde\cO \spr{\sqrt{d^3 H^3 K} + \sqrt{d H^{9/2} K \log \spr{\abs{\cA}}}}\,.
    \end{equation*}
\end{theorem}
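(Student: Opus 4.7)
The plan is to decompose the regret by passing through the algorithm's sequence of optimistic iterates $V_k$, using second-degree optimism to have $V_k$ dominate the comparator's value and first-degree optimism (exploration bonuses) to control the gap with $\pi_k$'s actual performance. Concretely, I write
\[
(1-\gamma)\Reg_K = \sum_{k=1}^{K} \inp{\nu_0}{V^{\pi^\star}_{P, r_k} - V_k} + \sum_{k=1}^{K} \inp{\nu_0}{V_k - V^{\pi_k}_{P, r_k}}\,,
\]
and treat the two sums separately. The whole argument is conditioned on the high-probability event that the bonus sequence $\CB_k$ is \emph{valid} in the sense of Section~\ref{sec:algo}; this follows from a self-normalized concentration bound in the style of \citet{jin2019provably} and \citet{Neu:2020} applied to the ridge estimator $\wh{M V_k}$, combined with an $\epsilon$-net over the value-function class (whose complexity is kept low by the epoch schedule and by the uniform bound $\|V_k\|_\infty \leq \RMAX/(1-\gamma) = H$ discussed below). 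The choice $\beta = \tilde{\cO}(H\sqrt{d})$ dictated by validity will drive the leading term.

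For the first sum (``optimism plus online learning''), I use that $V^{\pi^\star}_{P, r_k} \leq V^{\pi^\star}_{P_k^\upplus, r_k^\upplus}$ pointwise by construction of the OA-MDP, so it suffices to control $V^{\pi^\star}_{P_k^\upplus, r_k^\upplus} - V_k$. Combining the Bellman equation $Q^{\pi^\star}_{P_k^\upplus, r_k^\upplus} = r_k^\upplus + \gamma P_k^\upplus V^{\pi^\star}_{P_k^\upplus, r_k^\upplus}$ with the algorithm's update, which under validity satisfies $Q_{k+1} \geq r_k^\upplus + (1-p_k^\upplus)\CB_k + \gamma P_k^\upplus V_k$ pointwise, and unrolling telescopically along trajectories of $\pi^\star$ in the OA-MDP, the difference reduces to a sum over $k$ of instantaneous mirror-descent gaps of the form $\inp{\pi^\star(\cdot|x) - \pi_{k+1}(\cdot|x)}{Q_{k+1}(x,\cdot)}$, integrated against the occupancy $\nu_k^\upplus(\pi^\star)$. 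Log-sum-exp smoothness of the softmax update $V_{k+1}(x) = \frac{1}{\eta}\log \sum_a \pi_k(a|x) e^{\eta Q_{k+1}(x,a)}$ yields the standard exponential-weights regret $\tilde{\cO}\bpa{\frac{\log|\cA|}{\eta(1-\gamma)} + \eta H^2 K}$; using $\|Q_k\|_\infty \leq H$ (a consequence of second-degree optimism) and optimizing $\eta$ produces the $\sqrt{K H^{9/2} \log|\cA|}$ term.

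For the second sum (``estimation error''), I apply the Bellman-residual/simulation lemma decomposition: unrolling $V_k - V^{\pi_k}_{P, r_k}$ along trajectories of $\pi_k$ in the \emph{original} MDP yields, up to a factor $(1-\gamma)^{-1}$, an expectation under $\mu(\pi_k)$ of the residual $p_k^\upplus (H - r_k) + (1-p_k^\upplus)\CB_k + \gamma(1-p_k^\upplus)\bpa{(\wh{P}_k - P) V_k}$. The last summand is dominated by $(1-p_k^\upplus)\CB_k$ on the validity event. The bonus sum $\sum_k \inp{\mu(\pi_k)}{\CB_k}$, after the standard change of measure to empirical visitation plus a Cauchy–Schwarz step, is controlled by the elliptical potential lemma with a $\log$-factor absorbed into the epoch schedule, giving $\tilde{\cO}(\beta\sqrt{dT}) = \tilde{\cO}(\sqrt{d^3 H^3 K})$. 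The ``ascension cost'' $H \sum_k \inp{\mu(\pi_k)}{p_k^\upplus}$ is handled via $\sigma(z) \leq \min(1, e^z)$, so that $p_k^\upplus \leq \min(1, e^{\alpha \CB_k - \omega})$: choosing $\omega$ slightly above the typical bonus on explored features and $\alpha$ proportional to $H/\omega$ lets this term be absorbed into a constant multiple of the bonus sum, up to lower-order additives.

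The hardest part, I expect, is the coupled design of $\alpha$, $\omega$, and $\beta$ required to make second-degree optimism simultaneously (i) \emph{bound} the iterates by $V_k \leq H$ inductively---which is needed both for validity concentration to go through at $\beta = \tilde{\cO}(H\sqrt{d})$ and for the $\|Q_k\|_\infty$ factor in the mirror-descent regret---and (ii) make the ascension cost $\sum_k \inp{\mu(\pi_k)}{p_k^\upplus}$ of \emph{lower order} than the bonus sum. The induction for (i) exploits the mixture form $r_k^\upplus + (1-p_k^\upplus)\CB_k + \gamma \wh{P_k^\upplus V_k} = (1-p_k^\upplus)(r_k + \CB_k + \gamma \wh{P}_k V_k) + p_k^\upplus H$, where the sigmoid ascension probability pulls the average toward $H$ exactly when the first-degree bracket threatens to exceed $H$; getting this calibration quantitative enough to also serve bound (ii) is the technical crux. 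A secondary but important subtlety is the consistent treatment of the heaven state $x^\upplus$ in the comparator policy's state space and in all occupancy calculations, which is cleanly handled by the algorithmic conventions on $V_k(x^\upplus)$, $p_k^\upplus(x^\upplus, \cdot)$, and $\CB_k(x^\upplus, \cdot)$ fixed in Section~\ref{sec:algo}.
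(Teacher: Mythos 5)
Your overall architecture is essentially the paper's: pass to the optimistically augmented MDPs and use their monotonicity to handle the comparator, run a mirror-descent regret analysis for the policy player, control the bonus sums with elliptical potentials, calibrate the sigmoid ascension function against the bonuses, and establish $\QMAX$ by induction and bonus validity by a covering argument. The paper phrases the top-level decomposition in occupancy measures (reward bias, model bias for $\pi^\star$ and for $\pi_k$, plus the augmented-MDP regret $\regretKplus$) rather than inserting the iterates $V_k$ at $\nu_0$, but the two are equivalent via the flow constraints. There are, however, two concrete gaps in your sketch.

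First, treating the on-policy sum $\sum_k \inp{\nu_0, V_k - V^{\pi_k}_{P,r_k}}$ as a pure Bellman-residual computation is incomplete. The residual you list implicitly assumes $V_k$ is the $\pi_k$-average of $Q_{k+1}$, but $V_k$ is built from $Q_k$ and $\pi_{k-1}$ while $Q_{k+1}$ is built from $V_k$. Closing this loop requires the identity $\inp{\mu_k^\upplus \spr{\pi_{k+1}}, Q_{k+1}} = \inp{\nu_k^\upplus \spr{\pi_{k+1}}, V_{k+1} + \frac1\eta \KL \spr{\pi_{k+1} \| \pi_k}}$ followed by a change of measure from $\mu_k^\upplus\spr{\pi_{k+1}}$ to $\mu_k^\upplus\spr{\pi_k}$, which costs $\QMAX \norm{\mu_k^\upplus\spr{\pi_{k+1}} - \mu_k^\upplus\spr{\pi_k}}_1 = \cO\spr{\eta \QMAX^2 / \sqrt{1-\gamma}}$ per episode (Pinsker plus the KL contraction of occupancy measures). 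This drift term is the entire reason the KL regularization is present, and it is where the extra $\sqrt{H}$ comes from: your stated rate $\frac{\log\abs{\cA}}{\eta(1-\gamma)} + \eta H^2 K$ optimizes to $\sqrt{H^3 K \log\abs{\cA}}$, whereas the paper's $\eta \QMAX^2 K/\sqrt{1-\gamma} = \tilde\cO\spr{\eta H^{5/2} K}$ term is what yields $\sqrt{d H^{9/2} K \log\abs{\cA}}$ after multiplying back by $H$. As written, your accounting does not reach the term you claim.

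Second, $\beta = \tilde\cO\spr{H\sqrt{d}}$ is not attainable. Since $V_k$ depends on the data, validity requires uniform concentration over a value class parameterized by a $d$-vector and a $d\times d$ covariance matrix, whose log-covering number is $\tilde\cO\spr{d^2}$; this forces $\beta = \tilde\cO\spr{\QMAX d} = \tilde\cO\spr{Hd}$, which is precisely what makes the leading term $\sqrt{d^3 H^3 K}$ rather than $\sqrt{d^2 H^3 K}$. Relatedly, your plan to absorb the ascension cost $H\sum_k\inp{\mu\spr{\pi_k}, p_k^\upplus}$ into a constant multiple of the linear bonus sum via $p_k^\upplus \leq \min\spr{1, e^{\alpha\CB_k - \omega}}$ can be made to work at the target order, but it is delicate; the paper instead uses $\sigma\spr{\alpha\CB_k - \omega} \leq 2\alpha^2\CB_k^2 + 2e^{-\omega}$ so that this cost is charged to $\sum_k\inp{\mu\spr{\pi_k}, \CB_k^2} = \tilde\cO\spr{d}$ (dimension-only, by the elliptical potential) plus a $K e^{-\omega}$ remainder killed by $\omega = \log K$, making the ascension cost genuinely lower order. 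Your proposed coupling $\alpha \propto H/\omega$ would also need to be reconciled with the requirement $\omega/\alpha = \cO(1)$ coming from $\QMAX = \frac{\RMAX + 2\omega/\alpha}{1-\gamma}$; the paper simply sets $\alpha = 2\omega = 2\log K$.
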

For the commonly studied case where the reward function $r$ is fixed, this result can be easily translated to a bound on the sample complexity of producing an $\varepsilon$-optimal policy as well, as stated below.
\begin{corollary} \label{cor:sample}
    Let $I$ be drawn uniformly from $\ev{1,2,\dots,K}$. Then, policy $\pi_I$ produced by \Cref{alg:linear-rmax-ravi-ucb} (with the same parameter tuning as in Theorem~\ref{thm:main}) satisfies $\EEs{V^{\pi^*}_{P,r} - V^{\pi_I}_{P,r}}{I} \le \varepsilon$, if the number of episodes satisfies
    \begin{equation*}
        K = \Omega\pa{\frac{H^3 d^3 + H^{9/2} d \log \abs{\cA}}{\varepsilon^2}}\,.
    \end{equation*}
\end{corollary}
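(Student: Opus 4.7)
The plan is to reduce the sample-complexity claim to Theorem~\ref{thm:main} via a standard online-to-batch conversion, using that the reward is now fixed across episodes. Writing $\pi^*$ for the optimal policy and specializing the regret definition to $r_k = r$ for every $k$, I have
\begin{equation*}
    \regretK = \sum_{k=1}^K \inp{\nu_0, V^{\pi^*}_{P,r} - V^{\pi_k}_{P,r}}.
\end{equation*}
Because $I$ is drawn uniformly from $\{1,\dots,K\}$, linearity of expectation yields
\begin{equation*}
    \EEs{\inp{\nu_0, V^{\pi^*}_{P,r} - V^{\pi_I}_{P,r}}}{I} \;=\; \frac{1}{K}\sum_{k=1}^K \inp{\nu_0, V^{\pi^*}_{P,r} - V^{\pi_k}_{P,r}} \;=\; \frac{\regretK}{K}.
\end{equation*}

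Next, I would invoke Theorem~\ref{thm:main}, which gives with probability at least $1-\delta$ that $\regretK = \tilde\cO\bigl(\sqrt{d^3 H^3 K} + \sqrt{d H^{9/2} K \log|\cA|}\bigr)$. Dividing by $K$ produces
\begin{equation*}
    \frac{\regretK}{K} \;=\; \tilde\cO\!\left(\sqrt{\frac{d^3 H^3 + d H^{9/2}\log|\cA|}{K}}\right),
\end{equation*}
and requiring the right-hand side to be at most $\varepsilon$ and solving for $K$ immediately delivers the claimed bound $K = \Omega\bigl((H^3 d^3 + H^{9/2} d \log|\cA|)/\varepsilon^2\bigr)$. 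The usual convention is to absorb the failure probability $\delta$ and the associated logarithmic factors into the $\tilde\cO$ notation; alternatively one may state the conclusion in expectation over both $I$ and the algorithm's internal randomness by a standard clipping of the $\delta$-failure event (since the per-episode suboptimality is bounded by $H\RMAX$).

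The only point requiring a small amount of care is the interpretation of the inequality $\EEs{V^{\pi^*}_{P,r} - V^{\pi_I}_{P,r}}{I} \le \varepsilon$ in the corollary: I would read it as the scalar statement obtained after inner-producting against $\nu_0$, which is the natural notion consistent with the pseudo-regret defined in Section~2.3. I expect no genuine obstacle in this proof, as the argument is a textbook online-to-batch reduction; the non-trivial work has already been performed in establishing Theorem~\ref{thm:main}.
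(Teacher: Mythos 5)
Your proposal is correct and is exactly the argument the paper intends: the corollary is stated as an immediate consequence of Theorem~\ref{thm:main} via the standard online-to-batch conversion (average regret equals expected suboptimality of a uniformly sampled policy, then solve $\regretK/K \le \varepsilon$ for $K$), and the paper does not even write out a separate proof. Your remarks on reading the value-function inequality against $\nu_0$ and on absorbing the $\delta$-failure event using the $H\RMAX$ bound on per-episode suboptimality are the right way to handle the only minor technicalities.
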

As the length of each episode is geometrically distributed with expectation $H$, the number of interaction steps satisfies $\EE{T} = HK$ when the number of episodes $K$ is fixed. Taking this into account, both results can be restated in terms of the number of sample transitions $T$. Likewise, similar results can be proved when treating the sample size $T$ as fixed and letting $K$ be the smallest (random) number of episodes covering the sample budget.
The sample complexity bound in Corollary~\ref{cor:sample} is rate-optimal in the sense that the dependence on $\varepsilon$ cannot be improved in virtue of the $\Omega(H^3 X A \epsilon^{-2})$ lower bound in \cite{azar2012sample}. Notice their lower bound is proven for the setting of generative model access to the environment which is strictly easier than the trajectory access model we study.

In the remaining part of this section, we describe the main steps constituting the proof of Theorem~\ref{thm:main}. The analysis will make crucial use of the notion of optimistically augmented MDPs defined in Section~\ref{sec:OAMDP}. Specifically, we define an augmented MDP for each episode $k$ as $\cM^{\upplus}_k = \cM^+(r_k,p_k^\upplus)$ and use the shorthand $\cM_k = \cM(r_k)$ for the true MDP with reward function $r_k$. Letting $\mu_k^\upplus(\pi)$ denote the occupancy measure induced by policy $\pi$ in $\cM_k^\upplus$, the first step in our analysis is to rewrite the regret as follows:
\begin{align*}
    \sumkK \inp{\mu \spr{\pi^\star} - \mu \spr{\pi_k}, r_k} &= \sumkK \underbrace{\inp{\mu \spr{\pi^\star} - \mu \spr{\pi_k}, r_k - r_k^\upplus}}_{= \rewardbias_k} + \sumkK \underbrace{\inp{\mu \spr{\pi^\star} - \mu_k^\upplus \spr{\pi^\star}, r_k^\upplus}}_{= - \modelbias_k \spr{\pi^\star}} \\
    &\phantom{=}+ \underbrace{\sumkK \inp{\mu_k^\upplus \spr{\pi^\star} - \mu_k^\upplus \spr{\pi_k}, r_k^\upplus}}_{= \regretKplus} + \sumtT \underbrace{\inp{\mu_k^\upplus \spr{\pi_k} - \mu \spr{\pi_k}, r_k^\upplus}}_{= \modelbias_k \spr{\pi_t}}\,.
\end{align*}
Here, $\regretKplus$ corresponds to the (normalized) regret of \raviUCB in the sequence of OA-MDPs $\pa{\cM_k^\upplus}$, and the other terms account for the difference between this term and the regret in the original problem. Some of these are easy to handle by exploiting the optimistic nature of our augmentation technique, whereas others require a more careful tuning of the ascension functions and exploration bonuses.

A large part of the analysis will be based on the condition that the exploration bonuses $\scbr{\CB_k}_k$ used for performing optimistic value iteration are \emph{valid} estimates of the uncertainty we have on the model, in the sense that the following event holds
\begin{equation} \label{eq:validity-bonuses}
    \cE_{\text{valid}} = \scbr{\forall \spr{x, a} \in \cX \times \cA, \forall k \in \sbr{K}, \abs{\spr{\opP - \ophPk} V_k \spr{x, a}} \leq \CB_k \spr{x, a}}\,.
\end{equation}
For the sake of clearly presenting the main ideas of our analysis, we will work under the condition that the event $\cE_{\text{valid}}$ holds, and we will show later in the context of linear MDPs that this is indeed true with high probability. Furthermore, we will work under the condition that for any $k \in \sbr{K}$, there exists $\QMAX > 0$ such that $\norm{Q_k}_\infty \leq \QMAX$. We will show this is true under an appropriate choice of $p_k^\upplus$, and give an expression for $\QMAX$. Finally, we assume the episode lengths $\spr{L_k}$ are all less than some $\LMAX$ and refer to this event as $\cE_L$. We will show that this also holds with high probability.

\subsection{Controlling the bias due to the augmented rewards}

The first lemma controls the bias introduced by the second-degree optimism introduced into the reward function in terms of the ascension functions.
\begin{restatable}{Lem}{rewardbiasbound} \label{lem:reward-bias-bound}
    For any choice of $p_k^\upplus$, we have $\rewardbias_k \leq \RMAX \inp{\mu \spr{\pi_k}, p_k^\upplus}$.
\end{restatable}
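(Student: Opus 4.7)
} The plan is to first rewrite $r_k - r_k^\upplus$ in a way that exposes the ascension function $p_k^\upplus$ as a multiplicative factor, and then exploit the sign of each of the two occupancy-measure terms appearing in $\rewardbias_k$.

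First I would substitute the definition of the augmented reward
\begin{equation*}
    r_k^\upplus(x,a) = \bpa{1 - p_k^\upplus(x,a)} r_k(x,a) + p_k^\upplus(x,a)\, \RMAX
\end{equation*}
into the difference $r_k - r_k^\upplus$. A direct simplification yields
\begin{equation*}
    r_k(x,a) - r_k^\upplus(x,a) = p_k^\upplus(x,a)\, \bpa{r_k(x,a) - \RMAX}\,,
\end{equation*}
which is nonpositive pointwise because $r_k(x,a)\in[0,1]\subseteq[0,\RMAX]$ (with $\RMAX\ge 1$).

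Next I would split the reward-bias term into its two occupancy-measure pieces,
\begin{equation*}
    \rewardbias_k = \inp{\mu(\pi^\star), r_k - r_k^\upplus} \;-\; \inp{\mu(\pi_k), r_k - r_k^\upplus}\,.
\end{equation*}
The first piece is nonpositive since $\mu(\pi^\star)\ge 0$ and we have already shown $r_k - r_k^\upplus \le 0$ pointwise; dropping it gives a valid upper bound. For the second piece, using the pointwise identity above gives
\begin{equation*}
    -\inp{\mu(\pi_k), r_k - r_k^\upplus} = \sum_{x,a} \mu(\pi_k,x,a)\, p_k^\upplus(x,a)\, \bpa{\RMAX - r_k(x,a)} \le \RMAX \sum_{x,a} \mu(\pi_k,x,a)\, p_k^\upplus(x,a)\,,
\end{equation*}
where the inequality uses $\RMAX - r_k(x,a) \le \RMAX$ (because $r_k \ge 0$). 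Recognizing the right-hand side as $\RMAX\inp{\mu(\pi_k), p_k^\upplus}$ completes the proof.

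There is no real obstacle here: the statement is essentially a bookkeeping observation about the sign structure of the reward perturbation, and the key trick is just to drop the nonpositive $\pi^\star$-term rather than trying to control it in terms of $p_k^\upplus$. The only thing to be mindful of is the convention on $\RMAX$ (namely $\RMAX \ge \sup_{x,a} r_k(x,a)$), which is built into the construction of the OA-MDP in Section~\ref{sec:OAMDP}.
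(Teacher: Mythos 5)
Your proof is correct and follows essentially the same route as the paper's: substitute the definition of $r_k^\upplus$ to obtain $r_k - r_k^\upplus = p_k^\upplus \odot (r_k - \RMAX\bfone) \preceq 0$, drop the nonpositive $\pi^\star$-term, and bound the remaining term using $r_k \succeq 0$. The only cosmetic difference is that the paper explicitly notes $r_k(x^\upplus,a)=r_k^\upplus(x^\upplus,a)$ at the heaven state, which your argument covers implicitly via the convention $p_k^\upplus(x^\upplus,\cdot)=0$.
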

The proof essentially follows from the definition of $r_k^\upplus$, and is found in Appendix~\ref{app:reward-bias-bound}.

\subsection{Controlling the bias due to the augmented transitions}

Next, we provide a bound on the bias due to introducing second-degree optimism in the transition function. It is easy to see that playing in the augmented MDP $\cM_k^\upplus$ always yields higher discounted return due to the presence of the heaven state $x^\upplus$. On the other hand, one can intuitively see that the difference in the bias term can be upper bounded by the amount of time spent in heaven $x^\upplus$. The following lemma formalizes this claim by providing a bound on $\modelbias_k \spr{\pi} = \inp{\mu_k^\upplus \spr{\pi} - \mu \spr{\pi}, r_k^\upplus}$ for any policy $\pi$.
\begin{restatable}{Lem}{modelbiasbounds} \label{lem:model-bias-bounds}
    Let $\pi$ be any policy and $p_k^\upplus$ be any ascension function at episode $k$. Then,
    \begin{equation*}
        0 \leq \modelbias_k \spr{\pi} \leq \frac{\RMAX}{1 - \gamma} \inp{\mu \spr{\pi}, p_k^\upplus}\,.
    \end{equation*}
\end{restatable}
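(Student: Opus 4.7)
The plan is to express $\modelbias_k(\pi)$ as a rescaled difference of value functions in a pair of MDPs that share the reward $r_k^\upplus$ but differ in their transition kernels, and then analyze this difference via the performance-difference (simulation) lemma. By definition of occupancy measures, $\modelbias_k(\pi) = (1-\gamma)\inp{\nu_0, V^\pi_{P_k^\upplus, r_k^\upplus} - V^\pi_{P, r_k^\upplus}}$. The key algebraic observation, direct from the definition of $P_k^\upplus$ in Section~\ref{sec:OAMDP}, is that for any function $V:\cX^\upplus\to\mathbb{R}$,
\begin{equation*}
\bigl((P_k^\upplus - P) V\bigr)(x,a) = p_k^\upplus(x,a)\bigl[V(x^\upplus) - (PV)(x,a)\bigr],
\end{equation*}
so the gap between the two transition operators is supported precisely on the ascension events.

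Applying the simulation lemma with common reward $r_k^\upplus$ and different kernels $P_k^\upplus$ and $P$, and using the convention $V^\pi_{P, r_k^\upplus}(x^\upplus) = \RMAX/(1-\gamma)$, the observation above yields
\begin{equation*}
\modelbias_k(\pi) = \gamma\, \mathbb{E}_{(X,A) \sim \mu_k^\upplus(\pi)}\!\left[p_k^\upplus(X,A)\!\left(\frac{\RMAX}{1-\gamma} - (PV^\pi_{P, r_k^\upplus})(X,A)\right)\right].
\end{equation*}
Since $r_k^\upplus$ takes values in $[0,\RMAX]$, the value function $V^\pi_{P, r_k^\upplus}$ takes values in $[0, \RMAX/(1-\gamma)]$ pointwise, so the bracketed term lies in $[0, \RMAX/(1-\gamma)]$. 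Its non-negativity yields the lower bound $\modelbias_k(\pi)\ge 0$, while upper-bounding the bracket by $\RMAX/(1-\gamma)$ yields $\modelbias_k(\pi)\le \frac{\gamma\RMAX}{1-\gamma}\inp{\mu_k^\upplus(\pi), p_k^\upplus}$.

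To match the target upper bound, I would finally convert the occupancy $\mu_k^\upplus(\pi)$ back to $\mu(\pi)$ using the monotonicity $\mu_k^\upplus(\pi, x, a) \le \mu(\pi, x, a)$ claimed for non-heaven states in Section~\ref{sec:OAMDP}, together with the convention $p_k^\upplus(x^\upplus, \cdot) = 0$ and $\gamma \le 1$. The one non-trivial step in this plan is the occupancy monotonicity itself, which I would establish by a coupling argument: run both processes under $\pi$ with shared randomness for non-ascension transitions, so that at every time step the augmented process is either in heaven or in the same non-heaven state as the coupled original process. This forces $\mathbb{P}^\pi_{P_k^\upplus}[X_t = x]\le \mathbb{P}^\pi_P[X_t = x]$ for every $x\neq x^\upplus$, and since both processes share the policy $\pi$, the state-action occupancy bound follows.
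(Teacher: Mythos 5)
Your proposal is correct and coincides with the paper's own alternative proof of this lemma (Appendix~\ref{app:model-bias-bounds} gives a primary proof by directly coupling trajectories, but then presents exactly your simulation-lemma decomposition, with the identity $\bigl((P_k^\upplus - P)V\bigr)(x,a) = p_k^\upplus(x,a)\bigl[V(x^\upplus) - (PV)(x,a)\bigr]$ and the final passage from $\mu_k^\upplus(\pi)$ to $\mu(\pi)$ via the occupancy monotonicity of Lemma~\ref{lem:mass-reduced}, itself proved by the same coupling you describe). The only cosmetic difference is that your exact expression carries the factor $\gamma$ outside the bracket, i.e.\ $\gamma\bigl(\frac{\RMAX}{1-\gamma}-PV\bigr)$ rather than the paper's $\frac{\RMAX}{1-\gamma}-\gamma PV$; yours is the algebraically consistent form, and it still yields both claimed bounds.
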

Both bounds are proved through a coupling argument, provided in Appendix~\ref{app:model-bias-bounds}. Applying the lower bound to $\pi^\star$ and the upper bound to $\pi_k$, we obtain
\begin{equation} \label{eq:model-bias-bounds}
    - \sumkK \modelbias_k \spr{\pi^\star} \leq 0\,, \text{ and}\;\; \sumkK \modelbias_k \spr{\pi_k} \leq \frac{\RMAX}{1 - \gamma} \sumkK \inp{\mu \spr{\pi_k}, p_k^\upplus}\,.
\end{equation}

\subsection{Regret analysis in the optimistically augmented MDP}

To control the main term $\regretKplus$, we adapt the analysis of \raviUCB due to \citet{MN23} with some appropriate changes. The key idea is to define an estimate $\ophPplusk$ of the optimistically augmented transition operator $\opPplusk$ associated with $\cM^\upplus_k$, via its action on a function $v\in\real^\X$:
\begin{equation*}
    \pa{\wh{P}^\upplus_k v}\pa{x,a} = \spr{1 - p_k^\upplus \spr{x, a}} \cdot \pa{\wh{P}_k v}{\pa{x,a}} + p_k^\upplus \spr{x, a}\cdot \frac{\RMAX}{1-\gamma}\,.
\end{equation*}
Then, it is easy to verify that the validity of the exploration bonuses (Eq.~\ref{eq:validity-bonuses}) implies that the scaled bonuses $\spr{1 - p_k^\upplus} \odot \CB_k$ satisfy the following analogous validity condition in the augmented MDP:
\begin{equation*}
    \abs{\spr{\opPplusk - \ophPplusk} V_k \spr{x, a}} \leq \spr{1 - p_k^\upplus \spr{x, a}} \CB_k \spr{x, a}\,.
\end{equation*}
With these insights, our algorithm can be seen as an instantiation of \raviUCB on the sequence of optimistically augmented MDPs $\pa{\cM_k^\upplus}$, and thus it can be analyzed by following the steps of \citet{MN23}. In particular, the following lemma (an adaptation of Lemma~4.3 of \citet{MN23}, proved here in Appendix~\ref{app:bound-regret-plus}) gives a bound on the regret in the augmented MDP.
\begin{restatable}{Lem}{boundregretplus} \label{lem:bound-regret-plus}
    Suppose that the bonuses $\scbr{\CB_k}$ are valid in the sense of Equation~\ref{eq:validity-bonuses} and that for any $k$, $\norm{Q_k}_\infty \leq \QMAX$. Then, the sequence of policies output by Algorithm~\ref{alg:linear-rmax-ravi-ucb} satisfies
    \begin{equation} 
        \regretKplus \leq \frac{E \spr{K} \log \abs{\cA}}{\eta} + 4 \VMAX E \spr{K} + \frac{2 \QMAX^2 \eta K}{\sqrt{1 - \gamma}} + 2 \sumkK \inp{\mu \spr{\pi_k}, \spr{1 - p_k^\upplus} \odot \CB_k}\,,
    \end{equation}
    where $E(K)$ denotes the number of epochs epochs executed within $K$ episodes.
\end{restatable}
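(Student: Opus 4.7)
The plan is to adapt the analysis of \raviUCB from \citet{MN23} to the time-varying sequence of optimistically augmented MDPs $(\cM_k^\upplus)$. A first useful observation is that within each epoch $e$, the bonus $\CB_k$ and the ascension function $p_k^\upplus$ depend only on the frozen matrix $\Lambda_{t_e}$, so $\cM_k^\upplus$ is constant throughout an epoch. Under the event $\cE_{\text{valid}}$, the scaled bonuses $(1 - p_k^\upplus)\odot \CB_k$ are valid for the augmented estimate $\ophPplusk$, that is, $\abs{\bpa{\opPplusk - \ophPplusk} V_k(x,a)} \leq (1 - p_k^\upplus(x,a))\,\CB_k(x,a)$. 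Combined with the update rule of \Cref{alg:linear-rmax-ravi-ucb}, this yields the elementwise optimism $Q_{k+1} \geq r_k^\upplus + \gamma P_k^\upplus V_k$, and a simple induction then delivers $V_{k+1} \geq V^{\pi^\star}_{P_k^\upplus, r_k^\upplus}$, which will be used to replace the comparator value by the algorithmic $V_{k+1}$ whenever convenient.

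\textbf{Per-episode decomposition.} The second step is to bound each summand $\iprod{\mu_k^\upplus(\pi^\star) - \mu_k^\upplus(\pi_k)}{r_k^\upplus}$ by writing the augmented reward using the definition of $Q_{k+1}$ as $r_k^\upplus = Q_{k+1} - \gamma \wh{P}_k^\upplus V_k - (1 - p_k^\upplus)\odot \CB_k$, then trading $\wh{P}_k^\upplus V_k$ for $P_k^\upplus V_k$ at the price of an additional $(1 - p_k^\upplus)\odot \CB_k$ using the validity of the scaled bonuses. This produces three contributions: (i) a policy-regret term $\iprod{\mu_k^\upplus(\pi^\star)}{Q_{k+1} - E V_{k+1}}$ aggregated against the augmented occupancy, (ii) a value-drift term tied to $\iprod{\mu_k^\upplus(\pi_k)}{V_{k+1} - V_k}$, and (iii) the bonus term $2\iprod{\mu(\pi_k)}{(1 - p_k^\upplus)\odot \CB_k}$, where we use the monotonicity $\mu_k^\upplus(\pi_k, x, a) \leq \mu(\pi_k, x, a)$ for non-heaven states, established in the course of proving \Cref{lem:model-bias-bounds}, to pay the bonus under $\mu(\pi_k)$ as required by the target bound.

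\textbf{Handling the KL and the drift.} Term~(i) is controlled by the log-sum-exp identity for the softmax update: since $V_{k+1}(x) = \frac{1}{\eta}\LSE_{\pi_k(\cdot|x)}\bpa{\eta Q_{k+1}(x,\cdot)}$, a standard mirror-descent computation yields
\[
    \iprod{\pi^\star(\cdot|x) - \pi_{k+1}(\cdot|x)}{Q_{k+1}(x,\cdot)} \leq V_{k+1}(x) - \iprod{\pi^\star(\cdot|x)}{Q_{k+1}(x,\cdot)} + \frac{1}{\eta}\bpa{\DDKL{\pi^\star(\cdot|x)}{\pi_k(\cdot|x)} - \DDKL{\pi^\star(\cdot|x)}{\pi_{k+1}(\cdot|x)}}.
\]
Integrating against $\mu_k^\upplus(\pi^\star)$, telescoping the KL terms within each epoch, and using $\DDKL{\pi^\star(\cdot|x)}{\piunif} \leq \log\abs{\cA}$ at each epoch reset produces the $E(K)\log\abs{\cA}/\eta$ contribution. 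Term~(ii) is handled via the policy-stability lemma of \citet{MN23}, which converts the consecutive softmax updates into an occupancy-measure stability bound of order $\eta \QMAX/\sqrt{1-\gamma}$; multiplying by $\QMAX$ and summing over $k$ yields the $2 \QMAX^2 \eta K/\sqrt{1-\gamma}$ term.

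\textbf{Epoch boundaries and conclusion.} The main obstacle is the epoch-reset bookkeeping, which is where the $4\VMAX E(K)$ term arises. At every epoch boundary both the policy (reset to $\piunif$) and the augmented MDP (through recomputed $\CB_k, p_k^\upplus$) change abruptly, so the potential $\iprod{\mu_k^\upplus(\pi^\star)}{V_k}$ that otherwise telescopes cleanly picks up a boundary contribution of at most $2\VMAX$ per epoch from the value reset and a further $2\VMAX$ per epoch from the policy reset. The second-degree optimism is essential here: it is precisely what guarantees $\|V_k\|_\infty \leq \VMAX$ uniformly, so that these jumps stay $O(\VMAX)$ rather than scaling with the possibly huge magnitude of the bonuses $\CB_k$. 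Summing the $E(K)$ boundary contributions and combining with the three terms above delivers the stated bound.
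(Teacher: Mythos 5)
Your proposal is correct and follows essentially the same route as the paper: the same split into a comparator term (handled by the mirror-descent/log-sum-exp identity with KL telescoping inside each epoch, paying $\log\abs{\cA}/\eta$ per epoch reset) and an algorithm term (handled by the two-sided optimism sandwich on $Q_{k+1}$, the policy-stability bound giving $2\QMAX^2\eta K/\sqrt{1-\gamma}$, and the monotonicity $\mu_k^\upplus(\pi_k)\preceq\mu(\pi_k)$ to pay the bonus under the true occupancy measure), with the $4\VMAX E(K)$ arising from the per-epoch telescoping boundaries exactly as in the paper. One side remark in your setup is wrong, though harmless since you never use it: the claimed induction $V_{k+1}\geq V^{\pi^\star}_{P_k^\upplus,r_k^\upplus}$ fails for the regularized update, because $V_{k+1}(x)=\max_{p}\{\iprod{p}{Q_{k+1}(x,\cdot)}-\tfrac1\eta\DDKL{p}{\pi_k(\cdot|x)}\}$ only dominates $\iprod{\pi^\star(\cdot|x)}{Q_{k+1}(x,\cdot)}$ up to a subtracted KL penalty (and the augmented MDP itself changes across epochs); only the telescoped, within-epoch version of this optimism holds, which is precisely what your mirror-descent step uses.
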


\subsection{Choosing the ascension functions}

It remains to verify that our choice of the probabilities $p_k^\upplus$ is such that the terms appearing in Lemmas~\ref{lem:reward-bias-bound} and Equation~\eqref{eq:model-bias-bounds} are small, yet the value of $\QMAX$ also remains bounded. 
In particular, we show in Lemma~\ref{lem:sigmoid-bound} that our choice satisfies
\begin{equation} \label{eq:ascension-function-bound}
    p_k^\upplus \spr{x, a} \leq 2 \alpha^2 \CB_k \spr{x, a}^2 + 2 e^{- \omega}\,.
\end{equation}
Furthermore, the following lemma (proved in in Appendix~\ref{app:qmax}) shows that the above choice for the ascention function leads to a suitable value of $\QMAX$.
\begin{restatable}{Lem}{qmaxlemma} \label{lem:qmax}
    Suppose the bonuses $\scbr{\CB_k}_{k \in \sbr{K}}$ are valid in the sense of Equation~\ref{eq:validity-bonuses} and the ascension functions are chosen as in Line~\ref{line-alg:pplus} of Algorithm~\ref{alg:linear-rmax-ravi-ucb}. Then, for any $k \in \sbr{K}$, the iterate $Q_k$ satisfies $\norm{Q_k}_\infty \leq \QMAX$ with $\QMAX = \frac{\RMAX + 2 \omega / \alpha}{1 - \gamma}$.
\end{restatable}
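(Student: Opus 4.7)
The plan is to prove $\norm{Q_k}_\infty \leq \QMAX$ by induction on the episode index $k$. The base case is immediate since Algorithm~\ref{alg:linear-rmax-ravi-ucb} initializes $Q_1 \equiv 0$. For the inductive step, assume $\norm{Q_k}_\infty \leq \QMAX$; the log-sum-exp definition of $V_k$ immediately yields $\norm{V_k}_\infty \leq \norm{Q_k}_\infty \leq \QMAX$, and by construction $V_k(x^\upplus) = \RMAX/(1-\gamma) \leq \QMAX$ at the heaven state.

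For $x \neq x^\upplus$ and any action $a$, I would unfold the update as
\[
    Q_{k+1}(x,a) = (1-p_k^\upplus(x,a))\bigl[r_k + \CB_k + \gamma \wh{P}_k V_k\bigr](x,a) + p_k^\upplus(x,a) \tfrac{\RMAX}{1-\gamma}.
\]
Under the validity event $\cE_{\text{valid}}$ from Equation~\eqref{eq:validity-bonuses} we have $\wh{P}_k V_k(x,a) \leq P V_k(x,a) + \CB_k(x,a) \leq \QMAX + \CB_k(x,a)$; combined with $r_k \leq \RMAX$, this gives
\[
    Q_{k+1}(x,a) \leq (1-p_k^\upplus)\bigl[\RMAX + (1+\gamma)\CB_k + \gamma\QMAX\bigr] + p_k^\upplus \tfrac{\RMAX}{1-\gamma}.
\]

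The crucial estimate is that the effective bonus satisfies $(1-p_k^\upplus(x,a)) \CB_k(x,a) \leq \omega/\alpha$ pointwise. Writing $1-p_k^\upplus = 1/(1+e^{\alpha\CB_k-\omega})$ and substituting $z = \alpha\CB_k \geq 0$, this reduces to showing $z/(1+e^{z-\omega}) \leq \omega$. The case $z \leq \omega$ is trivial since the left-hand side is at most $z$; for $z > \omega$ one uses $1/(1+e^{z-\omega}) \leq e^{\omega-z}$ together with a monotonicity argument for $z \mapsto z e^{\omega-z}$ on $\{z > \omega\}$, valid under a mild condition on $\omega$ (such as $\omega \geq 1$) that is satisfied by the parameter choices in Appendix~\ref{app:putting-together-main}.

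Plugging this estimate in and using $1+\gamma \leq 2$ gives
\[
    Q_{k+1}(x,a) \leq (1-p_k^\upplus)\bigl[\RMAX + \gamma\QMAX\bigr] + p_k^\upplus \tfrac{\RMAX}{1-\gamma} + \tfrac{2\omega}{\alpha}.
\]
A short algebraic simplification using the identity $\QMAX(1-\gamma) = \RMAX + 2\omega/\alpha$ shows that the right-hand side equals $\QMAX - 2\gamma p_k^\upplus \omega/((1-\gamma)\alpha) \leq \QMAX$, closing the induction. The main obstacle is the sigmoid estimate on the effective bonus: this is precisely what makes the sigmoid ascension function the right design, since it damps arbitrarily large $\CB_k$ values (which can occur in linear MDPs with ill-conditioned empirical covariance matrices) by the multiplicative factor $(1-p_k^\upplus)$, thereby preventing the value iterates from blowing up despite the potentially unbounded bonuses.
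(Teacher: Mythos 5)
Your proof is correct and follows essentially the same route as the paper's: induction on $k$, the log-sum-exp bound $\norm{V_k}_\infty \le \norm{Q_k}_\infty$, bonus validity to pass from $\wh{P}_k V_k$ to $P V_k$ at the cost of one extra $\CB_k$, and the key observation that the sigmoid ascension function damps the effective bonus to $(1-p_k^\upplus)\CB_k \le \omega/\alpha$, after which the fixed-point identity $\QMAX(1-\gamma) = \RMAX + 2\omega/\alpha$ closes the induction. The only notable difference is that you establish the sigmoid estimate by an elementary case split on $z \lessgtr \omega$ (valid for $\omega \ge 1$), whereas the paper's Lemma~\ref{lem:sigmoid-bound2} proves the same bound via the Lambert $W$ function under the slightly stronger condition $\omega \ge 2$; both are compatible with the eventual choice $\omega = \log K$.
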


\subsection{Exploration bonuses}

The final technical step is to verify the validity of the exploration bonuses and to bound their cumulative size. The following lemma addresses the latter question.
\begin{restatable}{Lem}{expectedbonusesbound} \label{lem:expected-bonuses-bound}
    Suppose Assumption~\ref{ass:LinMDP} and event $\cE_L$ hold and denote $T = \LMAX K$. Then, with probability at least $1 - \delta$, the policies $\scbr{\pi_k}$ and bonuses $\scbr{\CB_k}$ satisfy
    \begin{equation*}
        \sumkK \inp{\mu \spr{\pi_k}, \CB_k} \leq 4 \spr{1 - \gamma} \beta B \sqrt{d T\log \spr{1 + \frac{B^2 T}{d}}} + 4 \beta B \log \spr{\frac{2 K}{\delta}}^2\,,
    \end{equation*}
    and
    \begin{equation*}
        \sumkK \inp{\mu \spr{\pi_k}, \CB_k^2} \leq 8 \spr{1 - \gamma} \beta^2 B^2 d \log \spr{1 + \frac{B^2 T}{d}} + 4 \beta^2 B^2 \log \spr{\frac{2 K}{\delta}}^2\,,
    \end{equation*}
\end{restatable}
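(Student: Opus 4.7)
The plan is to (i) rewrite each occupancy-weighted sum as an expected sum over sampled trajectories, (ii) bound the corresponding empirical sum using the elliptic potential lemma after exploiting the doubling epoch schedule, and (iii) pass from the empirical back to the expected quantity via a martingale concentration inequality. The two parts of the lemma share the same outline, differing essentially only in whether Cauchy--Schwarz is invoked at the end.

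\emph{Step 1: Reduction to sample sums.} For each episode $k$, the length $L_k$ is geometric with survival probability $\gamma$ and the intra-episode transitions follow $P$ starting from $X_0 \sim \nu_0$. By the definition of the discounted state-action occupancy, for any function $f$ that is measurable with respect to the information available at the start of episode $k$, one has $\inp{\mu(\pi_k), f} = (1-\gamma)\,\bbE\!\left[\sum_{\tau=0}^{L_k-1} f(X_\tau^{(k)}, A_\tau^{(k)}) \,\middle|\, \cF_{T_k-1}\right]$. Applying this to $f = \CB_k$ and $f = \CB_k^2$ (both frozen within the current epoch and hence $\cF_{T_k-1}$-measurable) reduces the two target quantities to $(1-\gamma)$ times the expectation of the empirical sums $\sum_{t=1}^{T} \CB_{k(t)}(X_t,A_t)$ and $\sum_{t=1}^{T} \CB_{k(t)}^2(X_t,A_t)$, where $k(t)$ denotes the episode containing step $t$ and $T \leq \LMAX K$ under $\cE_L$.

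\emph{Step 2: Empirical bound via elliptic potential.} Since $\CB_k(x,a) = \beta \norm{\phi(x,a)}_{\Lambda_{t_e}^{-1}}$ is fixed throughout epoch $e$, and the epoch-change rule enforces $\det \Lambda_t \leq 2\det \Lambda_{t_e}$, a standard multiplicative determinant argument gives $\Lambda_t \preceq 2\Lambda_{t_e}$ and hence $\norm{\phi(x,a)}_{\Lambda_{t_e}^{-1}}^2 \leq 2\norm{\phi(x,a)}_{\Lambda_t^{-1}}^2$. The elliptic potential lemma then yields $\sum_{t=1}^T \norm{\phi(X_t,A_t)}_{\Lambda_{t_e}^{-1}}^2 \leq 4 B^2 d\log(1 + B^2 T/d)$. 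This directly delivers the empirical version of the squared-bonus bound; the linear version follows by Cauchy--Schwarz, picking up the extra factor $\sqrt{T}$.

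\emph{Step 3: Martingale concentration.} Define $M_k = (1-\gamma)\sum_{\tau=0}^{L_k-1} \CB_k(X_\tau^{(k)}, A_\tau^{(k)}) - \inp{\mu(\pi_k), \CB_k}$, which by Step 1 is a martingale difference adapted to the episode-level filtration. On $\cE_L$, $|M_k| \leq (1-\gamma)\LMAX\,\beta B$, which is of order $\beta B \log(K/\delta)$ for the typical choice $\LMAX = O(H\log(K/\delta))$. The conditional variance admits the Cauchy--Schwarz bound $V_K = \sum_k \bbE[M_k^2 \mid \cF_{k-1}] \leq (1-\gamma)\LMAX \sum_k \inp{\mu(\pi_k), \CB_k^2}$. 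Freedman's inequality then yields $|\sum_k M_k| \lesssim \sqrt{V_K \log(1/\delta)} + \beta B \log^2(K/\delta)$; plugging in the already-proved squared-bonus bound and applying AM--GM absorbs the $\sqrt{V_K \log(1/\delta)}$ term into the main elliptic-potential contribution, leaving only the additive $O(\beta B \log^2(K/\delta))$ correction. An analogous martingale argument for the squared sum, with increments bounded by $\beta^2 B^2 \log^2(K/\delta)$ on $\cE_L$, closes the remaining gap.

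The step I expect to be the main obstacle is Step 3: the variance bound is self-referential, since $V_K$ is controlled by the very sum appearing in the second part of the lemma, so the squared-bonus bound must be established first and then fed back into the linear-bonus concentration. Care is also needed when unioning over the Freedman event and $\cE_L$ so that the stated probability $1-\delta$ is preserved without inflating the logarithmic factors beyond $\log^2(K/\delta)$.
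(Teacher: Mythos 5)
Your outline matches the paper's proof in Steps~1 and~2: the paper likewise rewrites $\inp{\mu(\pi_k),\CB_k}$ as $(1-\gamma)$ times the conditional expectation of the within-episode sum (using the geometric episode length), handles the epoch delay via the determinant condition $\det\Lambda_t\leq 2\det\Lambda_{t_e}$, applies the elliptic potential lemma to the squared empirical sum, and gets the linear bound by Cauchy--Schwarz. (Your shortcut $\Lambda_t\preceq 2\Lambda_{t_e}$ is in fact valid here, since the eigenvalues $\mu_i\geq 0$ of $\Lambda_{t_e}^{-1/2}(\Lambda_t-\Lambda_{t_e})\Lambda_{t_e}^{-1/2}$ satisfy $\prod_i(1+\mu_i)\leq 2$ and hence $\mu_i\leq 1$; the paper instead invokes the determinant-ratio inequality of Lemma~\ref{lem:det-elliptical-bound}.) The real divergence is Step~3. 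The paper does not use Freedman at all: it applies a ready-made multiplicative concentration bound (Lemma~\ref{lem:concentration-ineq-cond-exp}, packaged as Lemma~\ref{lem:expectation-to-highprob}) stating that the sum of conditional expectations is at most \emph{twice} the empirical sum plus an additive $\cO(M\log(K/\delta))$ term, applied uniformly to both $\CB_k\in[0,\beta B]$ and $\CB_k^2\in[0,\beta^2B^2]$ --- this is where the factors of $2$ and the $4\beta B\log(2K/\delta)^2$ terms in the statement come from, with no variance bookkeeping. Your Freedman-plus-bootstrapping route can be made to work, but note that the weakest point is exactly the squared sum: a plain bounded-increment (Azuma-type) bound there yields a term of order $\sqrt{K}\,\beta^2B^2$, which dominates the claimed $d\,\beta^2B^2\log(1+B^2T/d)$; you need the same self-bounding/multiplicative form of Freedman for the squared sum as well, which is precisely what the paper's cited lemma encapsulates. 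So your plan is sound in substance but trades a one-line citation for a two-stage argument whose second stage must itself be carried out in the multiplicative form.
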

For the proof, see Appendix~\ref{app:expected-bonuses-bound}. Finally, it remains to show that the events $\cE_{\text{valid}}$ and $\cE_L$ hold which is done in the following lemma, whose proof can be found in Appendix~\ref{app:good-event-holds}.
\begin{restatable}{Lem}{goodeventholds} \label{lem:good-event-holds}
    Let $\beta = 8 \QMAX d \log \spr{c \alpha \WMAX \RMAX B^{9/2} \QMAX^4 \LMAX^{5/2} K^{7/2} d^{5/2} \delta^{-1}}$ for some constant $c \in \mathbb{R}$. Then, the event $\mathcal{E}_{\text{valid}} \cap \mathcal{E}_L$ holds with probability $1 - 2 \delta$.
\end{restatable}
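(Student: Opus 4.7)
The plan is to bound $\cE_L^c$ and $\cE_{\text{valid}}^c$ separately, each with probability at most $\delta$, and combine via union bound. For $\cE_L$, each episode length is geometrically distributed with $\P{L_k > L} = \gamma^L \le e^{-L/H}$, so choosing $\LMAX = \lceil H\log(K/\delta)\rceil$ and union-bounding over $K$ episodes yields $\P{\cE_L^c}\le\delta$.

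For $\cE_{\text{valid}}$, I would follow the standard least-squares concentration analysis in linear MDPs \citep{jin2019provably}. Writing $(\wh{P}_k - P)V_k(x,a) = \inp{\phi(x,a), \wh{MV_k} - MV_k}$ and expanding $\wh{MV_k}$ from its closed form, the error decomposes as
\begin{equation*}
\wh{MV_k} - MV_k = \Lambda_{T_k}^{-1}\!\sum_{t < T_k}\phi(X_t,A_t)\eta_t - \Lambda_{T_k}^{-1}MV_k, \quad \eta_t = V_k(X'_{t+1}) - (PV_k)(X_t,A_t),
\end{equation*}
where $(\eta_t)$ is a martingale difference sequence conditionally on a fixed $V_k$. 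Cauchy--Schwarz gives
\begin{equation*}
\abs{(\wh{P}_k - P)V_k(x,a)} \le \norm{\phi(x,a)}_{\Lambda_{T_k}^{-1}}\Bigl(\norm{\textstyle\sum_t\phi(X_t,A_t)\eta_t}_{\Lambda_{T_k}^{-1}} + \norm{MV_k}\Bigr).
\end{equation*}
The bias is controlled by the standard linear-MDP fact $\norm{MV_k}\le\sqrt{d}\,\norm{V_k}_\infty \le \sqrt{d}\,\QMAX$, using \Cref{lem:qmax}, and the monotonicity $\Lambda_{T_k}\succeq\Lambda_{t_e}$ gives $\norm{\phi(x,a)}_{\Lambda_{T_k}^{-1}}\le\norm{\phi(x,a)}_{\Lambda_{t_e}^{-1}}$, matching the scaling of $\CB_k(x,a)/\beta$.

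The main technical obstacle is the self-normalized noise term, which must be controlled uniformly over an $\epsilon$-cover $\cN_\epsilon$ of the value function class because $V_k$ depends on the past data. The key structural observation is that within an epoch $e$, both $\CB_e$ and the sigmoid-based ascension $p_e^\upplus$ are determined entirely by $\Lambda_{t_e}^{-1}$ and fixed hyperparameters, while $V_k$ itself is a log-sum-exp over actions of $Q$-values that are affine in $\phi(x,a)$ with a small number of $d$-dimensional coefficient vectors: the current ridge estimate $\wh{MV_{k-1}}$, the cumulative reward parameters $\sum_{j=k_e}^{k-1} w_j$, and the cumulative ridge estimates $\sum_{j=k_e}^{k-1}\wh{MV_{j-1}}$ (the softmax policy reset at each epoch boundary ensures that no inter-epoch cumulative appears). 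With explicit norm bounds on each coefficient together with the global Lipschitz properties of the softmax and sigmoid, a parameter-space cover translates into a sup-norm cover with $\log\abs{\cN_\epsilon} = O(d^2\log(1/\epsilon))$, the dominant $d^2$ contribution coming from covering the $d\times d$ matrix $\Lambda_{t_e}^{-1}$.

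Applying the self-normalized bound of \citet{APS11} at each element of $\cN_\epsilon$, union-bounding over $k\in[K]$ and $\cN_\epsilon$, and choosing $\epsilon$ polynomially small in the problem parameters to absorb the discretization error yields a valid $\beta$ of the stated form; the specific polynomial $\alpha\WMAX\RMAX B^{9/2}\QMAX^4\LMAX^{5/2}K^{7/2}d^{5/2}$ inside the logarithm arises from tracking the various norm bounds and Lipschitz constants when chaining through the sigmoid, the softmax, and the log-sum-exp. A final union bound with $\cE_L$ gives the claimed probability $1 - 2\delta$. The hard part throughout is the covering construction: making the parametric dependencies of $V_k$ fully explicit despite the interleaved softmax and sigmoid nonlinearities, and tracking constants carefully enough to match the precise exponents in the statement of $\beta$.
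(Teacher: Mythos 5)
Your proposal follows essentially the same route as the paper: a geometric tail bound plus union bound for $\cE_L$, and for $\cE_{\mathrm{valid}}$ the standard ridge-regression error decomposition combined with a uniform covering argument over the class of value iterates, whose $\log$-covering number is $O(d^2)$ dominated by the covariance-matrix parameter, exploiting exactly the structural facts you identify (epoch-wise fixed bonuses and ascension functions, policy reset to uniform at epoch boundaries so the log-sum-exp telescoping stays within an epoch, and Lipschitzness of the sigmoid and log-sum-exp). The paper packages this as a chain of events ($\cE_\cV\cap\cE_{\mathrm{in}}\Rightarrow\cE_{V,\mathrm{alg}}\Rightarrow\cE_{\mathrm{valid}}$) with an induction showing the iterates stay in the covered class, but the substance is the same as your sketch.
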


\subsection{Putting everything together}

Theorem~\ref{thm:main} then follows from applying Lemmas~\ref{lem:reward-bias-bound}-\ref{lem:good-event-holds}, using Equation~\eqref{eq:ascension-function-bound}, and bounding the total number of epochs (Lemma~\ref{lemma:number-epochs-bound}). The full details are provided in Appendix~\ref{app:putting-together-main}.

\section{Application: Imitation Learning from features alone}
\label{sec:application}

In this section, we show an application of the results presented in Section~\ref{sec:analysis}, making crucial use of 
the fact that our main result in Theorem~\ref{thm:main} allows adaptively chosen reward functions. 

\subsection{Setting and motivation}

We consider a linear MDP with an unknown reward function $\true$ defined in terms of the feature map 
$\phi_r:\cX\times\cA\ra\real^{d_r}$ as $\true(x,a) = \innerprod{\phi_r(x,a)}{w_{\mathrm{true}}}$ or with the operator notation $\true = \Phi_r w_{\mathrm{true}} $. The transition 
function is defined using the feature map $\phi_P: \cX\times\cA\ra\real^{d_P}$ via $P(x'|x,a)=\innerprod{\phi_P(x,a)}{ 
m(x')}$. It is easy to see that this is a linear MDP in terms of the concatenated feature map of dimension $d = d_r + 
d_P$. We consider a learning problem that we call \emph{imitation learning from features alone}, where a learner 
receives as input a data set of feature vectors $\mathcal{D}_\expert = \bc{\phi_{\cost}(X^i_E,A^i_E)}^{\tau_E}_{i=1}$ 
where $X^i_E,A^i_E \sim \mu(\expert)$ generated by an \emph{expert policy} $\expert$ by interacting with the MDP 
$\cM(\true)$. The learner is tasked with producing an $\varepsilon$-suboptimal policy $\pi^{\mathrm{out}}$ that 
satisfies $\mathbb{E}\bigl[\biprod{\initial}{V_{\true}^{\expert} - V_{\true}^{\pi^{\mathrm{out}}}}\bigr] \leq 
\varepsilon$. The learner has no knowledge of the reward function $\true$ apart from knowing a function class including 
it, but has access to the feature maps and can interact with the MDP.

This framework captures the important special case where the rewards depend only on the state but not on the action, 
by choosing a feature map $\varphi_r$ that only depends on the states. In this case, the data set taken as input is 
significantly less informative than a full record of states and actions $\mathcal{D}^{\mathrm{x,a}}_\expert = 
\bc{(X^i_E, A^i_E)}^{\tau_E}_{i=1}$. However, observing expert actions has been found restrictive in various practical 
scenarios, which gives a strong motivation to study the setting described above.
%
For instance in robotics, a features only dataset describing a robotic manipulation task 
can be collected easily 
via cameras and sensors \citep{torabi2018generative,zhu2020off,yang2019imitation,torabi2019recent}.
On the other hand, collecting actions on top of features is more challenging as it 
requires knowledge of the internal dynamics of the observed robots. 
Another example of imitation learning from features alone is learning to drive from a video 
which does not show the driver's actions but only the movements of the car induced by those actions.
Finally, notice that imitation learning from states only, studied for example in \cite{sun2019provably}, is a particular 
case of our setting for $\phi_r(x,a) = \mathbf{e}_x$. In this case, the expert dataset consists of states sampled from 
the expert state occupancy measure.

\subsection{Algorithm and sample complexity guarantees} 
In the following, we propose a provably efficient algorithm for imitation learning from features alone.
Our algorithm design and analysis is driven by the following decomposition of the regret, defined as $\regretIL = \sum^K_{k=1}\innerprod{\initial}{V^{\expert}_{P,\true} - V^{\pi_k}_{P,\true}}$, in terms of an appropriately chosen sequence of reward functions $r_1,\dots,r_K$:
\begin{equation*}
    (1 - \gamma) \regretIL = \sum^K_{k=1} \innerprod{r_k}{\mu\brr{\expert} - \mu\brr{\pi_k}} + \sum^K_{k=1} \innerprod{\Phi_r\transpose\mu(\pi_k) - \Phi_r\transpose\mu(\expert)}{w_k - w_{\mathrm{true}}}.
\end{equation*}
The first term in this decomposition corresponds to the regret of our online learning algorithm for adversarial MDPs, and can be controlled by invoking \algname on the sequence of rewards $\pa{r_k}$. The second term in the decomposition corresponds to the regret of another online learning algorithm picking a sequence of reward functions, aiming to minimize the sequence of linear loss functions $\Phi_r\transpose\mu(\pi_k) - \Phi_r\transpose\mu(\expert)$ (or at least do as well as the fixed comparator $w_{\mathrm{true}}$). This objective can be achieved by running a standard online learning method such as projected online gradient descent (OGD, \citealp{Zin03}), using a sequence of  unbiased loss estimates that can be computed efficiently using the observed feature vectors. 
The full algorithm is specified as \Cref{alg:fra} in Appendix~\ref{app:pseudocodes} and it is shown to satisfy the following guarantees.
\begin{restatable}{theorem}{FraUpper} \label{thm:FraUpper}
  Algorithm~\ref{alg:fra}, when run for $K = \tilde{\mathcal{O}}\brr{d^{3} H^{9/2} \varepsilon^{-2}\log (\abs{\aspace})}$ iterations with an expert dataset of size $\tau_E = \widetilde{\mathcal{O}}\brr{ \WMAX^2 H^2\varepsilon^{-2}}$, outputs an $\varepsilon$-suboptimal policy.
\end{restatable}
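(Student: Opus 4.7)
The plan is to control the two summands of the displayed regret decomposition separately and then convert the resulting cumulative regret into the expected suboptimality of the randomized output policy $\pi_I$ (drawn uniformly from $\pi_1,\dots,\pi_K$) by dividing by $K$. The first summand is the RL regret of \algname executed on the adversarial reward sequence $(r_k)$ produced by the reward-learning subroutine, while the second is the regret of an online linear optimization (OLO) procedure that updates the reward weights $w_k$ against the linear losses $\ell_k(w) = \innerprod{w}{\Phi_r\transpose(\mu(\pi_k) - \mu(\expert))}$.

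First I would apply Theorem~\ref{thm:main} with comparator $\expert$ and reward sequence $(r_k = \Phi_r w_k)$, with the $w_k$ constrained to the $\ell_2$-ball of radius $\WMAX$ so that the resulting rewards remain bounded (after appropriate normalization to $[0,1]$) and the linear-MDP structure required by \algname is preserved. This yields, with high probability,
\begin{equation*}
  \sum_k \inp{r_k}{\mu(\expert) - \mu(\pi_k)} = (1-\gamma)\,\regretK \le (1-\gamma)\,\widetilde{\mathcal{O}}\bigl(\sqrt{d^3 H^3 K} + \sqrt{d H^{9/2} K \log|\cA|}\bigr).
\end{equation*}

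Second, for the OLO term I would run projected OGD on $\{w:\norm{w}\le\WMAX\}$ using the stochastic subgradient $\widehat{g}_k = \phi_r(X^k_\star,A^k_\star) - \phi_r(X^k_E,A^k_E)$, where $(X^k_\star,A^k_\star)$ is the \emph{last} state-action pair of episode $k$ and $(X^k_E,A^k_E)$ is drawn uniformly from the expert dataset. A short computation, using that the geometric termination is independent of the trajectory evolution, shows that $(X^k_\star,A^k_\star)$ is distributed exactly according to $\mu(\pi_k)$, so $\widehat{g}_k$ is an unbiased estimator of $\Phi_r\transpose(\mu(\pi_k) - \mu(\expert))$ up to the bias of the empirical expert mean. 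Since $\norm{\widehat{g}_k}\le 2B$, the standard OGD analysis delivers an expected regret of order $\widetilde{\mathcal{O}}(\WMAX B\sqrt{K})$ against any comparator in the ball, including $w_{\mathrm{true}}$, with a high-probability version obtained via Azuma--Hoeffding on the martingale differences $\widehat{g}_k - \EE{\widehat{g}_k\mid\cF_{k-1}}$. Hoeffding's inequality applied over the $\tau_E$ expert samples then bounds the extra bias by $\widetilde{\mathcal{O}}(K\WMAX B/\sqrt{\tau_E})$.

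Combining the two bounds gives $(1-\gamma)\regretIL \le \widetilde{\mathcal{O}}\bigl(\sqrt{d^3 HK} + \sqrt{d H^{5/2} K \log|\cA|} + \WMAX B\sqrt{K} + K\WMAX B/\sqrt{\tau_E}\bigr)$; dividing by $(1-\gamma)K=K/H$ produces the expected suboptimality of $\pi_I$. Setting each term to $\varepsilon$ and solving yields the stated $K = \widetilde{\mathcal{O}}(d^3 H^{9/2}\log|\cA|/\varepsilon^2)$ (a crude single expression simultaneously covering the RL- and OLO-regret requirements, treating $\WMAX$ and $B$ as constants) together with $\tau_E = \widetilde{\mathcal{O}}(\WMAX^2 H^2/\varepsilon^2)$. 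The main obstacle will be reconciling the high-probability guarantee of Theorem~\ref{thm:main} with the in-expectation OGD analysis through a careful union bound and martingale concentration argument, while simultaneously verifying that the reward sequence $(r_k)$ handed to \algname remains in the admissible range on which the bound of Theorem~\ref{thm:main} was established.
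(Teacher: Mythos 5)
Your proposal follows essentially the same route as the paper: the identical two-term decomposition of $(1-\gamma)\regretIL$ into the \algname regret against $\expert$ on the adaptively chosen rewards $r_k=\Phi_r w_k$ (bounded via Theorem~\ref{thm:main}) plus the reward player's OGD regret against $w_{\mathrm{true}}$, with Azuma--Hoeffding for the stochastic-gradient martingale and a concentration bound for the empirical expert feature mean, followed by the same parameter choices. The only cosmetic differences are that the paper precomputes the empirical mean $\widehat{\lambda(\expert)}$ once rather than resampling from the dataset each round, and it explicitly performs the high-probability-to-expectation conversion (using that the normalized regret is a.s.\ bounded by $(1-\gamma)^{-1}$ and taking $\delta\lesssim\varepsilon(1-\gamma)$) that you flag as a remaining reconciliation step.
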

Notably, the above is the first known bound for this setting that achieves 
a scaling $\varepsilon^{-2}$ with the precision parameter for both the number of MDP interactions $K$ and expert samples $\tau_E$. 
We provide a detailed comparison with existing imitation learning theory works in Appendix~\ref{app:related_works_IL}, 
whereas the complete technical details supporting the above theorem are provided in Appendix~\ref{app:proof_IL}.

\subsection{Lower bounds}
The upper bound of Theorem~\ref{thm:FraUpper} depends both on the number of interaction steps $K$ and the expert 
samples $\tau_E$. It is natural to ask if these dependences can be improved in the setting we consider. We address both 
of these questions in the negative in a set of lower bounds described below.


\paragraph{Lower bound on the number of MDP interactions $K$.}
Theorem~\ref{thm:LowerK} in Appendix~\ref{app:lower} proves 
that for any imitation learning from features alone algorithm, even in the setting $\tau_E = \infty$,
there exists an MDP and an expert policy where at least $K=\Omega\brr{\frac{d H^2}{\varepsilon^2}}$ interactions 
are needed to output a $\varepsilon$-optimal policy.
This lower bound shows that the upper bound provided for $K$ in Theorem~\ref{thm:FraUpper}
achieves the optimal scaling in $\varepsilon$ and can be improved at most by a factor $ d^2 H^{5/2}$.
More importantly, this lower bound marks a clear separation between standard and features only imitation learning: 
purely offline learning ($K=0$) is impossible in imitation learning from features alone
while it is possible in standard imitation learning 
where the experts actions are observed (see, e.g., \citealp{foster2024behavior}).

In the construction of the lower bound, we consider a two state MDP with a reward 
function that depends only on the state variable and we set $\phi_r(x,a) = \mathbf{e}_x$ which prevents 
observing expert actions. 
In this MDP, for $\tau_E=\infty$, the learner observes the expert state occupancy 
measure exactly and 
therefore, the ``good'' state that achieves the maximum of the expert state occupancy measure.
However, since the learner does not know the MDP dynamics, interactions with the MDP 
are needed to find out the action that allows to maximize the learner state occupancy measure 
in the ``good'' state.
Following standard techniques in MDP and bandits lower bound, we can ensure that the amount 
of MDP interactions is at least $\Omega\brr{\varepsilon^{-2}}$.

\paragraph{Lower bound on the number of expert samples $\tau_E$.}
Theorem~\ref{thm:LowerTauE} in Appendix~\ref{app:lower} proves 
that for any algorithm for imitation learning from features alone, even for $K=\infty$,
there exists an MDP and an expert policy where learning an $\varepsilon$-optimal policy requires at least 
$\tau_E =\Omega\brr{\WMAX^2 H^2\varepsilon^{-2}}$ expert samples.
This lower bound proves that Algorithm~\ref{alg:fra} scales optimally with all the problem parameters 
and in the precision $\varepsilon$. Moreover, this lower bound highlights again a clear distinction with standard 
imitation learning.
On the one hand, standard imitation learning can be reduced to a supervised 
classification problem when the optimal policy
is deterministic and the actions are observed in the dataset. As a consequence,
the classic lower bound for supervised classification of order $\mathcal{O}\brr{\varepsilon^{-1}}$ \citep{shalev2014understanding}
holds and it is matched by a purely offline behavioural cloning \citep{rajaraman2020toward}.
On the other hand, in our lower bound construction we choose again $\phi_r(x,a) =\mathbf{e}_x$
to make the expert actions unobservable for the learner and we can prove a larger lower bound of order $\mathcal{O}\brr{\varepsilon^{-2}}$
which holds even if the expert policy is deterministic.

For the proof, our construction is again a two state MDP (a ``good'' state with high reward and a ``bad'' state with 
lower reward). The expert policy is chosen to be the optimal one.
The transition dynamics and the initial distribution are chosen in a way that the expert 
state occupancy meausure is only marginally higher in the ``good'' state. That is, the 
expert occupancy measure equals roughly $1/2 + \varepsilon$ in the ``good'' state 
and  $1/2 - \varepsilon$ in the ``bad'' state.
By standard arguments, we then conclude that the learner needs at least $\Omega\brr{\varepsilon^{-2}}$
samples from the expert to identify the ``good'' state.

\section{Concluding remarks}\label{sec:conclusion}

We close by discussing a few open problems and potential improvements to our results.

\paragraph{On the objective function.} We have focused on a relatively under-studied objective function: the discounted return from a fixed initial-state distribution. This is different from the objectives studied by other works such as \citet{liu2020regret,he2021nearly,zhou2021provably}, but arguably more natural when one is interested in learning algorithms that produce a single near-optimal policy at the end of an interactive learning period (which is the case in most practical applications one can think of). It is easy to see that resets to the initial state are absolutely necessary in this setting, unless one wants to make strong assumptions about the transition dynamics. A more exciting question is if our algorithm can be adapted to the significantly more challenging setting of undiscounted infinite-horizon reinforcement learning where existing methods \citep{WJLJ20,hong2024provably,he2024sample} either obtain suboptimal regret bounds or leverage oracles whose computationally efficient implementation is unknown. So far, our attempts towards tackling this problem have remained unsuccessful. We believe that significant new ideas are necessary for solving this major open problem, but also that the techniques we introduce in this paper will be part of an eventual solution.\looseness=-1

\paragraph{On second-degree optimism.} Our key technical contribution draws inspiration from two sources: the \RMAXalg algorithm of \citet{brafman2002r}, and the very recent work of \citet{cassel2024warmupfree}. While many of our technical tools are directly imported from the latter work, the concept of optimistically augmented MDPs and the connection with \RMAXalg has arguably brought about a new level of understanding that can be potentially valuable for future work. It has certainly proved useful for our setting, where the notion of ``contracted sub-MDP'' used in the analysis of \citet{cassel2024warmupfree} cannot be meaningfully interpreted and used for analysis. We hope our work can bring some fresh attention to older (but apparently still powerful) ideas from the past of RL theory such as the \RMAXalg trick.

\paragraph{On the tightness of the bounds.} We find it very likely that our performance guarantees can be improved to some extent in terms of their dependence on $H$, $d$ and $\log \abs{\A}$. In fact, we have made no attempt to optimize the scaling with respect to these parameters, and actually believe that the $H^{9/4}$ factor in the regret bound can be improved relatively easily. Specifically, we find it very likely that performing several value-function and policy updates at the end of each episode can reduce this factor---but we opted to keep the algorithm simple and the paper easy to read. We invite future researchers to verify this conjecture. Likewise, we believe that the dependence on the dimension $d$ can be improved by using more sophisticated estimators and concentration inequalities (as done in the finite-horizon setting by \citealp{he2023nearly,agarwal2023vo}), but leave working out the (possibly non-trivial) details for another paper.

\acks{The authors wish to thank Asaf Cassel and Aviv Rosenberg for sharing further insights about their work at the 
virtual RL theory seminars, and Volkan Cevher for initial discussions about this project. 
This project has received funding from the European Research Council (ERC), under the European Union's Horizon 2020 
research and innovation programme (Grant agreement No.~950180).
This work is funded (in part) through a PhD fellowship of the Swiss Data Science Center, a joint
venture between EPFL and ETH Zurich.  Luca Viano acknowledges travel support from ELISE (GA no 951847).}

\bibliography{ref}

\clearpage
\appendix
\renewcommand{\contentsname}{Contents of Appendix}
\addtocontents{toc}{\protect\setcounter{tocdepth}{2}}
{
  \setlength{\cftbeforesecskip}{.8em}
  \setlength{\cftbeforesubsecskip}{.5em}
  \tableofcontents
}
\clearpage
\section{Omitted pseudocodes}
\label{app:pseudocodes}
This section includes the pseudocode for \algname.
Each of the steps is explained in details in \Cref{sec:algo}.
\begin{algorithm}[!h]
\caption{\algname for Linear MDPs.}
  \begin{algorithmic}[1]
  \label{alg:linear-rmax-ravi-ucb-full}
      \STATE {\bfseries Inputs:} Number of resets $K$, learning rate $\eta > 0$, exploration coefficient $\beta > 0$, threshold $\omega > 0$, slope sigmoid $\alpha > 0$.
      \STATE {\bfseries Initialize:} $X_1 \sim \nu_0$, $\pi_1 = \piunif$, $Q_1 = 0$, $\cD_1 = \emptyset$, $\Lambda_1 = I$, $t = 1$, $e = 0$.
      \FOR{$k = 1, \dots, K$}
        
      \STATE \vspace{3pt}
      \algcomment{interact with the environment}
      \STATE The adversary adaptively chooses $r_k$, i.e.  $r_k = \textsc{RewardUpdate} \spr{\scbr{\pi_\ell}^k_{\ell=1}, \scbr{r_\ell}^{k-1}_{\ell=1}}$.
      \WHILE{\TRUE}
      \STATE Observe the state $X_t$ and play an action $A_t \sim \pi_k \spr{\cdot \given X_t}$.
      \STATE Receive reward $r_k \spr{X_t, A_t}$ and observe the function $r_k$.
      \STATE With probability $1 - \gamma$, \reset to initial distribution: $X_{t+1} \sim \nu_0$ set $T_k = t$ and \textbf{break} .
      \STATE Otherwise observe the next state $X_{t+1} \sim P \spr{\cdot \given X_t, A_t}$.
      \STATE Update $\Lambda_{t + 1} = \Lambda_t + \phi \spr{X_t, A_t} \phi \spr{X_t, A_t}\transpose$.
      \STATE $\cD_{t+1} = \cD_t \cup \scbr{\spr{X_t, A_t, X_{t+1}}}$.
      \STATE $t = t + 1$.
      \ENDWHILE
      \STATE \algcomment{initialize new epoch}
      \IF{$t = T_1$ \OR $\det \Lambda_{T_k} \geq 2 \det \Lambda_{t_e}$} \label{line-alg:epoch-full}
      \STATE $e = e + 1$.
      \STATE Set $k_e = k$ and $t_e = t$.
      \STATE Reset the policy $\pi_k = \piunif$.
      \ENDIF
      \STATE For any $\spr{x, a} \in \cX \times \cA$, $\CB_k \spr{x, a} = \beta \norm{\phi \spr{x, a}}_{\Lambda_{t_e}^{-1}}$,\;\;and\; $\CB_k\spr{x^\upplus, a} = 0$. \label{line-alg:bonuses-full}
      \STATE For any $\spr{x, a} \in \cX \times \cA$, $p_k^\upplus \spr{x, a} = \sigma \spr{\alpha \CB_k \spr{x, a} - \omega}$,\;\;and\; $p_k^\upplus \spr{x^\upplus, a} = 0$. \label{line-alg:pplus-full}
      \STATE \algcomment{optimistic regularized value iteration}
      \STATE $r_k^\upplus = \spr{1 - p_k^\upplus} \odot r_k + p_k^\upplus \cdot \RMAX$.
      \STATE $\wh{M V_k} = \Lambda_{T_k}^{-1} \sum_{\spr{x, a, x'} \in \cD_{T_k}} \phi \spr{x, a} V_k \spr{x'}$. \label{line-alg:ridge-full}
      \STATE $\wh{P_k^\upplus V_k} = \spr{1 - p_k^\upplus} \odot \phim \wh{M V_k} + p_k^\upplus \cdot V_k \spr{x^\upplus}$,\;\;and\; $\wh{P_k^\upplus V_k} \spr{x^\upplus, \cdot} = \frac{\RMAX}{1 - \gamma}$. \label{line-alg:transf-full}
      \STATE $Q_{k + 1} = r_k^\upplus + \spr{1 - p_k^\upplus} \odot \CB_k + \gamma \wh{P_k^\upplus V_k}$.
      \STATE $V_{k+1} \spr{x} = \frac1\eta \log \spr{\sum_a \pi_{k} \spr{a \given x} e^{\eta Q_{k+1} \spr{x, a}}}$.
      \STATE $\pi_{k+1} = \pi_{k} \odot e^{\eta \spr{Q_{k+1} - E V_{k+1}}}$.
      \ENDFOR
      \STATE {\bfseries Output:} $\pi_I$, with $I \sim \cU \spr{\sbr{K}}$.
  \end{algorithmic}
\end{algorithm}

\noindent Next, we include the pseudocode for our imitation learning algorithms built on \algname. At line~\ref{algline:expert_estimation}, the learner computes an estimate of the expert features expectation computing an elementwise empirical average of the features in the dataset $\cD_\expert$. Such an estimate is leveraged in the online gradient descent (OGD) update given by the function at lines~~\ref{algline:OGDstart}-\ref{algline:OGDend}. This function instantiates the general $\textsc{RewardUpdate}$ routine given in \Cref{alg:linear-rmax-ravi-ucb}. That is, after each policy update in \algname, the reward player estimates the feature expectation of the current policy $\pi_k$ as the plug in estimator $\phi_\cost \spr{X_k, A_k}$ with $X_k, A_k$ sampled from the occupancy measure $\mu \spr{\pi_k}$. Notice that for the reinforcement learning applications, the adversarial reward sequence is generated online observing the policies. Therefore, for this application it is important that the guarantees in \Cref{thm:main} holds against adaptive adversaries.
\begin{algorithm}[t]
  \caption{\FRAalg (Feature Rmax Adversarial Imitation Learning) \label{alg:fra}}
  \centering
  \begin{algorithmic}[1]
    \STATE {\bfseries Inputs:} \\
    (1) a features dataset $\cD_\expert = \scbr{\phi_\cost \spr{X^i_E, A^i_E}}^{\tau_E}_{i=1}$ where for any $i \in \sbr{\tau_E}$, $X^i_E, A^i_E \sim \mu \spr{\expert}$, \\
    (2) read access to $\phi_P \spr{x, a}$ for all $x, a \in \cX \times \cA$, \\
    (3) trajectory access to $\cM \setminus \true$, and \\
    (4) the reward weights class $\cW$ such that $w_{\mathrm{true}} \in \cW$ and $\norm{w} \leq \WMAX$ for all $w \in \cW$.
    \STATE Set $K, \eta, \beta, \omega, \alpha$ as in \Cref{thm:main}.
    \STATE Set $\eta_r = \nicefrac{\WMAX}{B \sqrt{K}}$.
    \STATE Estimate $\widehat{\lambda \spr{\expert}} = \frac{1}{\abs{\cD_\expert}} \sum^{\tau_E}_{i=1} \phi_\cost \spr{X_E^i, A_E^i}$. \label{algline:expert_estimation}
    \STATE \textbf{Function} \label{algline:OGDstart}{$\textsc{Ogd}$}{$\spr{\mu \spr{\pi_k}, w_{k-1}}$}
    \STATE Sample $X_k, A_k \sim \mu \spr{\pi_k}$.
    \STATE $\quad \quad \text{\textbf{return}} \quad w_k = \Pi_{\cW} \spr{w_{k-1} + \eta_r \spr{\widehat{\lambda \spr{\expert}} - \phi_r \spr{X_k, A_k}}}$. \label{algline:OGDend}
    \STATE \textbf{Output:}  \algname $\spr{K, \eta, \beta, \omega, \alpha, \textsc{RewardUpdate} = \textsc{Ogd}}$.
  \end{algorithmic}
\end{algorithm}

\clearpage
\section{Omitted proofs from Section~\ref{sec:analysis}}
\label{app:analysis-proofs}

\subsection{Proof of Lemma~\ref{lem:reward-bias-bound} (reward bias)}
\label{app:reward-bias-bound}

\rewardbiasbound*

\begin{proof}
    First note that for any action $a$, the rewards are equal, \ie $r_k \spr{x^\upplus, a} = r_k^\upplus \spr{x^\upplus, a}$. For the other states, plugging the definition of $r_k^\upplus$ gives
    \begin{align*}
        \rewardbias_k &= \inp{\mu \spr{\pi^\star} - \mu \spr{\pi_k}, r_k - r_k^\upplus} \\
        &= \inp{\mu \spr{\pi^\star} - \mu \spr{\pi_k}, p_k^\upplus \odot \spr{r_k - \RMAX \bfone}} \\
        &\leq - \inp{\mu \spr{\pi_k}, p_k^\upplus \odot \spr{r_k - \RMAX \bfone}} \\
        &\leq \RMAX \inp{\mu \spr{\pi_k}, p_k^\upplus}\,,
    \end{align*}
    where the first inequality follows from $r_k - \RMAX \bfone \preceq 0$ and $\mu \spr{\pi^\star} \succeq 0$, and the second inequality is due to $r_k \succeq 0$.
\end{proof}

    \subsection{Proof of Lemma~\ref{lem:model-bias-bounds} (model bias)}
    \label{app:model-bias-bounds}

\modelbiasbounds*

\begin{proof}
    Let us consider a process  $\spr{X_\tau, A_\tau}_{\tau \in \bbN}$ generated by the policy $\pi$ in the real MDP, \ie, such that $X_0 \sim \initial$, and for any $\tau \in \bbN$, $A_\tau \sim \pi \spr{\cdot \given X_\tau}$, and $X_{\tau+1} \sim P \spr{\cdot \given X_\tau, A_\tau}$. We denote $\spr{X^\upplus_{k, \tau}, A^\upplus_{k, \tau}}_{\tau \in \bbN}$ its coupled process in the optimistic MDP at episode $k$ generated as follows. At the first stage we set $X^\upplus_{k, 0} = X_0$, then for any $\tau \geq 1$, the coupled process evolves as follows
    \begin{equation*}
        X^\upplus_{k, \tau+1}, A^\upplus_{k, \tau+1} =
        \begin{cases}
            X_{\tau+1}, A_{\tau+1} \quad &\text{w.p.} \quad 1 - p^\upplus_k \spr{X_\tau, A_\tau} \quad \text{if} \quad  X^\upplus_{k, \tau}, A^\upplus_{k, \tau} = X_\tau, A_\tau \\
            x^\upplus, a \quad & \text{w.p.} \quad p^\upplus_k \spr{X_\tau, A_\tau} \quad \text{if} \quad  X^\upplus_{k, \tau}, A^\upplus_{k, \tau} = X_\tau, A_\tau \\
            x^\upplus, a \quad & \text{if} \quad X^\upplus_{k, \tau}, A^\upplus_{k, \tau} \neq X_\tau, A_\tau
        \end{cases}\,,
    \end{equation*}
    where $a$ can be any action. Then, we can rewrite the bias term as
    \begin{align*}
        &\modelbias_k \spr{\pi} = \spr{1 - \gamma} \bbE \sbr{\sum_{\tau=0}^\infty \gamma^\tau \spr{r_k^\upplus \spr{X^\upplus_{k, \tau}, A^\upplus_{k, \tau}} - r_k^\upplus \spr{X_\tau, A_\tau}}} \\
        &\quad\quad\quad= \spr{1 - \gamma} \bbE \sbr{\sum_{\tau=0}^\infty \gamma^\tau \II{\spr{X^\upplus_{k, \tau}, A^\upplus_{k, \tau}} \neq \spr{X_\tau, A_\tau}} \spr{r_k^\upplus \spr{X^\upplus_{k, \tau}, A^\upplus_{k, \tau}} - r_k^\upplus \spr{X_\tau, A_\tau}}}\,.
    \end{align*}
    By definition, the state-action pairs $\spr{X^\upplus_{k, \tau}, A^\upplus_{k, \tau}}$ and $\spr{X_\tau, A_\tau}$ differ when the coupled process goes to heaven, \ie $X^\upplus_{k, \tau} = x^\upplus$. Noting that $r_k \spr{x^\upplus, a} = \RMAX$ for any action $a \in \cA$, we further get
    \begin{align*}
        \modelbias_k \hspace{-1pt}\spr{\pi} &= \spr{1 - \gamma} \bbE \sbr{\sum_{\tau=0}^\infty \gamma^\tau \II{\spr{X^\upplus_{k, \tau}, A^\upplus_{k, \tau}} \neq \spr{X_\tau, A_\tau}} \hspace{-1pt}\spr{r_k^\upplus \spr{x^\upplus, A^\upplus_{k, \tau}} - r_k^\upplus \spr{X_\tau, A_\tau}}} \\
        &= \spr{1 - \gamma} \bbE \sbr{\sum_{\tau=0}^\infty \gamma^\tau \II{\spr{X^\upplus_{k, \tau}, A^\upplus_{k, \tau}} \neq \spr{X_\tau, A_\tau}} \spr{\RMAX - r_k^\upplus \spr{X_\tau, A_\tau}}}\,,
    \end{align*}
    and $\modelbias_k \spr{\pi} \geq 0$ follows from $r_k^\upplus \preceq \RMAX$. For the upper bound, we can instead use $r_k \succeq 0$ and continue as follows
    \begin{align*}
        \modelbias_k \spr{\pi} &\leq \spr{1 - \gamma} \RMAX \bbE \sbr{\sum_{\tau=0}^\infty \gamma^\tau \II{\spr{X^\upplus_{k, \tau}, A^\upplus_{k, \tau}} \neq \spr{X_\tau, A_\tau}} } \\
        &= \spr{1 - \gamma} \gamma \RMAX \sum_{\tau=0}^\infty \gamma^\tau \bbP \sbr{\spr{X^\upplus_{k, \tau+1}, A^\upplus_{k, \tau+1}} \neq \spr{X_{\tau+1}, A_{\tau+1}}}\,,
    \end{align*}
    \begin{figure}
        \begin{center}
            \begin{tikzpicture}[every node/.style={circle, draw, minimum size=1cm, font=\small},
                >={Stealth}, 
                shorten >=1pt, shorten <=1pt, 
                node distance=1.3cm and 1.3cm 
            ]
            
            \node (X0) at (0,0) {$X_{k, 0}$};
            \node (X1) [right=of X0] {$X_{k, 1}$};
            \node (X2) [right=1.3cm of X1] {$X_{k, 2}$};
            \node (X3) [right=of X2] {$X_{k, 3}$};
            \node (dots) [right=of X3, draw=none] {$\cdots$};
            
            \node (uX0) [below=of X1] {$x^\upplus$};
            \node (uX1) [below=of X2] {$x^\upplus$};
            \node (uX2) [below=of X3] {$x^\upplus$};
            \node (udots) [right=of uX2, draw=none] {$\cdots$};
            
            \draw[->] (X0) -- (X1);
            \draw[->] (X1) -- (X2);
            \draw[->] (X2) -- (X3);
            \draw[->] (X3) -- (dots);
            
            \draw[->, dashed] (X0) to[bend left=20] (X1);
            \draw[->, dashed] (X0) -- (uX0);
            \draw[->, dashed] (X1) to[bend left=20] (X2);
            \draw[->, dashed] (X1) -- (uX1);
            \draw[->, dashed] (X2) to[bend left=20] (X3);
            \draw[->, dashed] (X2) -- (uX2);
            \draw[->, dashed] (X3) to[bend left=20] (dots);
            
            \draw[->, dashed] (uX0) -- (uX1);
            \draw[->, dashed] (uX1) -- (uX2);
            \draw[->, dashed] (uX2) -- (udots);
            
            \end{tikzpicture}
        \end{center}
        \caption{The thick arrows represent the transitions of the process in the original MDP, while the dashed ones correspond to the utopian one.}
        \label{fig:bias-term}
    \end{figure}
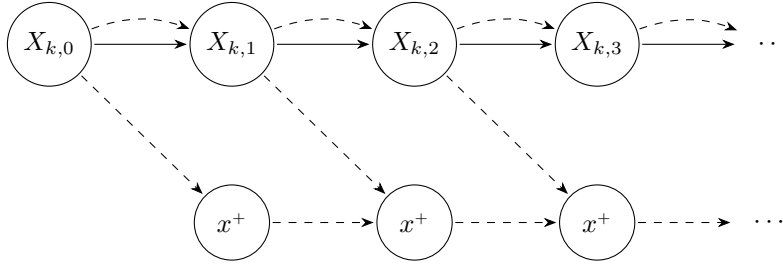
    \hspace{-4.5pt}where we used $\bbP \sbr{\spr{X^\upplus_{k, 0}, A^\upplus_{k, 0}} \neq \spr{X_0, A_0}} = 0$ by definition. Then, as illustrated in Figure~\ref{fig:bias-term}, two cases can happen. Either the coupled process was still in the original MDP and transitioned to heaven, either it was already in heaven. Denoting for any $\tau \geq 0$ the event $\cE_\mathrm{split} \spr{\tau} = \scbr{\spr{X^\upplus_{k, \tau}, A^\upplus_{k, \tau}} \neq \spr{X_\tau, A_\tau}}$ and $\mathcal{E}^c_\mathrm{split} \spr{\tau}$ its complementary, we have
    \begin{align*}
        \bbP \sbr{\cE_\mathrm{split} \spr{\tau+1}} &= \bbP \sbr{\cE_\mathrm{split} \spr{\tau+1} \given \cE_\mathrm{split} \spr{\tau}} \bbP \sbr{\cE_\mathrm{split} \spr{\tau}} \\
        &\phantom{=}+ \bbP \sbr{\cE_\mathrm{split} \spr{\tau+1} \given \cE^c_\mathrm{split} \spr{\tau}} \bbP \sbr{\cE^c_\mathrm{split} \spr{\tau}}\,.
    \end{align*}
    If the coupled process is already in the heaven state $x^\upplus$, then it stays there. Otherwise, it can transition there with probability $\bbE \sbr{p_k^\upplus \spr{X_\tau, A_\tau}}$, thus
    \begin{align*}
        \bbP \sbr{\cE_\mathrm{split} \spr{\tau+1}} &= \bbP \sbr{\cE_\mathrm{split} \spr{\tau}} + \bbE \sbr{p_k^\upplus \spr{X_\tau, A_\tau}} \bbP \sbr{\cE^c_\mathrm{split} \spr{\tau}} \\
        &\leq \bbP \sbr{\cE_\mathrm{split} \spr{\tau}} + \bbE \sbr{p_k^\upplus \spr{X_\tau, A_\tau}} \\
        &\leq \sum_{u=0}^\tau \bbE \sbr{p_k^\upplus \spr{X_u, A_u}}\,,
    \end{align*}
    by induction. Therefore, we get
    \begin{align*}
        \modelbias_k \spr{\pi} &\leq \spr{1 - \gamma} \gamma \RMAX \sum_{\tau=0}^\infty \gamma^\tau \sum_{u=0}^\tau \bbE \sbr{p_k^\upplus \spr{X_u, A_u}} \\
        &= \spr{1 - \gamma} \gamma \RMAX \bbE \sbr{\sum_{u=0}^\infty \sum_{\tau=0}^\infty \gamma^\tau \II{u \leq \tau} p_k^\upplus \spr{X_u, A_u}} \\
        &= \spr{1 - \gamma} \gamma \RMAX \bbE \sbr{\sum_{u=0}^\infty \sum_{\tau=u}^\infty \gamma^\tau p_k^\upplus \spr{X_u, A_u}} \\
        &= \gamma \RMAX \bbE \sbr{\sum_{u=0}^\infty \gamma^u p_k^\upplus \spr{X_u, A_u}}\,,
    \end{align*}
    and the conclusion follows from the definition of $\mu \spr{\pi}$ and $\gamma < 1$.
\end{proof}

        \subsubsection{Alternative Proof of Lemma~\ref{lem:model-bias-bounds}}

\noindent We also provide an alternative proof based on a simulation lemma.
\begin{proof}
    By the flow constraints associated to $\mu_k^\upplus \spr{\pi}$ and after rearranging, we get
    \begin{align*}
        \modelbias_k \spr{\pi} &= \spr{1 - \gamma} \inp{\nu_0, V_{P_k^\upplus, r_k^\upplus}^\pi - V_{P, r_k^\upplus}^\pi} \\
        &= \inp{\opE\transpose \mu_k^\upplus \spr{\pi} - \gamma \spr{\opPplusk}\transpose \mu_k^\upplus \spr{\pi}, V_{P_k^\upplus, r_k^\upplus}^\pi - V_{P, r_k^\upplus}^\pi} \\
        &= \inp{\mu_k^\upplus \spr{\pi}, \opE V_{P_k^\upplus, r_k^\upplus}^\pi - \opE V_{P, r_k^\upplus}^\pi - \gamma \opPplusk V_{P_k^\upplus, r_k^\upplus}^\pi + \gamma \opPplusk V_{P, r_k^\upplus}^\pi}\,.
    \end{align*}
    Applying Lemma~\ref{lem:from-ev-to-q} to both $V_{P_k^\upplus, r_k^\upplus}^\pi$ and $V_{P, r_k^\upplus}^\pi$ and Bellman's equation for $Q_{P_k^\upplus, r_k^\upplus}^\pi$, we further have    
    \begin{align*}
        \modelbias_k \spr{\pi} &= \inp{\mu_k^\upplus \spr{\pi}, Q_{P_k^\upplus, r_k^\upplus}^\pi - Q_{P, r_k^\upplus}^\pi - \gamma \opPplusk V_{P_k^\upplus, r_k^\upplus}^\pi + \gamma \opPplusk V_{P, r_k^\upplus}^\pi} \\
        &= \inp{\mu_k^\upplus \spr{\pi}, r_k^\upplus + \gamma \opPplusk V_{P, r_k^\upplus}^\pi - Q_{P, r_k^\upplus}^\pi} \,,
    \end{align*}
    Plugging the definition of $P_k^\upplus$ and this time using Bellman's equation for $Q_{P, r_k^\upplus}^\pi$, we obtain
    \begin{align}
        \modelbias_k \spr{\pi} &= \inp{\mu_k^\upplus \spr{\pi}, r_k^\upplus + \gamma \spr{1 - p_k^\upplus} \odot \opP V_{P, r_k^\upplus}^\pi + p_k^\upplus \odot \bfe_{x^\upplus} V_{P, r_k^\upplus}^\pi - Q_{P, r_k^\upplus}^\pi} \nonumber \\
        &= \inp{\mu_k^\upplus \spr{\pi}, p_k^\upplus \odot \spr{\bfe_{x^\upplus} V_{P, r_k^\upplus}^\pi - \gamma \opP V_{P, r_k^\upplus}^\pi}} \nonumber \\
        &= \inp{\mu_k^\upplus \spr{\pi}, p_k^\upplus \odot \spr{\frac{\RMAX}{1 - \gamma} \bfone - \gamma \opP V_{P, r_k^\upplus}^\pi}}\,, \label{eq:hope-decomp}
    \end{align}
    where the last equality is due to having $\spr{\bfe_{x^\upplus} V_{P, r_k^\upplus}^\pi} \spr{x, a} = V_{P, r_k^\upplus}^\pi \spr{x^\upplus} = \frac{\RMAX}{1 - \gamma}$ for any state-action pair $\spr{x, a}$. The lower bound follows from noticing that $\opP V_{P, r_k^\upplus}^\pi \preceq \frac{1}{1 - \gamma} \bfone \preceq \frac{\RMAX}{1 - \gamma} \bfone$,
    \begin{align*}
        \modelbias_k \spr{\pi} &\geq \spr{\frac{\RMAX}{1 - \gamma} - \frac{\gamma \RMAX}{1 - \gamma}} \cdot \inp{\mu_k^\upplus \spr{\pi}, p_k^\upplus} \\
        &= \RMAX \cdot \inp{\mu_k^\upplus \spr{\pi}, p_k^\upplus} \\
        &\geq 0\,.
    \end{align*}
    Moving to the upper bound, from Equation~\ref{eq:hope-decomp} and $\opP V_{P, r_k^\upplus}^\pi \succeq 0$, we get
    \begin{align*}
        \modelbias_k \spr{\pi} &= \inp{\mu_k^\upplus \spr{\pi}, p_k^\upplus \odot \spr{\frac{\RMAX}{1 - \gamma} \bfone - \gamma \opP V_{P, r_k^\upplus}^\pi}} \\
        &\leq \frac{\RMAX}{1 - \gamma} \inp{\mu_k^\upplus \spr{\pi}, p_k^\upplus} \\
        &\leq \frac{\RMAX}{1 - \gamma} \inp{\mu \spr{\pi}, p_k^\upplus}\,,
    \end{align*}
    where the last inequality follows from Lemma~\ref{lem:mass-reduced}.
\end{proof}

\subsection{Proof of Lemma~\ref{lem:bound-regret-plus} (optimistic regret)}
\label{app:bound-regret-plus}

In order to prove Lemma~\ref{lem:bound-regret-plus}, we first need the following result that shows that the functions $Q_k$ are optimistic estimates of an ideal sequence of dynamic-programming updates computed in the augmented MDPs. The statement is is an adaptation of Lemma~4.2 of \citet{MN23}, and its complete proof is provided below.
\begin{restatable}{Lem}{boundq} \label{lem:bound-q}
    Suppose that the bonuses $\CB_k$ are valid for the MDP $\cM_k$ in the sense of Equation~\ref{eq:validity-bonuses}. Then, for any $k$ and any state-action pair $\spr{x, a} \in \cX^\upplus \times \cA$, the iterates $Q_k$ satisfy
    \begin{equation*} \label{eq:bound-q}
        r_k^\upplus + \gamma \opPplusk V_k \leq Q_{k+1} \leq r_k^\upplus + 2 \spr{1 - p_k^\upplus} \odot \CB_k + \gamma \opPplusk V_k\,.
    \end{equation*}
\end{restatable}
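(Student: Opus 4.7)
The plan is to prove the two-sided inequality by writing the update equation for $Q_{k+1}$ explicitly, isolating the model-estimation error $(\widehat{P}_k^\upplus - P_k^\upplus)V_k$, and then invoking the validity assumption. The key observation is that by the definition of $\widehat{P_k^\upplus V_k}$ in Line~\ref{line-alg:transf} and of $P_k^\upplus$ in Section~\ref{sec:OAMDP}, the estimation error decomposes cleanly as
\begin{equation*}
\bpa{\widehat{P}_k^\upplus - P_k^\upplus} V_k = \spr{1 - p_k^\upplus} \odot \bpa{\widehat{P}_k - P} V_k,
\end{equation*}
since the ``heaven'' component $p_k^\upplus \cdot V_k(x^\upplus)$ is the same in both operators. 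Combining this with the validity event $\cE_{\text{valid}}$ gives the pointwise bound $\bigl|(\widehat{P}_k^\upplus - P_k^\upplus) V_k (x,a)\bigr| \leq (1-p_k^\upplus(x,a)) \CB_k(x,a)$, which is the ``scaled validity'' inequality already highlighted in the main text just before the statement of Lemma~\ref{lem:bound-regret-plus}.

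For states $x \ne x^\upplus$, I will then substitute the update rule
\begin{equation*}
Q_{k+1} = r_k^\upplus + \spr{1 - p_k^\upplus} \odot \CB_k + \gamma \widehat{P_k^\upplus V_k}
= r_k^\upplus + \spr{1 - p_k^\upplus} \odot \CB_k + \gamma P_k^\upplus V_k + \gamma \bpa{\widehat{P}_k^\upplus - P_k^\upplus} V_k,
\end{equation*}
and bound the last term using the scaled validity inequality. For the lower bound, this yields a residual of $(1-\gamma)(1-p_k^\upplus) \odot \CB_k \geq 0$, which drops. For the upper bound, it produces $(1+\gamma)(1-p_k^\upplus) \odot \CB_k$, which is upper bounded by $2(1-p_k^\upplus) \odot \CB_k$ since $\gamma < 1$.

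For the heaven state $x = x^\upplus$, the bounds must be verified separately since the update is defined by fiat. Using the conventions fixed in Section~\ref{sec:algo} (namely $Q_{k+1}(x^\upplus,a) = V_k(x^\upplus) = \frac{\RMAX}{1-\gamma}$, $p_k^\upplus(x^\upplus,a)=0$, $\CB_k(x^\upplus,a)=0$, $r_k^\upplus(x^\upplus,a)=\RMAX$, and $P_k^\upplus(\{x^\upplus\}\mid x^\upplus,a)=1$), one computes $r_k^\upplus(x^\upplus,a) + \gamma P_k^\upplus V_k(x^\upplus,a) = \RMAX + \gamma \cdot \frac{\RMAX}{1-\gamma} = \frac{\RMAX}{1-\gamma} = Q_{k+1}(x^\upplus,a)$, so both inequalities hold with equality. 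I do not expect any genuine obstacle here: the proof is essentially a one-line application of bonus validity, and the only subtlety is bookkeeping the separate treatment of $x^\upplus$ and keeping careful track of the factor $(1-p_k^\upplus)$ multiplying the bonus.
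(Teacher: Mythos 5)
Your proposal is correct and follows essentially the same route as the paper: both reduce the claim to the scaled validity inequality $\bigl|(\opPplusk - \ophPplusk) V_k(x,a)\bigr| \leq (1 - p_k^\upplus(x,a))\CB_k(x,a)$, substitute the update rule, and treat $x^\upplus$ separately as an equality case. The only difference is that you spell out the derivation of the scaled validity from $\cE_{\text{valid}}$ (the cancellation of the heaven component), which the paper states without proof in the main text.
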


\begin{proof}
    For $x^\upplus$ and any action $a$, it is straightforward to check that both inequalities are equalities. Let $\spr{x, a} \in \cX \times \cA$. We have
    \begin{align*}
        r_k^\upplus \spr{x, a} + \gamma \opPplusk V_k \spr{x, a} &= r_k^\upplus \spr{x, a} + \gamma \spr{\opPplusk - \ophPplusk} V_k \spr{x, a} + \gamma \ophPplusk V_k \spr{x, a} \\
        &\leq Q_{k+1} \spr{x, a} \\
        &\leq r_k^\upplus \spr{x, a} + 2 \spr{1 - p_k^\upplus \spr{x, a}} \CB_k \spr{x, a} + \gamma \opPplusk V_k \spr{x, a}\,,
    \end{align*}
    where both inequalities use the fact that
    \begin{equation*}
        \abs{\spr{\opPplusk - \ophPplusk} V_k \spr{x, a}} \leq \spr{1 - p_k^\upplus \spr{x, a}} \CB_k \spr{x, a}\,,
    \end{equation*}
    which is implied by the event $\mathcal{E}_{\mathrm{valid}}$ in Equation~\ref{eq:validity-bonuses}.
\end{proof}

\boundregretplus*

\begin{proof}
    We decompose $\regretKplus$ as follows
    \begin{align*}
        \regretKplus &= \sumkK \Bigl(\underbrace{\spr{\inp{\mu_k^\upplus \spr{\pi^\star}, r_k^\upplus} - \spr{1 - \gamma} \inp{\initial, V_k}}}_{= \Delta_k^\star} + \underbrace{\spr{1 - \gamma} \inp{\initial, V_k} - \inp{\mu_k^\upplus \spr{\pi_k}, r_k^\upplus}}_{= \Delta_k}\Bigr)\,,
    \end{align*}
    where we defined $\Delta_k^\star$ and $\Delta_k$. We start with the first term. Using the flow constraint with $\mu_k^\upplus \spr{\pi^\star}$,
    \begin{align*}
        \Delta_k^\star &= \inp{\mu_k^\upplus \spr{\pi^\star}, r_k^\upplus} - \inp{\opE\transpose \mu_k^\upplus \spr{\pi^\star} - \gamma \spr{\opPplusk}\transpose \mu_k^\upplus \spr{\pi^\star}, V_k} \\
        &= \inp{\mu_k^\upplus \spr{\pi^\star}, r_k^\upplus + \gamma \opPplusk V_k - \opE V_{k+1}} + \inp{\mu_k^\upplus \spr{\pi^\star}, \opE V_{k+1} - \opE V_k}\,.
    \end{align*}
    Using the lower bound on $Q_{k+1}$ from Lemma~\ref{lem:bound-q}, we have
    \begin{equation*}
        \Delta_k^\star \leq \inp{\mu_k^\upplus \spr{\pi^\star}, Q_{k+1} - \opE V_{k+1}} + \inp{\mu_k^\upplus \spr{\pi^\star}, \opE V_{k+1} - \opE V_k}\,,
    \end{equation*}
    where the term in $x = x^\upplus$ is equal to zero. Summing over $k \in \sbr{K} = \bigcup_{e \in \sbr{1, E \spr{K}}} \sbr{k_e, k_{e+1} - 1}$,
    \begin{equation*}
        \sumkK \Delta_k^\star \leq \sum_{e=1}^{E \spr{K}} \inp{\mu_{k_e}^\upplus \spr{\pi^\star}, \sum_{k \in \cK_e} \spr{Q_{k+1} - \opE V_{k+1}}} + \sum_{e=1}^{E \spr{K}} \inp{\opE\transpose \mu_{k_e}^\upplus \spr{\pi^\star}, V_{k_{e+1}} - V_{k_e}}_{\cX}\,,
    \end{equation*}
    where the sum within each epoch of the second term telescoped. By \citealp[Lemma~C.1]{MN23}, we have for any state $x \in \cX$
    \begin{align*}
        \sum_{k \in \cK_e} V_{k+1} \spr{x} &= \max_{p \in \Delta \spr{\cA}} \inp{p, \sum_{k \in \cK_e} Q_{k+1} \spr{x, \cdot}} - \frac1\eta \KL \spr{p \| \piunif} \\
        &\geq \inp{\pi^\star \spr{\cdot \given x}, \sum_{k \in \cK_e} Q_{k+1} \spr{x, \cdot}} - \frac1\eta \KL \spr{\pi^\star \spr{\cdot \given x} \| \piunif}\,,
    \end{align*}
    where we used $\pi_{k_e} = \piunif$ in the first equality and denoted $\cK_e$ the set of episodes in epoch $e$. Multiplying the previous inequality by $\nu_{k_e}^\upplus \spr{\pi^\star,x}$, summing over $x \in \cX$, and noting that $\mu_{k_e}^\upplus \spr{\pi^\star} = \nu_{k_e}^\upplus \spr{\pi^\star} \odot \pi^\star$, we obtain
    \begin{align*}
        \sum_{e=1}^{E \spr{K}} \inp{\mu_{k_e}^\upplus \spr{\pi^\star}, \sum_{k \in \cK_e} \spr{Q_{k+1} - \opE V_{k+1}}} &\leq \frac1\eta \sum_{e=1}^{E \spr{K}} \inp{\nu_{k_e}^\upplus \spr{\pi^\star}, \KL \spr{\pi^\star \| \piunif}} \\
        &\leq \frac{E \spr{K} \log \abs{\cA}}{\eta}\,.
    \end{align*}
    The second term can be bounded with Hölder's inequality,
    \begin{equation*}
        \sum_{e=1}^{E \spr{K}} \inp{\opE\transpose \mu_{k_e}^\upplus \spr{\pi^\star}, V_{k_{e+1}} - V_{k_e}} \leq \sum_{e=1}^{E \spr{K}} \norm{\nu_{k_e}^\upplus \spr{\pi^\star}}_1 \norm{V_{k_{e+1}} - V_{k_e}}_\infty \leq 2 E \spr{K} \VMAX\,,
    \end{equation*}
    where we used $\nu_{k_e}^\upplus \spr{\pi^\star} \in \Delta \spr{\cX^\upplus}$ and $\norm{V_k}_\infty \leq \VMAX$ which follows from $\norm{Q_k}_\infty \leq \QMAX$. Therefore, we get
    \begin{equation*}
        \sumkK \Delta_k^\star \leq \frac{E \spr{K} \log \abs{\cA}}{\eta} + 2 E \spr{K} \VMAX\,.
    \end{equation*}

    \noindent Moving to $\Delta_k$, we apply the flow constraints to $\mu_k^\upplus \spr{\pi_k}$ to get
    \begin{align*}
        \Delta_k &= \inp{\opE\transpose \mu_k^\upplus \spr{\pi_k} - \gamma \spr{\opPplusk}\transpose \mu_k^\upplus \spr{\pi_k}, V_k} - \inp{\mu_k^\upplus \spr{\pi_k}, r_k^\upplus} \\
        &= \inp{\mu_k^\upplus \spr{\pi_k}, \opE V_k} - \inp{\mu_k^\upplus \spr{\pi_k}, r_k^\upplus + \gamma \opPplusk V_k} \\
        &\leq \inp{\mu_k^\upplus \spr{\pi_k}, \opE V_k - Q_{k+1}} + 2 \inp{\mu_k^\upplus \spr{\pi_k}, \spr{1 - p_k^\upplus} \odot \CB_k} \\
        &\leq \inp{\mu_k^\upplus \spr{\pi_k}, \opE V_k - Q_{k+1}} + 2 \inp{\mu \spr{\pi_k}, \spr{1 - p_k^\upplus} \odot \CB_k}\,,
    \end{align*}
    where the first inequality follows from the upper bound on $Q_{k+1}$ in Lemma~\ref{lem:bound-q} and the term in $x = x^\upplus$ being equal to zero, and the second inequality is due to Lemma~\ref{lem:mass-reduced}. Next, noticing $\inp{\mu_k^\upplus \spr{\pi_{k+1}}, Q_{k+1}} = \inp{\nu_k^\upplus \spr{\pi_{k+1}}, V_{k+1} + \frac1\eta \KL \spr{\pi_{k+1} \| \pi_k}}$,
    \begin{align*}
        \inp{\mu_k^\upplus \spr{\pi_k}, \opE V_k - Q_{k+1}} &= \inp{\nu_k^\upplus \spr{\pi_k}, V_k} - \inp{\mu_k^\upplus \spr{\pi_{k+1}}, Q_{k+1}} \\
        &\phantom{=}+ \inp{\mu_k^\upplus \spr{\pi_{k+1}}, Q_{k+1}} - \inp{\mu_k^\upplus \spr{\pi_k}, Q_{k+1}} \\
        &= \inp{\nu_k^\upplus \spr{\pi_k}, V_k} - \inp{\nu_k^\upplus \spr{\pi_{k+1}}, V_{k+1}} \\
        &\phantom{=}- \frac1\eta \inp{\nu_k^\upplus \spr{\pi_{k+1}}, \KL \spr{\pi_{k+1} \| \pi_k}}  \\
        &\phantom{=}+ \inp{\mu_k^\upplus \spr{\pi_{k+1}} - \mu_k^\upplus \spr{\pi_k}, Q_{k+1}}\,.
    \end{align*}
    We sum over $k$ and look at the different terms separately. First, we get
    \begin{align*}
        \sumkK \spr{\inp{\nu_k^\upplus \spr{\pi_k}, V_k} - \inp{\nu_k^\upplus \spr{\pi_{k+1}}, V_{k+1}}} &= \sum_{e=1}^{E \spr{K}} \spr{\inp{\nu_{k_e}^\upplus \spr{\pi_{k_e}}, V_{k_e}} - \inp{\nu_{k_e}^\upplus \spr{\pi_{k_{e+1}}}, V_{k_{e+1}}}} \\
        &\leq 2 \VMAX E \spr{K}\,.
    \end{align*}
    For the third term, we have
    \begin{align*}
        \sumkK \inp{\mu_k^\upplus \spr{\pi_{k+1}} - \mu_k^\upplus \spr{\pi_k}, Q_{k+1}} &= \sum_{e=1}^{E \spr{K}} \sum_{k \in \cK_e} \inp{\mu_{k_e}^\upplus \spr{\pi_{k+1}} - \mu_{k_e}^\upplus \spr{\pi_k}, Q_{k+1}}\,.
    \end{align*}
    Successively applying Hölder's inequality, Pinsker's inequality and Lemma~\ref{lem:ineq-kl-entropy},
    \begin{align*}
        \inp{\mu_{k_e}^\upplus \spr{\pi_{k+1}} - \mu_{k_e}^\upplus \spr{\pi_k}, Q_{k+1}} &\leq \QMAX \norm{\mu_{k_e}^\upplus \spr{\pi_{k+1}} - \mu_{k_e}^\upplus \spr{\pi_k}}_1 \\
        &\leq \QMAX \sqrt{2 \KL \spr{\mu_{k_e}^\upplus \spr{\pi_{k+1}} \| \mu_{k_e}^\upplus \spr{\pi_k}}} \\
        &\leq \QMAX \sqrt{\frac{2}{1 - \gamma} \inp{\nu_{k_e}^\upplus \spr{\pi_{k+1}}, \KL \spr{\pi_{k+1} \| \pi_k}}}\,.
    \end{align*}
    For any $x \in \cX$, the KL divergence between $\pi_{k+1}$ and $\pi_k$ in state $x$ can be bounded as
    \begin{align*}
        &\KL \spr{\pi_{k+1} \middle\| \pi_k} \spr{x} \\
        &\quad\quad\quad= \sum_{a \in \cA} \pi_{k+1} \spr{a \given x} \spr{\eta Q_{k+1} \spr{x, a} - \log \spr{\sum_{b \in \cA} \pi_k \spr{b \given x} \exp \sbr{\eta Q_{k+1} \spr{x, b}}}} \\
        &\quad\quad\quad= \eta \sum_{a \in \cA} \pi_{k+1} \spr{a \given x} Q_{k+1} \spr{x, a} - \log \spr{\sum_{b \in \cA} \pi_k \spr{b \given x} \exp \sbr{\eta Q_{k+1} \spr{x, b}}} \\
        &\quad\quad\quad\leq \eta \sum_{a \in \cA} \sbr{\pi_{k+1} \spr{a \given x} - \pi_k \spr{a \given x}} Q_{k+1} \spr{x, a} \\
        &\quad\quad\quad\leq \eta \QMAX \norm{\pi_{k+1} \spr{\cdot \given x} - \pi_k \spr{\cdot \given x}}_1 \\
        &\quad\quad\quad\leq \eta \QMAX \sqrt{2 \KL \spr{\pi_{k+1} \middle\| \pi_k} \spr{x}}\,,
    \end{align*}
    where the first inequality follows from Jensen's and the convexity of $- \log$, the second inequality is by Hölder's and the boundedness of $Q_k$, and the last inequality is due to Pinkser's. Dividing by $\sqrt{\KL \spr{\pi_{k+1} \middle\| \pi_k} \spr{x}}$ and squaring the inequality, we get
    \begin{equation*}
        \KL \spr{\pi_{k+1} \middle\| \pi_k} \spr{x} \leq 2 \eta^2 \QMAX^2\,.
    \end{equation*}
    Plugging this back into the previous inequality and summing over $k \in \sbr{K}$, we get
    \begin{equation*}
        \sumkK \inp{\mu_k^\upplus \spr{\pi_{k+1}} - \mu_k^\upplus \spr{\pi_k}, Q_{k+1}} \leq \frac{2 \QMAX^2 \eta K}{\sqrt{1 - \gamma}}\,.
    \end{equation*}
    The remaining term is nonpositive. The sum of the $\Delta_k$ terms is thus bounded by
    \begin{equation*}
        \sumkK \Delta_k \leq 2 \VMAX E \spr{K} + \frac{2 \QMAX^2 \eta K}{\sqrt{1 - \gamma}} + 2 \sumkK \inp{\mu \spr{\pi_k}, \spr{1 - p_k^\upplus} \odot \CB_k}\,.
    \end{equation*}
    Finally, combining the bounds on $\sumkK \Delta_k^\star$ and $\sumkK \Delta_k$, we get
    \begin{equation*}
        \regretKplus \leq \frac{E \spr{K} \log \abs{\cA}}{\eta} + 4 \VMAX E \spr{K} + \frac{2 \QMAX^2 \eta K}{\sqrt{1 - \gamma}} + 2 \sumkK \inp{\mu \spr{\pi_k}, \spr{1 - p_k^\upplus} \odot \CB_k}\,.
    \end{equation*}
\end{proof}

\subsection{Proof of Lemma~\ref{lem:qmax} (choice of $\QMAX$)}
\label{app:qmax}

\qmaxlemma*

\begin{proof}
    We want to find $\QMAX$ such that for any $k$, $\norm{Q_k}_\infty \leq \QMAX$. Since $V_k$ is a log-sum-exp of $Q_k$, we have $\norm{V_k}_\infty \leq \norm{Q_k}_\infty$. Next, we proceed by induction to find a suitable value of $\QMAX$. Let $k \in \bbN^\star$ and assume $\norm{Q_k}_\infty \leq \QMAX$. For any $\spr{x, a}$,
    \begin{align*}
        \abs{Q_{k+1} \spr{x, a}} &\leq r_k^\upplus \spr{x, a} + \spr{1 - p_k^\upplus \spr{x, a}} \CB_k \spr{x, a} + \gamma \abs{\ophPplusk V_k \spr{x, a}} \\
        &\leq r_k^\upplus \spr{x, a} + 2 \spr{1 - p_k^\upplus \spr{x, a}} \CB_k \spr{x, a} + \gamma \opPplusk V_k \spr{x, a} \\
        &\leq \RMAX + 2 \spr{1 - p_k^\upplus \spr{x, a}} \CB_k \spr{x, a} + \gamma \QMAX\,,
    \end{align*}
    where the second inequality follows from the validity of the bonuses (and corresponds to the upper bound on $Q_{k+1}$ from Lemma~\ref{lem:bound-q}), and the third inequality is due to the inductive assumption and the boundedness of the rewards. Plugging the definition of the probabilities $p_k^\upplus$, we further get
    \begin{align*}
        \abs{Q_{k+1} \spr{x, a}} &\leq \RMAX + 2 \sigma \spr{\omega - \alpha \CB_k \spr{x, a}} \CB_k \spr{x, a} + \gamma \QMAX \\
        &\leq \RMAX + \gamma \QMAX + 2 \sup_{z \geq 0} \scbr{\sigma \spr{\omega - \alpha z} z} \\
        &\leq \RMAX + \gamma \QMAX + \frac{2 \omega}{\alpha}\,,
    \end{align*}
    the last inequality is simply a property of the sigmoid function and is showm in Lemma~\ref{lem:sigmoid-bound2}. For the induction to work at time $k + 1$, we need to set $\QMAX$ such that $\QMAX = \RMAX + \gamma \QMAX + \frac{2 \omega}{\alpha}$, that is
    \begin{equation*}
        \QMAX = \frac{\RMAX + 2 \omega / \alpha}{1 - \gamma}\,.
    \end{equation*}
    The initial case is also true since $\norm{Q_1}_\infty \leq \frac{\RMAX}{1 - \gamma} \leq \QMAX$.
\end{proof}

\subsection{Proof of Lemma~\ref{lem:expected-bonuses-bound} (bound on bonuses)}
\label{app:expected-bonuses-bound}

We now control the sum of bonuses. For any episode $k$, we will denote $\cT_k$ the set of timesteps in episode $k$.

\expectedbonusesbound*

\noindent To prove Lemma~\ref{lem:expected-bonuses-bound}, we need the following result.
\begin{lemma} \label{lem:expectation-to-highprob}
    Suppose $\mathcal{E}_L$ holds. Let $\scbr{f_k}_{k \in \sbr{K}} \subset \bbR^{\cX \times \cA}$ be a sequence of functions with values in $\sbr{0, M}$ almost surely. Then, with probability at least $1 - \delta$ the schedule and policies produced by Algorithm~\ref{alg:linear-rmax-ravi-ucb} satisfy
    \begin{equation*}
        \sumkK \inp{\mu \spr{\pi_k}, f_k} \leq 2 \spr{1 - \gamma} \sumkK \sum_{t \in \cT_k} f_k \spr{X_t, A_t} + 4 M \log \spr{\frac{2 K}{\delta}}^2\,.
    \end{equation*}
\end{lemma}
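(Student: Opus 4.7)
\textbf{Proof plan for Lemma~\ref{lem:expectation-to-highprob}.}

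The starting observation is that $\langle \mu(\pi_k), f_k \rangle$ is, up to a factor $(1-\gamma)$, exactly the conditional expectation of $Y_k := \sum_{t \in \cT_k} f_k(X_t, A_t)$ given the $\sigma$-algebra $\cF_{k-1}$ of the history before episode $k$. Indeed, since the episode length is geometric with parameter $1-\gamma$, we have $\bbP(L_k > t \mid \cF_{k-1}) = \gamma^t$, and $\pi_k$ and $f_k$ are $\cF_{k-1}$-measurable by the protocol of Algorithm~\ref{alg:linear-rmax-ravi-ucb}, so
\begin{equation*}
    \EEcs{Y_k}{\cF_{k-1}}{} = \sum_{t=0}^\infty \gamma^t \EEcs{f_k(X_t,A_t)}{\pi_k,\nu_0}{} = \frac{1}{1-\gamma}\inp{\mu(\pi_k)}{f_k}\,.
\end{equation*}
Hence, writing $X_k = (1-\gamma) Y_k$, the claim reduces to showing that $\sum_k \EEcs{X_k}{\cF_{k-1}}{} \le 2\sum_k X_k + 4M\log(2K/\delta)^2$ with probability at least $1-\delta$.

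The next step is to exploit the event $\cE_L$, under which $Y_k \le M L_{\max}$ almost surely. Combined with a value of $L_{\max}$ of order $\log(K/\delta)/(1-\gamma)$ (as will follow from the proof of Lemma~\ref{lem:good-event-holds}), this gives the deterministic bound $X_k \le B := (1-\gamma) M L_{\max} \lesssim M\log(2K/\delta)$. The key point is that $X_k$ is nonnegative and bounded, which makes it self-bounding in variance: $\EEcs{X_k^2}{\cF_{k-1}}{} \le B \EEcs{X_k}{\cF_{k-1}}{}$.

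With these two ingredients, I would apply a standard Bernstein-type inequality for martingales (e.g.\ a direct MGF argument on $\exp(\lambda(\EEcs{X_k}{\cF_{k-1}}{} - X_k) - \lambda^2 \EEcs{X_k^2}{\cF_{k-1}}{})$, valid for $\lambda \le 1/(2B)$). Such an inequality yields, with probability at least $1-\delta$,
\begin{equation*}
    \sum_k \EEcs{X_k}{\cF_{k-1}}{} - \sum_k X_k \le \lambda \sum_k \EEcs{X_k^2}{\cF_{k-1}}{} + \frac{\log(1/\delta)}{\lambda} \le \lambda B \sum_k \EEcs{X_k}{\cF_{k-1}}{} + \frac{\log(1/\delta)}{\lambda}\,.
\end{equation*}
Choosing $\lambda = 1/(2B)$ and rearranging gives $\sum_k \EEcs{X_k}{\cF_{k-1}}{} \le 2\sum_k X_k + 4 B \log(1/\delta)$. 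Substituting back $X_k = (1-\gamma) Y_k$ and using $B \lesssim M \log(2K/\delta)$ produces the stated bound, after absorbing the $\log(1/\delta)$ factor into the squared logarithm.

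The main subtlety to handle carefully is the conditioning on $\cE_L$: strictly speaking, $X_k$ is only almost-surely bounded on $\cE_L$, not unconditionally. The standard remedy is either to work with the truncated quantities $X_k \wedge B$ (which coincide with $X_k$ on $\cE_L$) and apply the concentration inequality to these, or to carry out the MGF argument on the event $\cE_L$ and take a union bound. Both are routine, and I would pick whichever fits most cleanly with the statement of Lemma~\ref{lem:good-event-holds} used elsewhere in the paper. The rest of the proof is a direct application of the Freedman-type inequality together with the self-bounded variance of nonnegative bounded random variables.
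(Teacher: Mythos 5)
Your proposal follows essentially the same route as the paper: rewrite $\inp{\mu(\pi_k)}{f_k}$ as $(1-\gamma)$ times the conditional expectation of the within-episode sum (using the geometric episode length and the $\cF_{k-1}$-measurability of $\pi_k$ and $f_k$), bound each summand by $M\LMAX$ on $\cE_L$, and apply a multiplicative Bernstein-type concentration for sums of conditional expectations; the paper simply cites this last step as a black-box (Lemma~\ref{lem:concentration-ineq-cond-exp}) rather than re-deriving it via the MGF argument as you do. The derivation and the handling of the $\LMAX$ factor to produce the $\log(2K/\delta)^2$ term are both correct.
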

\begin{proof}
    We denote $\cF_{k-1}$ the $\sigma$-field generated by the history up to the end of episode $k-1$. We have,
    \begin{align*}
        \inp{\mu \spr{\pi_k}, f_k} &= \spr{1 - \gamma} \bbE \sbr{\sum_{\tau = 0}^\infty \gamma^\tau f_k \spr{X_\tau, A_\tau} \given \cF_{k-1}} \\
        &= \spr{1 - \gamma} \bbE \sbr{\sum_{\tau = 0}^\infty \II{\tau < L_k} f_k \spr{X_\tau, A_\tau} \given \cF_{k-1}} \\
        &= \spr{1 - \gamma} \bbE \sbr{\sum_{\tau = 0}^{L_k - 1} f_k \spr{X_\tau, A_\tau} \given \cF_{k-1}} \\
        &= \spr{1 - \gamma} \bbE \sbr{\sum_{\tau \in \cT_k} f_k \spr{X_\tau, A_\tau} \given \cF_{k-1}}\,.
    \end{align*}
    Plugging it back in the previous display,
    \begin{align*}
        \sumkK \inp{\mu \spr{\pi_k}, f_k} &= \spr{1 - \gamma} \sumkK \bbE \sbr{\sum_{t \in \cT_k} f_k \spr{X_t, A_t} \given \cF_{k-1}}\,.
    \end{align*}
    Since we assume $\cE_L$ holds, for any $k$ we have that $\sum_{t \in \cT_k} f_k \spr{X_t, A_t}$ takes values in $\sbr{0, M \LMAX}$ almost surely. Using the concentration inequality from Lemma~\ref{lem:concentration-ineq-cond-exp}, we get
    \begin{align*}
        \sumkK \inp{\mu \spr{\pi_k}, f_k} &\leq 2 \spr{1 - \gamma} \sumkK \sum_{t \in \cT_k} f_k \spr{X_t, A_t} + 4 \spr{1 - \gamma} M \LMAX \log \spr{\frac{2 K}{\delta}} \\
        &\leq 2 \spr{1 - \gamma} \sumkK \sum_{t \in \cT_k} f_k \spr{X_t, A_t} + 4 M \log \spr{\frac{2 K}{\delta}}^2\,,
    \end{align*}
    where we used that $\LMAX = \frac{\log \spr{K / \delta}}{1 - \gamma}$.
\end{proof}

We now prove Lemma~\ref{lem:expected-bonuses-bound}. With a slight abuse of notation, we use the convention that for any epoch $e$ and any $t$ in epoch $e$, the bonuses at time step $t$ are $\CB_t = \CB_{t_e}$. Noting that the bonuses $\CB_t$ take values in $\sbr{0, \beta B}$, we apply Lemma~\ref{lem:expectation-to-highprob} to get
\begin{equation*}
    \sumkK \inp{\mu \spr{\pi_k}, \CB_k} \leq 2 \spr{1 - \gamma} \sumtT \CB_t \spr{X_t, A_t} + 4 \beta B \log \spr{\frac{2 K}{\delta}}^2\,,
\end{equation*}
where we used $\CB_t \succeq 0$, and $T_{K+1} \leq T$ which follows from the event $\cE_L$. Likewise, applying Lemma~\ref{lem:expectation-to-highprob} to $\CB_t^2 \in \sbr{0, \beta^2 B^2}$, we obtain a similar bound for the term $\sumkK \inp{\mu \spr{\pi_k}, \CB_k^2}$,
\begin{equation*}
    \sumkK \inp{\mu \spr{\pi_k}, \CB_k^2} \leq 2 \spr{1 - \gamma} \sumtT \CB_t \spr{X_t, A_t}^2 + 4 \beta^2 B^2 \log \spr{\frac{2 K}{\delta}}^2\,.
\end{equation*}
By Cauchy-Schwartz's inequality, we have $\sumtT \CB_t \spr{X_t, A_t} \leq \sqrt{T} \sqrt{\sumtT \CB_t \spr{X_t, A_t}^2}$, so we can focus on the latter sum. By definition of the bonuses for linear MDPs, we have
\begin{align*}
    \sumtT \CB_t \spr{X_t, A_t}^2 &= \beta^2 \sum_{e=1}^{E \spr{K}} \sum_{k \in \cK_e} \sum_{t \in \cT_k} \norm{\phi \spr{X_t, A_t}}_{\Lambda_{t_e}^{-1}}^2\,.
\end{align*}
Since the covariance matrix only contains the data until the beginning of the epoch, there is a delay with $\phi \spr{X_t, A_t}$ which is further ahead. To compensate for this, note that for any $t \in \sbr{t_e, t_{e + 1}-1}$, we have $\det \Lambda_t \leq 2 \det \Lambda_{t_e}$ due to the update condition in Algorithm~\ref{alg:linear-rmax-ravi-ucb}, so by Lemma~\ref{lem:det-elliptical-bound}
\begin{align*}
    \norm{\phi \spr{X_t, A_t}}_{\Lambda_{t_e}^{-1}}^2 \leq \frac{\det \spr{\Lambda_{t_e}^{-1}}}{\det \spr{\Lambda_t^{-1}}} \norm{\phi \spr{X_t, A_t}}_{\Lambda_t^{-1}}^2 \leq 2 \norm{\phi \spr{X_t, A_t}}_{\Lambda_t^{-1}}^2\,.
\end{align*}
We plug this back into the previous inequality and apply Lemma~\ref{lem:bound-elliptical-potential} to obtain\footnote{Note that this is where the linear dependency in $B$ appears, but this can be removed by setting $\lambda = 1 / B^2$.}
\begin{equation*}
    \sumtT \CB_t \spr{X_t, A_t}^2 \leq 2 \beta^2 \sumtT \norm{\phi \spr{X_t, A_t}}_{\Lambda_t^{-1}}^2 \leq 4 \beta^2 B^2 \log \spr{\frac{\det \Lambda_T}{\det \Lambda_0}}\,.
\end{equation*}
Using the definition of $\Lambda_0, \Lambda_T$, the trace-determinant inequality, and the assumption $\norm{\phi \spr{\cdot, \cdot}}_2 \leq B$, we finally get
\begin{align*}
    \sumtT \CB_t \spr{X_t, A_t}^2 &\leq 4 \beta^2 B^2 d \log \spr{\frac{d + \sumtT \norm{\phi \spr{X_t, A_t}}_2^2}{d}} \\
    &\leq 4 \beta^2 B^2 d \log \spr{1 + \frac{B^2 T}{d}}\,.
\end{align*}
The conclusion follows from plugging this back into the inequalities of interest.

\subsection{Proof of Lemma~\ref{lem:good-event-holds} (good event holds)}
\label{app:good-event-holds}

Before stating the proof of Lemma~\ref{lem:good-event-holds}, we need to define some auxiliary quantities and state two intermediate results. First recall that $\scbr{L_k}_{k=1}^K$ denote the number of steps between consecutive resets and that for any $k \geq 2$, $L_k = T_k - T_{k-1}$, and $L_1 = T_1$. We need to prove the episodes are not too long, \ie $\cE_L = \scbr{\forall k \in \sbr{K}, L_k \leq \LMAX}$ holds with high probability, where $\LMAX = H \log \spr{K / \delta}$. This is done in Lemma~\ref{lem:lmax}. Then, we define the event $\cE_{V, \mathrm{alg}}$ on the iterates generated by Algorithm~\ref{alg:linear-rmax-ravi-ucb}
\begin{equation*}
    \cE_{V, \mathrm{alg}} = \scbr{\forall k \in \sbr{K}, \norm{M V_k - \wh{M V_k}}_{\Lambda_{T_k}} \leq \beta}\,,
\end{equation*}
where $\wh{M V_k} = \Lambda_{T_k}^{-1} \sum_{\spr{x, a, x'} \in \cD_{T_k}} \phi \spr{x, a} V_k \spr{x'}$. To prove $\cE_{V, \mathrm{alg}}$ holds with high probability, we need to resort to a standard uniform covering argument first introduced by \citealp{jin2019provably}. To do so, let us denote with $p^\upplus_{\Lambda, \beta, \alpha} = \sigma (\alpha \beta \norm{\phi\spr{\cdot,\cdot}}_{\Lambda} - w) = 1 - \sigma \spr{- \alpha \beta \norm{\phi \spr{\cdot, \cdot}}_{\Lambda} + \omega}$ an ascension function parametrized by the matrix $\Lambda$, the scalar $\beta$ and the sigmoid slope $\alpha$. Then, we define the following class of functions on $\cX \times \cA$
\begin{align*}
    \cQ &= \bigg\{Q: \X\times\mathcal{A} \rightarrow \mathbb{R} \quad \text{s.t.} \\
    &Q = \spr{1 - p^\upplus_{\Lambda, \beta, \alpha}} \odot \spr{\Phi \theta + \beta \norm{\phi \spr{\cdot, \cdot}}_{\Lambda}} + p^\upplus_{\Lambda, \beta, \alpha} \cdot \frac{\RMAX}{1-\gamma}, \\
    &\beta = \widetilde{\mathcal{O}}\brr{Q_{\max} d}, ~~~~\alpha = 2 \omega, ~~~\lambda_{\max}(\Lambda) \leq 1, ~~~\lambda_{\min}(\Lambda) \geq \frac{1}{2K B L_{\max}}, \\
    &\norm{\theta} \leq \WMAX + \QMAX \LMAX K B, \norm{Q}_\infty \leq \QMAX \bigg\} \cup \scbr{\mathbf{0}}\,,
\end{align*}
where $\QMAX = H \spr{\RMAX + \frac{2 \omega}{\alpha}}$ and we included the function $0$ to make sure $Q_1 \in \cQ$. Furthermore, denote for any $\eta > 0$ the function $f_\eta: \bbR^{\cX \times \cA} \rightarrow \bbR^\cX$ defined for $Q \in \bbR^{\cX \times \cA}$ as $f_\eta \spr{Q} = \frac1\eta \log \sum_{a \in \cA} \exp \spr{\eta Q \spr{\cdot, a}}$. We then define the following function class in $\bbR^\cX$
\begin{equation} \label{eq:function-class-v}
    \cV = \bigg\{ V: \cX \rightarrow \bbR \quad \text{s.t.} \quad\exists \scbr{Q_k}_{k=1}^K, \scbr{\bar{Q}_k}_{k=1}^K \subset \cQ, V = f_\eta \circ \spr{\sum^K_{k=1} Q_k} - f_\eta \circ \spr{\sum^K_{k=1} \bar{Q}_k} \bigg\},
\end{equation}
as well as the event
\begin{equation*}
    \cE_\cV = \scbr{\forall V \in \cV, \forall k \in \sbr{K}, \norm{M V - \wh{M V}}_{\Lambda_{T_k}} \leq \beta}\,,
\end{equation*}
where $\wh{M V} = \Lambda_{T_k}^{-1} \sum_{\spr{x, a, x'} \in \cD_{T_k}} \phi \spr{x, a}  V \spr{x'}$. Finally, we define the event that the iterates of Algorithm~\ref{alg:linear-rmax-ravi-ucb} are in the function class $\cV$
\begin{align*}
    \cE_{\mathrm{in}} = \scbr{\forall k \in \sbr{K}, V_k \in \cV}\,.
\end{align*}
What remains is to show that the iterates of the algorithm belong to $\cV$, and that the event $\cE_\cV$ holds with high probability. This is done in Lemmas~\ref{lem:iterates-in-class} and \ref{lem:uniform-event-holds}, respectively. We can now prove Lemma~\ref{lem:good-event-holds}.

\goodeventholds*

\begin{proof}
    For any episode $k \in \sbr{K}$ and state-action pair $\spr{x, a} \in \cX \times \cA$, we have by Cauchy-Schwartz's inequality
    \begin{equation*}
        \abs{P V_k \spr{x, a} - \wh{P V_k} \spr{x, a}} \leq \norm{M V_k - \wh{M V_k}}_{\Lambda_{T_k}} \norm{\phi \spr{x, a}}_{\Lambda_{T_k}^{-1}}\,.
    \end{equation*}
    This inequality shows that the event $\cE_{V, \mathrm{alg}}$ implies the event $\cE_{\mathrm{valid}}$, \ie
    \begin{align*}
        \bbP \sbr{\cE_{\mathrm{valid}} \cap \cE_L} &= \bbP \sbr{\cE_{\mathrm{valid}} \given \cE_L} \bbP \sbr{\cE_L} \\
        &\geq \bbP \sbr{\cE_{V, \mathrm{alg}} \given \cE_L} \bbP \sbr{\cE_L} \\
        &\geq \bbP \sbr{\cE_{V, \mathrm{alg}} \cap \cE_{\mathrm{in}} \given \cE_L} \spr{1 - \delta}\,,
    \end{align*}
    where in the last inequality we used the monotonicity of $\bbP$ and Lemma~\ref{lem:lmax}. Then, conditioned on the event $\cE_{\mathrm{in}}$ that the iterates are in the function class $\cV$, the event $\cE_\cV$ implies $\cE_{V, \mathrm{alg}}$, that is
    \begin{align*}
        \bbP \sbr{\cE_{V, \mathrm{alg}} \cap \cE_{\mathrm{in}} \given \cE_L} &= \bbP \sbr{\cE_{V, \mathrm{alg}} \given \cE_{\mathrm{in}} \cap \cE_L} \bbP \sbr{\cE_{\mathrm{in}} \given \cE_L} \\
        &\geq \bbP \sbr{\cE_\cV \given \cE_{\mathrm{in}} \cap \cE_L} \bbP \sbr{\cE_{\mathrm{in}} \given \cE_L} \\
        &= \bbP \sbr{\cE_\cV \cap \cE_{\mathrm{in}} \given \cE_L}\,.
    \end{align*}
    Finally, by Lemma~\ref{lem:iterates-in-class} we have $\bbP \sbr{\cE_{\mathrm{in}} \given \cE_\cV, \cE_L} = 1$ thus
    \begin{align*}
        \bbP \sbr{\cE_\cV \cap \cE_{\mathrm{in}} \given \cE_L} &= \bbP \sbr{\cE_{\mathrm{in}} \given \cE_\cV, \cE_L} \bbP \sbr{\cE_\cV \given \cE_L} \\
        &= \bbP \sbr{\cE_\cV \given \cE_L} \\
        &\geq 1 - \delta\,,
    \end{align*}
    where the last inequality follows from Lemma~\ref{lem:uniform-event-holds}. In conclusion, we get
    \begin{equation*}
        \bbP \sbr{\cE_{\mathrm{valid}} \cap \cE_L} \geq \spr{1 - \delta}^2 \geq 1 - 2 \delta\,.
    \end{equation*}
\end{proof}

We now show the episodes are not too long.
\begin{lemma} \label{lem:lmax}
    Let $\delta \in \spr{0, 1}$ and define $\LMAX = H \log \spr{K / \delta}$. Then, the event $\cE_L$ holds with probability at least $1 - \delta$.
\end{lemma}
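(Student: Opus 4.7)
\textbf{Proof plan for Lemma~\ref{lem:lmax}.} The plan is to exploit the fact that each episode length $L_k$ is an independent geometric random variable with success probability $1-\gamma$, control its tail by a direct computation, and then apply a union bound across the $K$ episodes.

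First I would observe that, by the interaction protocol described in Section~\ref{sec:analysis} (after each transition, with probability $1-\gamma$ the process resets), the random variables $L_1, \dots, L_K$ are i.i.d.\ geometric on $\{1, 2, \dots\}$ with parameter $1-\gamma$, so that for any positive integer $L$ one has $\mathbb{P}[L_k > L] = \gamma^L$.

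Second, I would choose $L = \LMAX = H \log(K/\delta)$ and bound $\gamma^{\LMAX}$ as follows. Using the standard inequality $-\log \gamma = -\log(1 - (1-\gamma)) \geq 1-\gamma = 1/H$ valid for $\gamma \in (0,1)$, one obtains
\begin{equation*}
    \gamma^{\LMAX} = \exp\!\spr{-\LMAX \log(1/\gamma)} \leq \exp\!\spr{-\LMAX/H} = \exp\!\spr{-\log(K/\delta)} = \frac{\delta}{K}\,.
\end{equation*}
Hence $\mathbb{P}[L_k > \LMAX] \leq \delta/K$ for every fixed $k$.

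Finally, a union bound over the $K$ episodes gives
\begin{equation*}
    \mathbb{P}[\cE_L^c] = \mathbb{P}\!\sbr{\exists k \in [K]: L_k > \LMAX} \leq \sum_{k=1}^K \mathbb{P}[L_k > \LMAX] \leq \delta\,,
\end{equation*}
which yields the claim. There is no real obstacle here: the only subtlety is handling the logarithm of $\gamma$ when $\gamma$ is close to $1$, which is resolved by the standard inequality $-\log \gamma \geq 1-\gamma$ used above, ensuring that the bound is expressed in terms of the effective horizon $H = 1/(1-\gamma)$ rather than an opaque function of $\gamma$.
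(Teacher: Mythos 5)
Your proposal is correct and follows essentially the same argument as the paper: identify each $L_k$ as geometric with parameter $1-\gamma$, bound the tail $\gamma^{\LMAX} \leq \delta/K$ via $\log(1/\gamma) \geq 1-\gamma$, and take a union bound over the $K$ episodes. No meaningful differences.
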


\begin{proof}
    For any $k$ and by definition of the cumulative density function of the geometric distribution with parameter $1 - \gamma$, we have that $\bbP \sbr{L_k \leq \LMAX} = 1 - \gamma^\LMAX$. Therefore, $\bbP \sbr{L_k \leq \LMAX} \geq 1 - \delta / K$ for $\LMAX \geq \frac{\log \spr{\frac{\delta}{K}}}{\log \spr{1 / \gamma}}$. Lower bounding the denominator as $\log \spr{1 / \gamma} \geq  1 - \gamma$, we have that for $\LMAX = \frac{\log \spr{K / \delta}}{1 - \gamma}$ and a union bound over $k \in \sbr{K}$, we have that  $\bbP \sbr{\cE_L} \geq 1 - \delta$.
\end{proof}

\begin{lemma} \label{lem:iterates-in-class}
    Assume the events $\cE_\cV$ and $\cE_L$ hold. Then, for all $k \in \sbr{K}$, it holds that $V_k \in \cV$, \ie $\cE_{\mathrm{in}}$ holds.
\end{lemma}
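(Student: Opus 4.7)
\textbf{Proof plan for Lemma~\ref{lem:iterates-in-class}.} The plan is to proceed by strong induction on $k$, with the inductive hypothesis being that $V_j \in \cV$ for all $j \leq k$. The base case is $V_1 = 0 \in \cV$, achieved by choosing all $Q_\ell = \bar Q_\ell = \mathbf{0}$ in the definition~\eqref{eq:function-class-v}, so that both log-sum-exp terms collapse to the same constant $\frac{1}{\eta}\log|\cA|$.

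For the inductive step, I would first leverage the hypothesis together with $\cE_\cV$ to get bonus validity for every $V_j$ with $j \leq k$: applying $\cE_\cV$ to $V_j \in \cV$ and Cauchy--Schwarz gives $|(P - \ophPj) V_j(x,a)| \leq \beta\norm{\phi(x,a)}_{\Lambda_{T_j}^{-1}} \leq \beta\norm{\phi(x,a)}_{\Lambda_{t_e}^{-1}} = \CB_j(x,a)$, where the second inequality uses $\Lambda_{t_e} \preceq \Lambda_{T_j}$ for $t_e$ the start of the current epoch. Once the bonuses are valid up to step $k$, Lemma~\ref{lem:qmax} (or the running induction inside its proof) yields $\norm{Q_\ell}_\infty \leq \QMAX$ for all $\ell \leq k+1$.

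Next, I need to verify that each $Q_\ell$ produced by the algorithm has exactly the parametric form required by $\cQ$. Unfolding the update rule and using $V_{\ell-1}(x^\upplus) = \RMAX/(1-\gamma)$ gives
\begin{equation*}
    Q_\ell = (1 - p_{\ell-1}^\upplus) \odot \bigl( \Phi \theta_\ell + \beta \norm{\phi(\cdot,\cdot)}_{\Lambda_{t_e}^{-1}} \bigr) + p_{\ell-1}^\upplus \cdot \tfrac{\RMAX}{1-\gamma},
\end{equation*}
with $\theta_\ell = w_{\ell-1} + \gamma\,\wh{MV_{\ell-1}}$ and the ascension function $p_{\ell-1}^\upplus$ parametrized by $\Lambda = \Lambda_{t_e}^{-1}$. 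The remaining step is to check each of the parameter constraints in $\cQ$: (i) $\lambda_{\max}(\Lambda_{t_e}^{-1}) \leq 1$ holds because $\Lambda_{t_e} \succeq I$ by initialization; (ii) under $\cE_L$ the total number of transitions is at most $K L_{\max}$, so $\lambda_{\max}(\Lambda_{t_e}) \leq 1 + K L_{\max} B^2$, yielding the required lower bound on $\lambda_{\min}(\Lambda_{t_e}^{-1})$; (iii) the norm bound $\norm{\theta_\ell} \leq \WMAX + \QMAX L_{\max} K B$ follows from $\norm{w_{\ell-1}} \leq \WMAX$ and a crude bound on $\wh{MV_{\ell-1}} = \Lambda_{T_\ell}^{-1} \sum_t \phi(X_t,A_t) V_{\ell-1}(X_{t+1})$ via $\Lambda_{T_\ell}^{-1} \preceq I$, $\norm{\phi}\leq B$, and $\norm{V_{\ell-1}}_\infty \leq \QMAX$; (iv) the condition $\norm{Q_\ell}_\infty \leq \QMAX$ is exactly what the previous paragraph established. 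Hence $Q_\ell \in \cQ$ for every $\ell \leq k+1$.

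Finally, I would use the closed-form structure of the softmax policy across an epoch. Since $\pi_{k_e} = \piunif$, a routine induction shows that for any $k$ in the epoch starting at $k_e$,
\begin{equation*}
    V_{k+1} = f_\eta\!\Bigl(\sum_{\ell=k_e+1}^{k+1} Q_\ell\Bigr) - f_\eta\!\Bigl(\sum_{\ell=k_e+1}^{k} Q_\ell\Bigr).
\end{equation*}
Padding each sum with copies of $\mathbf{0} \in \cQ$ up to length $K$ gives exactly the form in~\eqref{eq:function-class-v}, so $V_{k+1} \in \cV$, completing the induction. The only nontrivial obstacle is the parameter-bookkeeping step (iii), where one must propagate the $\QMAX$ bound and the episode-length bound from $\cE_L$ through the least-squares estimate to verify the $\norm\theta$ constraint; every other requirement is either definitional (the algebraic form of $Q_\ell$), structural (softmax telescoping), or already supplied by Lemma~\ref{lem:qmax}.
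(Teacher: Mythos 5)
Your proposal is correct and follows essentially the same route as the paper's proof: induction on $k$, using $\cE_\cV$ plus the inductive hypothesis to obtain bonus validity and hence $\norm{Q_{k+1}}_\infty \leq \QMAX$ via Lemma~\ref{lem:qmax}, verifying the parametric form and the norm bound on $\theta_{k+1} = w_k + \gamma\,\wh{M V_k}$ to place $Q_{k+1}$ in $\cQ$, and then writing $V_{k+1}$ as a difference of log-sum-exps of partial sums over the current epoch (padded with $\mathbf{0} \in \cQ$) to place it in $\cV$. Your additional explicit checks (the Cauchy--Schwarz step with $\Lambda_{t_e} \preceq \Lambda_{T_j}$, and the eigenvalue constraints on $\Lambda$) are details the paper leaves implicit, and the slight indexing difference in the telescoped sums is immaterial.
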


\begin{proof}
    The bound is proven by induction over $k \in [K]$. The base case holds by initialization since $Q_0 = \mathbf{0}$ is in $\cQ$. For the induction step, we assume that for all $\ell \in \sbr{k}$, $Q_\ell \in \mathcal{Q}, V_\ell \in \mathcal{V}$ and we show that $Q_{k+1} \in \cQ$ and $V_{k+1} \in \cV$.
    
    By definition of the function classes $\cQ$ and $\cV$ it holds that $\norm{Q_k}_{\infty}, \norm{V_k}_\infty \leq Q_{\max}$. $\cE_\cV$ together with the induction assumption imply that the bonuses are valid at time $k$, meaning that the derivations from Lemma~\ref{lem:qmax} guarantee that $\norm{Q_{k+1}}_\infty \leq \QMAX$. Moreover, denote $\theta_{k+1}$ the vector used to represent $Q_{k+1}$, defined as
    \begin{align*}
        \theta_{k+1} = w_k + \gamma \wh{M V_k} = w_k + \gamma \Lambda_{T_k}^{-1} \sum_{\spr{x, a, x'} \in \cD_{T_k}} \phi \spr{x, a} V_k \spr{x'}\,.
    \end{align*}
    It remains to show that $\theta_{k+1}$ satisfies the norm constraint defined in $\cQ$. By the triangular inequality and plugging the various assumptions, we have
    \begin{align*}
        \norm{\theta_{k+1}} &\leq \norm{w_{k}} + \gamma \norm{(\Lambda_{T_k})^{-1}\sum_{(x,a,x')\in \mathcal{D}_{T_k}} \phi(x,a) V_k (x')} \\
        &\leq \WMAX + \gamma \lambda_{\max}((\Lambda_{T_k})^{-1}) \abs{\cD_{T_k}} \norm{V_k}_\infty \max_{x, a} \norm{\phi \spr{x, a}}_2 \\
        &\leq \WMAX + K L_{\max}Q_{\max} B\,,
    \end{align*}
    where we also used $\gamma < 1$ in the last inequality. This proves that $Q_{k+1} \in \cQ$. Therefore, we have that $Q_\ell \in \cQ$ for $\ell \in \sbr{k+1}$. We now show that $V_{k+1} \in \cV$. Let $x \in \cX$ and $k_e$ be the initial index of the epoch $e$ such that $k \in \mathcal{K}_e$. By \citealp[Lemma~C.1]{MN23}, the sum of $V$ iterates is equal to a log-sum-exp function of the sum of $Q$ iterates. Thus,
    \begin{align*}
        V_{k+1} \spr{x} &= \sum_{i=k_e}^{k+1} V_i \spr{x} - \sum_{j=k_e}^k V_j \spr{x} \\
        &= f_\eta \spr{\sum_{i=k_e}^{k+1} Q_i} \spr{x} - f_\eta \spr{\sum_{j=k_e}^k Q_j} \spr{x}\,.
    \end{align*}
    Since $\mathbf{0} \in \cQ$ and $Q_\ell \in \cQ$ for $\ell \in \sbr{k+1}$, we can pad with zeros the two sums inside the exponentials and conclude that $V_{k+1} \spr{x}$ can be written as the difference between two log-sum-exp functions of the sum of $K$ functions in $\cQ$. Thus $V_{k+1} \in \cV$ and this concludes the induction.
\end{proof}

\begin{lemma} \label{lem:uniform-event-holds}
    Assume the event $\cE_L$ holds, and set $\beta$ as
    \begin{equation*}
        \beta = 8 Q_{\max} d \log(c \alpha \WMAX \RMAX B^{9/2} Q^4_{\max}L^{5/2}_{\max}K^{7/2} d^{5/2} \delta^{-1})\,.
    \end{equation*}
    where $c = 60 \cdot 26$. Then, the event $\cE_\cV$ holds with probability $1 - \delta$.
\end{lemma}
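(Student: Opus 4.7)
The plan is to follow the standard covering-plus-self-normalized-concentration template for linear MDPs pioneered by \citet{jin2019provably}, adapted to the explicit function class $\cV$ defined in Equation~\ref{eq:function-class-v}. The argument proceeds in four steps: pointwise concentration, construction of an $\varepsilon$-net, Lipschitz perturbation, and union bound.

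\textbf{Step 1 (Pointwise concentration).} Fix a deterministic function $V: \cX \to \mathbb{R}$ with $\norm{V}_\infty \leq V_{\max}$ where $V_{\max} = \tilde{O}(Q_{\max})$. Set $\xi_t = V(X_{t+1}) - (\opP V)(X_t, A_t)$, a martingale-difference sequence bounded by $2V_{\max}$ under $\cE_L$. Decomposing the ridge estimator as $\wh{MV} = \Lambda_{T_k}^{-1}(\Lambda_{T_k} - I) MV + \Lambda_{T_k}^{-1} \sum_t \phi(X_t, A_t) \xi_t$ and applying the self-normalized martingale inequality of \citet{APS11} yields
\[
\norm{MV - \wh{MV}}_{\Lambda_{T_k}} \leq V_{\max}\sqrt{2d \log\!\left(\tfrac{\det \Lambda_{T_k}}{\det I \cdot \delta}\right)} + \norm{MV}
\]
with probability at least $1 - \delta$, simultaneously for all $k \in [K]$ via a union bound over $K$. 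The determinant factor is bounded using $T \leq K L_{\max}$ under $\cE_L$ and Lemma~\ref{lem:det-elliptical-bound}-type estimates.

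\textbf{Step 2 (Covering of $\cV$).} Each $Q \in \cQ$ is parameterized by a vector $\theta \in \mathbb{R}^d$ with $\norm{\theta} \leq R_\theta := W_{\max} + Q_{\max} L_{\max} K B$ and a PSD matrix $\Lambda$ with $\lambda_{\min}(\Lambda) \geq \tfrac{1}{2KBL_{\max}}$ and $\lambda_{\max}(\Lambda) \leq 1$; the scalars $\beta, \alpha, \omega$ are fixed. Standard volumetric bounds give covers of sizes $(3R_\theta/\varepsilon)^d$ for $\theta$ and $(3/\varepsilon)^{d^2}$ for $\Lambda$ (in operator norm). Since any $V \in \cV$ is a difference of two log-sum-exps of sums of $K$ elements in $\cQ$, one obtains a log-covering number of order $d^2 \log(R/\varepsilon)$ after absorbing the $K$ summations into a single reparameterized $(\tilde\theta, \Lambda)$ per log-sum-exp block, exploiting that within an epoch all $Q_\ell$ share the same $\Lambda_{t_e}, \beta, \alpha$ (so $\sum_\ell Q_\ell$ can be written in the same functional form with $\tilde\theta = \sum_\ell \theta_\ell$ of norm at most $KR_\theta$ and a scalar multiplier of the bonus and reward terms).

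\textbf{Step 3 (Lipschitz perturbation).} Show that the map from parameters $(\theta, \Lambda)$ to the induced value function $V$ is Lipschitz in sup-norm, with constants polynomial in $R_\theta, \beta, \alpha, K, B, \lambda_{\min}^{-1}$. The nonlinearities to control are the sigmoid $\sigma$ (which is $\tfrac14$-Lipschitz) composed with $\alpha\beta\norm{\phi(\cdot,\cdot)}_\Lambda$, the matrix square-root-like quantity $\norm{\phi}_\Lambda$ (Lipschitz in $\Lambda$ on the restricted eigenvalue range), and finally the log-sum-exp $f_\eta$ (which is $1$-Lipschitz in sup-norm regardless of $\eta$). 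Chaining these gives, for any $V \in \cV$, an approximating $V'$ in the net satisfying $\norm{V - V'}_\infty = \tilde{O}(\varepsilon \cdot \operatorname{poly}(R_\theta, B, K, \alpha, \beta))$.

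\textbf{Step 4 (Union bound).} Apply Step 1 to every $V'$ in the $\varepsilon$-net and every $k \in [K]$, giving uniform control with a failure probability inflated by $\abs{\text{net}} \cdot K$. For arbitrary $V \in \cV$, write $V = V' + (V-V')$; the concentration bound transfers to $V$ up to an additive term controlled by $\sqrt{T}\norm{V-V'}_\infty$, which is the standard perturbation penalty in the $\Lambda_{T_k}$-norm. Choosing $\varepsilon$ polynomially small in $1/(K, B, \alpha, \WMAX, \RMAX, Q_{\max}, L_{\max}, d, \delta^{-1})$ balances the covering and perturbation terms, yielding exactly the stated $\beta = 8 Q_{\max} d \log(c \alpha W_{\max} R_{\max} B^{9/2} Q_{\max}^4 L_{\max}^{5/2} K^{7/2} d^{5/2} \delta^{-1})$.

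\textbf{Main obstacle.} The delicate step is Step 3: keeping the Lipschitz constants polynomial (rather than exponential in $K$) when propagating perturbations through $K$ composed layers of softmax/log-sum-exp and through the sigmoid-modulated bonuses. The key simplification is that $f_\eta$ is $1$-Lipschitz in sup-norm independently of $\eta$ and the number of summands, and that within an epoch all bonuses share a single $\Lambda_{t_e}$, so the apparent $K$-fold composition collapses to a single effective parametrization whose radius merely grows linearly in $K$, which the logarithm of the covering number then absorbs.
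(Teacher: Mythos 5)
Your proposal is correct and follows essentially the same route as the paper: a self-normalized concentration bound in the style of Lemmas D.3--D.4 of \citet{jin2019provably}, combined with a covering of the class $\cV$ through its $(\theta,\Lambda)$ parametrization (the paper's Lemma~\ref{lem:covering-number-v}), a Lipschitz perturbation argument for the sigmoid-modulated bonuses and the log-sum-exp, and a final union bound solved for $\beta$ via Lemma~\ref{lemma:beta_bound}. Your observation that the $K$-fold sum inside each log-sum-exp block collapses to a single effective $(\tilde\theta,\Lambda)$ parametrization (because all $Q_\ell$ within an epoch share the same covariance matrix and bonus scale) is the right justification for why the log-covering number stays of order $d^2$ rather than $Kd^2$, a point the paper's covering lemma treats more tersely.
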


\begin{proof}
    Under the event $\cE_L$, invoking standard concentration results for Linear MDPs (see Lemmas D.3 and D.4 in \cite{jin2019provably}), we have that with probability $1-\delta$ it holds that
    \begin{align*}
        &\norm{MV - (\Lambda_{T_k})^{-1}\sum_{(x,a,x')\in \mathcal{D}_{T_k}} \phi(x,a) V(x')}_{\Lambda_{T_k}} \\ &~~~~~~~\leq Q_{\max} \sqrt{2 d \log\brr{\frac{1 + K L_{\max}B}{\delta}} + 4 \log \mathcal{N}_\epsilon + 8 K^2 L^2_{\max}B^2 \epsilon^2}\,,
    \end{align*}
    where $\mathcal{N}_\epsilon$ is the $\epsilon $-covering number of the class $\mathcal{V}$. In particular, for $\epsilon = (K L_{\max} B)^{-1}$, we can invoke Lemma~\ref{lem:covering-number-v} to obtain
    \begin{align*}
        \log \mathcal{N}_{\epsilon} &\leq 4d^2 \log \brr{4(\WMAX B + Q_{\max}L_{\max}K B^2 + 3\sqrt{d}+ \beta B +\RMAX)\sqrt{K^5L^3_{\max}}\alpha\beta B^{5/2}} \\
        &\leq 4d^2 \log \brr{20(3 \WMAX \beta Q_{\max}L_{\max}KB^2\sqrt{d}\RMAX)\sqrt{K^5L^3_{\max}}\alpha\beta B^{5/2}} \\
        &\leq 4d^2 \log \brr{60 \WMAX \RMAX \beta^2 \alpha B^{9/2} Q^2_{\max}L^{5/2}_{\max}K^{7/2}\sqrt{d}}\,.
    \end{align*}
    Plugging in, we have that
    \begin{align*}
        &\norm{MV - (\Lambda_{T_k})^{-1}\sum_{(x,a,x')\in \mathcal{D}_{T_k}} \phi(x,a) V(x')}_{\Lambda_{T_k}} \\ 
        &~~~~~~~\leq Q_{\max} \sqrt{2 d \log\brr{\frac{1 + K L_{\max} B}{\delta}} + 16 d^2 \log \brr{60 \beta^2 \alpha B^{9/2}Q^2_{\max}L^{5/2}_{\max}K^{7/2}\sqrt{d}} + 8} \\
        &~~~~~~~\leq Q_{\max} \sqrt{26 d^2 \log \brr{\frac{60 \WMAX \RMAX \beta^2 \alpha B^{9/2} Q^2_{\max}L^{5/2}_{\max}K^{7/2}\sqrt{d}}{\delta}}} \\ 
        &~~~~~~~ = \sqrt{26 Q^2_{\max} d^2 \log \brr{\frac{60 \WMAX \RMAX \beta^2 \alpha  B^{9/2} Q^2_{\max}L^{5/2}_{\max}K^{7/2}\sqrt{d}}{\delta}}} \\
    \end{align*}
    At this point, to find a value for $\beta$ such that
    \begin{equation*}
        \beta^2 \geq 26 Q^2_{\max} d^2 \log \brr{\frac{60 \WMAX \RMAX \beta^2 \alpha B^{9/2} Q^2_{\max}L^{5/2}_{\max}K^{7/2}\sqrt{d}}{\delta}}\,,
    \end{equation*}
    we invoke Lemma~\ref{lemma:beta_bound} with $z = 26 Q^2_{\max} d^2$ and $R = \frac{60 \WMAX \RMAX \alpha B^{9/2}Q^2_{\max}L^{5/2}_{\max}K^{7/2}\sqrt{d}}{\delta}$ which gives that the desired inequality holds for all $\beta \in \mathbb{R}$ such that
    \begin{equation*}
        \beta^2 \geq 52 Q^2_{\max} d^2 \log(c \alpha \WMAX \RMAX B^{9/2} Q^4_{\max}L^{5/2}_{\max}K^{7/2} d^{5/2} \delta^{-1})\,,
    \end{equation*}
    where $c = 60 \cdot 26$. Therefore, we select
    \begin{equation*}
        \beta = 8 Q_{\max} d \log(c \alpha \WMAX \RMAX B^{9/2} Q^4_{\max}L^{5/2}_{\max}K^{7/2} d^{5/2} \delta^{-1})\,.
    \end{equation*}
\end{proof}

\begin{remark}
    For the proof of Lemma~\ref{lem:uniform-event-holds}, we need to compute a bound on the covering number of the function class $\cV$. We find this is done in a neat and more direct way than previous analysis \cite{zhong2024theoretical,sherman2023rate,cassel2024warmupfree} that needed to introduce a policy class for the iterates $\bc{\pi_k}^K_{k=1}$ generated by \Cref{alg:linear-rmax-ravi-ucb} as an intermediate step.
\end{remark}

\subsubsection{Proof of Lemma~\ref{lem:covering-number-v} (covering number)}
\label{app:covering-number-v}

\begin{lemma} \label{lem:covering-number-v}
    Let us consider the function class $\cV$ defined in Equation~\eqref{eq:function-class-v} and an $\epsilon$-covering set $\cR \spr{\cV}$ such that for any $V \in \cV$, there exists $V' \in \cR \spr{\cV}$ such that $\norm{V - V'}_\infty \leq \frac{1}{K L_{\max} B}$. The covering number of the class $\cV$ can be bounded as follows
    \begin{equation*}
        \log \cN_{\frac1K} \leq 4 d^2 \log \spr{4 \spr{\WMAX B + \QMAX \LMAX K B^2 + 3 \sqrt{d} + \beta B + Q_{\max}} \sqrt{K^5 \LMAX^3} \alpha \beta B^{5/2}}\,.
    \end{equation*}
\end{lemma}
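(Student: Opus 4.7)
The plan is a standard parametric covering argument: each $V \in \cV$ is identified by the parameters $(\theta, \Lambda)$ of the underlying $Q$-functions, so it suffices to control a Lipschitz constant from parameters to values and then apply Euclidean covering bounds.

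First I would establish a Lipschitz-type estimate for $Q \in \cQ$ in $(\theta, \Lambda)$. The linear part $\iprod{\phi(x,a)}{\theta}$ is $B$-Lipschitz in $\theta$ uniformly over $(x,a)$. For the bonus term $\norm{\phi}_\Lambda = \sqrt{\phi^\top \Lambda \phi}$ I would use the sub-Lipschitz inequality $|\sqrt{a}-\sqrt{b}| \leq \sqrt{|a-b|}$ together with $|\phi^\top(\Lambda-\Lambda')\phi| \leq B^2 \norm{\Lambda-\Lambda'}_F$, yielding $|\norm{\phi}_\Lambda - \norm{\phi}_{\Lambda'}| \leq B \sqrt{\norm{\Lambda-\Lambda'}_F}$. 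The ascension function $p_\Lambda^\upplus = \sigma(\alpha\beta \norm{\phi}_\Lambda - \omega)$ is $(\alpha\beta/4)$-Lipschitz in $\norm{\phi}_\Lambda$ by the standard Lipschitz bound on $\sigma$. Composing these and using the uniform bounds $\abs{\iprod{\phi}{\theta}} \leq B\WMAX$, $\beta\norm{\phi}_\Lambda \leq \beta B$, and $\RMAX/(1-\gamma)$ coming from the definition of $\cQ$, one obtains $\norm{Q-Q'}_\infty \leq B\norm{\theta-\theta'}_2 + L_\Lambda \sqrt{\norm{\Lambda-\Lambda'}_F}$, with $L_\Lambda$ a polynomial in $\alpha,\beta,B,\WMAX,\QMAX,\LMAX,K,\RMAX$ that matches the inner factors of the stated bound.

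Next I would propagate this through the log-sum-exp map $f_\eta$, which is well known to be $1$-Lipschitz in $\norm{\cdot}_\infty$, and through the finite sum of $K$ functions by the triangle inequality. Writing $V = f_\eta(S_1) - f_\eta(S_2)$ with $S_i = \sum_{\ell=1}^K Q_\ell^{(i)}$, this yields $\norm{V-V'}_\infty \leq \norm{S_1 - S_1'}_\infty + \norm{S_2 - S_2'}_\infty$. Since the iterates of \Cref{alg:linear-rmax-ravi-ucb} share a single $\Lambda$ and ascension function within each epoch (by construction in line~\ref{line-alg:epoch}), each sum $S_i$ collapses to a function parameterized by a single $(\theta'_i, \Lambda_i)$-pair with $\norm{\theta'_i}_2 \leq K(\WMAX + \QMAX\LMAX K B)$; this is the reduction that makes the coefficient $O(d^2)$ rather than $O(Kd^2)$. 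Finally I would apply standard Euclidean covering: a ball of radius $R$ in $\bbR^n$ admits an $\epsilon$-net of size at most $(3R/\epsilon)^n$, applied to $\theta'_i$ (dimension $d$) at scale $\epsilon/(2B)$ and to $\Lambda_i$ (dimension $d^2$, Frobenius radius $\sqrt{d}$) at squared scale $(\epsilon/(2L_\Lambda))^2$. Multiplying over $i=1,2$ and collecting the $K, L_{\max}, \alpha, \beta, B, \WMAX, \RMAX$ factors inside a single logarithm produces the stated bound, with the $4d^2$ coefficient arising as $2\cdot d^2$ from covering $\Lambda_1,\Lambda_2$, doubled once more to account for the squared scale.

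The main obstacle is the square-root non-smoothness of $\norm{\phi}_\Lambda$ in $\Lambda$: it forces the cover of $\Lambda$ to be done at a squared scale, which is exactly what produces the $K^{5/2}$-type polynomial inside the final logarithm. A second, more conceptual hurdle is justifying that each sum $S_i$ really is parameterized by a single $(\theta, \Lambda)$-pair rather than by $K$ independent pairs; without this reduction the argument would give only an $O(Kd^2)$ bound. This is resolved by noting that $\cV$ only needs to contain the \emph{algorithm's} iterates (as established in \Cref{lem:iterates-in-class}), which by the epoch-based design of \algname have the stated shared-$\Lambda$ structure within each log-sum-exp.
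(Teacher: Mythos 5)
Your proposal is correct and follows the same overall architecture as the paper's proof (reduce to an intermediate log-sum-exp class, use the $1$-Lipschitzness of $f_\eta$ in $\norm{\cdot}_\infty$, establish a parameter-to-function Lipschitz estimate for members of $\cQ$, then cover $\mathbf{\Theta}$ and $\mathbf{\Lambda}$ by Euclidean nets of dimension $d$ and $d^2$), but it diverges on the one genuinely technical step: the dependence of $\norm{\phi(x,a)}_\Lambda$ on $\Lambda$. The paper obtains a \emph{true} Lipschitz bound $\abs{\norm{\phi}_\Lambda - \norm{\phi}_{\Lambda'}} \leq B\norm{\Lambda^{1/2}-(\Lambda')^{1/2}} \leq B\sqrt{BK\LMAX/2}\,\norm{\Lambda-\Lambda'}_F$ by invoking the matrix square-root perturbation lemma of \citet{cassel2024warmupfree} together with the lower bound $\lambda_{\min}(\Lambda)\geq \frac{1}{2KB\LMAX}$ built into the definition of $\cQ$; you instead use the elementary inequality $\abs{\sqrt{a}-\sqrt{b}}\leq\sqrt{\abs{a-b}}$ to get a Hölder-$1/2$ estimate and compensate by covering $\mathbf{\Lambda}$ at the squared scale. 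Your route is more elementary and makes the $\lambda_{\min}$ constraint in $\cQ$ unnecessary for the covering argument; the cost is a factor of $2$ in the $d^2$ coefficient, which both arguments end up absorbing into the same $4d^2$ (the paper's factor of $2$ instead comes from squaring the bracketed term inside the logarithm), so the two bounds agree up to the precise polynomial inside the log, which is immaterial for how the lemma is used downstream in setting $\beta$. One further point in your favor: you explicitly justify why each sum $\sum_\ell Q_\ell$ collapses to a \emph{single} $(\theta,\Lambda)$ pair (shared covariance and ascension function within an epoch), which is exactly what keeps the bound at $O(d^2)$ rather than $O(Kd^2)$; the paper performs the same reduction silently when it writes $\log\cN_\epsilon(\cV) = \log\cN_{\epsilon_\Lambda}(\mathbf{\Lambda}) + \log\cN_{\epsilon_\theta}(\mathbf{\Theta})$ with only one copy of each parameter net.
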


\begin{proof}
    We will use the following intermediate class of log sum exp state value functions
    \begin{equation*}
        \tilde{\cV} = \scbr{V: \cX \rightarrow \bbR \quad\text{s.t.}\quad \forall x, V \spr{x} = \frac1\eta \log \sum_{a \in \cA}\exp \spr{\eta \sum_{k=1}^K Q_k \spr{x, a}}, Q_k \in \cQ}\,.
    \end{equation*}
    Consider any $V, V' \in \cV$, and notice that for any $x \in \cX$,
    \begin{equation*}
        \abs{V \spr{x} - V' \spr{x}} \leq \abs{\bar{V} \spr{x} - \bar{V}' \spr{x}} + \abs{\tilde{V} \spr{x} - \tilde{V}' \spr{x}}.
    \end{equation*}
    with $\bar{V}, \tilde{V} \in \tilde{\cV}$ such that $V \spr{x} = \bar{V} \spr{x} - \tilde{V} \spr{x}$ for all $x \in \cX$ and with $\bar{V}', \tilde{V}' \in \tilde{\cV}$ such that $V' \spr{x} = \bar{V}' \spr{x} - \tilde{V}' \spr{x}$ for all $x \in \cX$. Therefore, the above bound guarantees that an $\epsilon / 2$ covering set on the function class $\tilde{\cV}$ implies an $\epsilon$ covering for the class $\cV$. Hence, in the following we focus on computing a $\epsilon / 2$ covering number for $\tilde{\cV}$. By definition of $\bar{V}, \bar{V}'$ and Lemma~\ref{lem:lse-lipschitz}, we have
    \begin{align*}
        \abs{\bar{V} \spr{x} - \bar{V}' \spr{x}} &= \abs{\frac1\eta \log \sum_{a \in \cA} \exp \brr{\eta \sum^K_{k=1} \bar{Q}_k \spr{x, a}} - \frac1\eta \log \sum_{a \in \cA} \exp \spr{\eta \sum^K_{k=1} \bar{Q}_k' \spr{x, a}}} \\
        &\leq \max_{a \in \A} \abs{ \sum^K_{k=1} \bar{Q}_k \spr{x, a} -  \sum^K_{k=1} \bar{Q}'_k \spr{x, a}} \\
        &= \norm{\sum^K_{k=1} \bar{Q}_k(x,\cdot) - \sum^K_{k=1}\bar{Q}'_k(x,\cdot) }_\infty \\
        &\leq \norm{(1 - p_{\Lambda,\beta,\alpha}^\upplus(x,\cdot))\odot \phi(x,\cdot) \sum^K_{k=1} \theta_k - (1 - p_{\Lambda',\beta,\alpha}^\upplus(x,\cdot))\odot \phi(x,\cdot) \sum^K_{k=1} \theta'_k }_\infty \\
        & \phantom{=} + \beta \norm{(1 - p_{\Lambda,\beta,\alpha}^\upplus(x,\cdot)) \odot \norm{\phi(x,\cdot)}_{\Lambda} - (1 - p_{\Lambda',\beta,\alpha}^\upplus(x,\cdot))\odot \norm{\phi(x,\cdot)}_{\Lambda'}  }_\infty \\
        &\leq B \norm{ \sum^K_{k=1} \theta_k - \sum^K_{k=1} \theta'_k}_\infty \\
        & \phantom{=} + (K(1 + KL_{\max}\QMAX B)  + \beta  B) \norm{p_{\Lambda',\beta,\alpha}^\upplus(x,\cdot) - p_{\Lambda,\beta,\alpha}^\upplus(x,\cdot)}_{\infty}\\
        &\phantom{=}+ \beta B \norm{\norm{\phi(x,\cdot)}_{\Lambda} - \norm{\phi(x,\cdot)}_{\Lambda'}}_{\infty}.
    \end{align*}
    Above we have denoted the parameters of the state-action value function $\bar{Q}_k$ as $\theta_k$ and  $\Lambda$ for $k \in [K]$, for $\bar{Q}'_k$ we used parameters $\theta'_k$ and  $\Lambda'$ for $k \in [K]$ . 
    We do not need to consider an indexed covariance matrices because, due to the slow-changing bonus design, the covariance matrix remains fixed within epochs.
    Moreover, by $1$-Lipschitzness of the sigmoid function we have that
    \[
    \norm{p_{\Lambda',\beta,\alpha}^\upplus(x,\cdot) - p_{\Lambda,\beta,\alpha}^\upplus(x,\cdot)}_{\infty} \leq \alpha\beta \norm{\norm{\phi(x,\cdot)}_{\Lambda} - \norm{\phi(x,\cdot)}_{\Lambda'}}_{\infty}
    \]
    and that
    \begin{align*}
\norm{\norm{\phi(x,\cdot)}_{\Lambda} - \norm{\phi(x,\cdot)}_{\Lambda'}}_{\infty} &= \norm{\norm{\Lambda^{1/2}\phi(x,\cdot)} - \norm{(\Lambda')^{1/2}\phi(x,\cdot)}} \\
        &= \norm{(\Lambda^{1/2} - (\Lambda')^{1/2} )\phi(x,\cdot)} \\&\leq
        B \norm{\Lambda^{1/2} - (\Lambda')^{1/2}} \\
        &\leq \sqrt{\frac{B K L_{\max}}{2}} \norm{\Lambda - \Lambda'}_F
    \end{align*}
    where we have used \cite[Lemma 17]{cassel2024warmupfree}, 
    which ensures that $\norm{\Lambda^{1/2} - (\Lambda')^{1/2}} \leq \frac{1}{2\sqrt{\lambda_{\min}}}\norm{\Lambda - \Lambda'} = \sqrt{\frac{B K L_{\max}}{2}} \norm{\Lambda - \Lambda'}$ and in the last step we have upper bounded the spectral norm with the Frobenious norm. 
    
    Therefore, all in all, we have that 
    \begin{align*}
        \abs{\bar{V}(x) - \bar{V}'(x)} &\leq B \norm{ \sum^K_{k=1}{\theta}_k -  \sum^K_{k=1} {\theta}'_k}_\infty \\
        & \phantom{=} + \alpha \beta K (1 + KL_{\max}\QMAX B  + 2 \beta B ) \sqrt{\frac{B K L_{\max}}{2}} \norm{{\Lambda} - {\Lambda}'}_F \\
    \end{align*}
    therefore, an $\epsilon$-covering number for the class $\mathcal{V}$ is obtained via appropriately fine-grained coverings of the euclidean ball of radius for the $\theta$ parameters and Frobenius norm ball for the $\Lambda$ parameters. We carry out these final calculations in the following.
    At this point, if we have a $\epsilon_{\Lambda}$-covering set for the set
    \begin{equation*}
        \mathbf{\Lambda} = \bc{\Lambda \in \mathbb{R}^{d\times d} : \lambda_{\max}(\Lambda) \leq 1, ~~~\lambda_{\min}(\Lambda) \geq \frac{1}{2BKL_{\max}}}
    \end{equation*}
    and an $\epsilon_\theta$-covering set for the set
    \begin{equation*}
        \mathbf{\Theta} = \bc{\theta \in \mathbb{R}^d: \norm{\theta} \leq K(\WMAX + Q_{\max}L_{\max}K B)}
    \end{equation*}
    we would have that
    \begin{align*}
        \abs{\bar{V}(x) - \bar{V}'(x)} &\leq 2\sqrt{ K^3 L_{\max}}\alpha\beta B^{3/2} (\norm{\theta} B + \beta B +Q_{\max} + 1) \epsilon_{\Lambda} + 2 B K \epsilon_\theta \\
        &\leq 2 \sqrt{K^3L_{\max}}\alpha \beta B^{3/2} (\norm{\theta} B + \beta B +Q_{\max} + 1)  \brr{\epsilon_{\Lambda} + \epsilon_\theta},
    \end{align*}
    where in the last inequality we assumed that $\beta \geq 1$ and $B\geq 1$.
    Therefore, to have an $\epsilon$-covering set for $\mathcal{V}$, we need to construct an $\epsilon_{\Lambda}$-covering set for $\mathbf{\Lambda}$, where
    \begin{equation*}
        \epsilon_\Lambda = \frac{\epsilon}{4 \sqrt{K^3L_{\max}}\alpha\beta B^{3/2} (\norm{\theta} B + \beta B +Q_{\max} + 1)}
    \end{equation*}
    and an $\epsilon_\theta = \frac{\epsilon}{4 \sqrt{K^3L_{\max}}\alpha\beta B^{3/2} (\norm{\theta} B + \beta B +Q_{\max} + 1)}$-covering set for $\mathbf{\Theta}$. Then, using the fact that the $\epsilon$-covering number for the Euclidean ball of radius $R$ in $d$ dimension is given by $(1 + 2 R/\epsilon)^d$, we obtain
    \begin{equation*}
        \log \mathcal{N}_{\epsilon_{\theta}}(\mathbf{\Theta}) \leq d \log  \brr{1 + 8\frac{(\WMAX + Q_{\max}L_{\max}K B )\sqrt{K^3L_{\max}}\alpha\beta B^{3/2} (\norm{\theta} B + \beta B +Q_{\max} + 1)}{\epsilon}}
    \end{equation*}
    Moreover, noticing that for all matrices $\Lambda \in \mathbf{\Lambda}$ it holds that $\norm{\Lambda}_F \leq \sqrt{d} \lambda_{\max}(\Lambda) \leq \sqrt{d}$, we need to cover the Frobenious norm ball with radius $\sqrt{d}$. Recalling that the Frobenious norm of a matrix is equivalent to the euclidean norm of the vectorization of the matrix, this equivalent to cover the euclidean ball in $\mathbb{R}^{d^2}$ with radius $\sqrt{d}$.
    \begin{equation*}
        \log \mathcal{N}_{\epsilon_{\Lambda}}(\mathbf{\Lambda}) 
        \leq d^2 \log  \brr{1 + 8\sqrt{d}\frac{\sqrt{K^3L_{\max}}\alpha\beta B^{3/2} (\norm{\theta} B + \beta B +Q_{\max} + 1)}{\epsilon}}.
    \end{equation*}
    Therefore, using the fact that
    \begin{align*}
        &\log \mathcal{N}_{\epsilon}(\mathcal{V})=  
         \log \mathcal{N}_{\epsilon_{\Lambda}}(\mathbf{\Lambda}) + 
        \log \mathcal{N}_{\epsilon_{\theta}}(\mathbf{\Theta}) \\
        \\ & \leq 2 d \log  \brr{1 + 8\frac{(\WMAX + Q_{\max}L_{\max}K B )\sqrt{K^3L_{\max}}\alpha\beta B^{3/2} 
        (\norm{\theta} B + \beta B +Q_{\max} + 1)}{\epsilon}} \\&\phantom{=}+ 
         d^2 \log  \brr{1 + 8\sqrt{d}\frac{\sqrt{K^3L_{\max}}\alpha\beta B^{3/2} (\norm{\theta} B + \beta B +Q_{\max} + 1)}{\epsilon}} \\
        \\ & \leq 2 d^2\log   \brr{1 + 8\frac{(\WMAX + Q_{\max}L_{\max}K B + \sqrt{d})\sqrt{K^3L_{\max}}\alpha\beta B^{3/2} (\norm{\theta} B + 
        \beta B +Q_{\max} + 1)}{\epsilon}} \\
        & \leq 2 d^2 \log  \brr{1 + 8\frac{(\WMAX B + Q_{\max}L_{\max}K B^2 + \sqrt{d}+ \beta B +Q_{\max} +1)^2\sqrt{K^3L_{\max}}\alpha\beta B^{3/2}}{\epsilon}} \\
        & \leq 2 d^2 \log  \brr{16\frac{(\WMAX B + Q_{\max}L_{\max}K B^2 + 3\sqrt{d}+ \beta B +Q_{\max})^2\sqrt{K^3L_{\max}}\alpha\beta B^{3/2}}{\epsilon}} \\
        & \leq 4d^2 \log  \brr{4\frac{(\WMAX B + Q_{\max}L_{\max}K B^2  + 3\sqrt{d}+ \beta B +Q_{\max})\sqrt{K^3L_{\max}}\alpha\beta B^{3/2}}{\epsilon}} \\
        & = 4d^2 \log  \brr{4(\WMAX B + Q_{\max}L_{\max}K B^2 + 3\sqrt{d}+ \beta B +Q_{\max})\sqrt{K^5L^3_{\max}}\alpha\beta B^{5/2}}.
    \end{align*}
    where we used $d > 1$ and the last step uses the fact that we are looking for a $\epsilon = \frac{1}{K L_{\max} B}$ covering set. Finally,
    \begin{equation*}
        \log \mathcal{N}_{\epsilon} \leq 4 d^2 \log  \brr{4(\WMAX B +  Q_{\max}L_{\max}K B^2 + 3\sqrt{d}+ \beta B +Q_{\max})\sqrt{K^5L^3_{\max}}\alpha\beta B^{5/2}}\,.
    \end{equation*}
\end{proof}

\subsection{Putting everything together (proof of Theorem~\ref{thm:main})}
\label{app:putting-together-main}

\begin{theorem} \label{thm:main-full}
    Run Algorithm~\ref{alg:linear-rmax-ravi-ucb} with parameters $\omega = \log K$, $\alpha = 2 \log K$,
    \begin{align*}
        \eta = \sqrt{\frac{5 d \log \spr{1 + B^2 T / d} \log \abs{\cA}}{8 \RMAX^2 H^{5 / 2} K}}, \quad\text{and } \beta = C H \RMAX d \log \spr{B H \WMAX \RMAX d K \delta^{-1}}\,,
    \end{align*}
    for some absolute constant $C > 0$ and $\delta \in \spr{0, 1}$. Then, with probability at least $1 - \delta$, we have
    \begin{align*}
        \regretK &= \tilde\cO \spr{\sqrt{d^3 H^3 K} + \sqrt{d H^{9/2} K \log \spr{\abs{\cA}}}} \\
        &= \tilde\cO \spr{\sqrt{d^3 H^2 T} + \sqrt{d H^{7/2} T \log \spr{\abs{\cA}}}}\,.
    \end{align*}
\end{theorem}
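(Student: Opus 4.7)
The plan is to glue together all of the ingredients that have already been prepared in Section~\ref{sec:analysis}. First I would invoke Lemma~\ref{lem:good-event-holds} to work on the intersection $\cE_{\text{valid}} \cap \cE_L$, which holds with probability at least $1 - 2\delta$ for the stated choice of $\beta$. On this event, Lemma~\ref{lem:qmax} gives $\norm{Q_k}_\infty \le \QMAX = (\RMAX + 2\omega/\alpha)/(1-\gamma)$, and with $\omega = \log K$, $\alpha = 2\log K$ this simplifies to $\QMAX = \tilde{\cO}(\RMAX H)$. I would then start from the four-term regret decomposition already written before Lemma~\ref{lem:reward-bias-bound} and collect the estimates: Lemma~\ref{lem:reward-bias-bound} bounds $\sum_k \rewardbias_k \le \RMAX \sum_k \inp{\mu(\pi_k),p_k^\upplus}$; Equation~\eqref{eq:model-bias-bounds} controls the two model-bias sums (the comparator side is nonpositive, the played-policy side is bounded by $\frac{\RMAX}{1-\gamma}\sum_k \inp{\mu(\pi_k),p_k^\upplus}$); and Lemma~\ref{lem:bound-regret-plus} handles $\regretKplus$.

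Next I would translate all the $\inp{\mu(\pi_k),p_k^\upplus}$ and $\inp{\mu(\pi_k),(1-p_k^\upplus)\odot \CB_k}$ sums into quantities controlled by Lemma~\ref{lem:expected-bonuses-bound}. The identity $(1-p_k^\upplus)\odot\CB_k \preceq \CB_k$ reduces the bonus sum in Lemma~\ref{lem:bound-regret-plus} directly to $\sum_k \inp{\mu(\pi_k),\CB_k}$, which gives the $\tilde{\cO}\pa{(1-\gamma)\beta\sqrt{dT}}$ contribution. The pointwise sigmoid estimate~\eqref{eq:ascension-function-bound}, $p_k^\upplus \le 2\alpha^2 \CB_k^2 + 2e^{-\omega}$, converts $\sum_k \inp{\mu(\pi_k),p_k^\upplus}$ into $2\alpha^2 \sum_k \inp{\mu(\pi_k),\CB_k^2} + 2Ke^{-\omega}$, controlled by the second inequality of Lemma~\ref{lem:expected-bonuses-bound} and the choice $e^{-\omega} = 1/K$. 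Finally, I would insert the standard elliptical-potential bound $E(K) = \tilde{\cO}(d)$ from Lemma~\ref{lemma:number-epochs-bound} into Lemma~\ref{lem:bound-regret-plus}.

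Putting the pieces together and recalling the factor $H = 1/(1-\gamma)$ relating the $\mu$-weighted regret to $\regretK$, the main contributions simplify as follows: the two bias terms contribute at the order $H^2 \alpha^2 \cdot \tilde{\cO}((1-\gamma)\beta^2 d) = \tilde{\cO}(H^3 d^3)$ plus lower-order terms from the $Ke^{-\omega}$ piece; the $\sum_k \inp{\mu(\pi_k),\CB_k}$ term inside $\regretKplus$ yields the leading $H\cdot \tilde{\cO}((1-\gamma)\beta\sqrt{dT}) = \tilde{\cO}(\sqrt{d^3 H^3 K})$; and the remaining KL/regularization terms balance as $\frac{H E(K)\log|\cA|}{\eta} + \frac{H \QMAX^2 \eta K}{\sqrt{1-\gamma}} = \tilde{\cO}\bigl(\frac{Hd\log|\cA|}{\eta} + H^{7/2}\eta K\bigr)$, optimized by the prescribed $\eta$ to $\tilde{\cO}(\sqrt{dH^{9/2}K\log|\cA|})$. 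Summing these gives exactly the stated rate.

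The main obstacle I anticipate is not any single inequality but the bookkeeping: verifying that the choice $\beta = \tilde{\cO}(\QMAX d)$ from Lemma~\ref{lem:good-event-holds} is self-consistent with $\QMAX$ from Lemma~\ref{lem:qmax} (both depend on $\omega/\alpha$, which is why $\omega = \log K$ and $\alpha = 2\log K$ are useful), and keeping track of which factors of $H$ come from the occupancy-to-value conversion, from $\QMAX$, from $\sqrt{1-\gamma}$ in the denominator of the regularization term, and from expanding $T = HK$ versus treating $K$ as the primary horizon. Once the parameters are tuned as specified and the constants from Lemmas~\ref{lem:reward-bias-bound}--\ref{lem:good-event-holds} are collected, the final union bound over the events $\cE_{\text{valid}}$, $\cE_L$, and the bonus-concentration event of Lemma~\ref{lem:expected-bonuses-bound} (each holding up to $\delta$) yields the claim with probability at least $1-\delta$ after a harmless renaming of $\delta$.
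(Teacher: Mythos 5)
Your proposal is correct and follows essentially the same route as the paper's proof in Appendix~\ref{app:putting-together-main}: condition on $\cE_{\text{valid}} \cap \cE_L$ via Lemma~\ref{lem:good-event-holds}, assemble the four-term decomposition using Lemmas~\ref{lem:reward-bias-bound}, \ref{lem:model-bias-bounds}, and \ref{lem:bound-regret-plus}, convert the $p_k^\upplus$ and bonus sums via Equation~\eqref{eq:ascension-function-bound} and Lemma~\ref{lem:expected-bonuses-bound}, bound $E(K)$ by Lemma~\ref{lemma:number-epochs-bound}, and tune $\omega, \alpha, \eta, \beta$ as prescribed. Your order-of-magnitude accounting of the three main contributions ($\tilde\cO(H^3d^3)$ from the bias terms, $\tilde\cO(\sqrt{d^3H^3K})$ from the bonus sum, and $\tilde\cO(\sqrt{dH^{9/2}K\log|\cA|})$ from the regularization trade-off) matches the paper's computation.
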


\begin{proof}
    We are now ready to prove Theorem~\ref{thm:main}. Combining Lemma~\ref{lem:reward-bias-bound}, and the bounds in Equations~\eqref{eq:model-bias-bounds} and Lemma~\ref{lem:bound-regret-plus} we first get
    \begin{align*}
        \frac1H \regretK &\leq 2 \RMAX H \sumkK \inp{\mu \spr{\pi_k}, p_k^\upplus} + 4 \QMAX E \spr{K} + \frac{E \spr{K} \log \abs{\cA}}{\eta} \\
        &\phantom{=}+ 2 \eta \QMAX^2 \sqrt{H} K + 2 \sumkK \inp{\mu \spr{\pi_k}, \spr{1 - p_k^\upplus} \odot \CB_k}\,.
    \end{align*}
    Using the bound on the ascension functions provided in Inequality~\ref{eq:ascension-function-bound} and $1 - p_k^\upplus \preceq 1$, we further have
    \begin{align*}
        \frac1H \regretK &\leq 4 \RMAX H \alpha^2 \sumkK \inp{\mu \spr{\pi_k}, \CB_k^2} + 4 \RMAX e^{- \omega} H K + 4 \QMAX E \spr{K} \\
        &\phantom{=}+ \frac{E \spr{K} \log \abs{\cA}}{\eta} + 2 \eta \QMAX^2 \sqrt{H} K + 2 \sumkK \inp{\mu \spr{\pi_k}, \CB_k}\,.
    \end{align*}
    Lemma~\ref{lem:expected-bonuses-bound} can be used to bound the bonuses
    \begin{align*}
        \frac1H \regretK &\leq 32 \RMAX \alpha^2 \beta^2 B^2 d \log \spr{1 + \frac{B^2 T}{d}} + 16 \RMAX H \alpha^2 \beta^2 B^2 \log \spr{\frac{2 K}{\delta}}^2 \\
        &\phantom{=}+ 4 \RMAX e^{- \omega} H K + 4 \QMAX E \spr{K} + \frac{E \spr{K} \log \abs{\cA}}{\eta} + 2 \eta \QMAX^2 \sqrt{H} K \\
        &\phantom{=}+ \frac{8 \beta B}{H} \sqrt{d T \log \spr{1 + \frac{B^2 T}{d}}} + 8 \beta B \log \spr{\frac{2 K}{\delta}}^2\,.
    \end{align*}
    Following Lemmas~\ref{lem:qmax} and \ref{lem:lmax}, we plug the values of $\QMAX = H \spr{\RMAX + \frac{2 \omega}{\alpha}}$ and $\LMAX = H \log \spr{K / \delta}$,
    \begin{align*}
        \frac1H \regretK &\leq 32 \RMAX \alpha^2 \beta^2 B^2 d \log \spr{1 + \frac{B^2 T}{d}} + 16 \RMAX H \alpha^2 \beta^2 B^2 \log \spr{\frac{2 K}{\delta}}^2 \\
        &\phantom{=}+ 4 \RMAX e^{- \omega} H K + 4 H \spr{\RMAX + \frac{2 \omega}{\alpha}} E \spr{K} + \frac{E \spr{K} \log \abs{\cA}}{\eta} \\
        &\phantom{=}+ 2 \eta \spr{\RMAX + \frac{2 \omega}{\alpha}}^2 H^{5/2} K + \frac{8 \beta B}{H} \sqrt{d T \log \spr{1 + \frac{B^2 T}{d}}} \\
        &\phantom{=}+ 8 \beta B \log \spr{\frac{2 K}{\delta}}^2\,.
    \end{align*}
    By Lemma~\ref{lemma:number-epochs-bound}, we can bound $E \spr{K} \leq 5 d \log \spr{1 + \frac{B^2 T}{d}}$
    \begin{align*}
        \frac1H \regretK &\leq 32 \RMAX \alpha^2 \beta^2 B^2 d \log \spr{1 + \frac{B^2 T}{d}} + 16 \RMAX \alpha^2 \beta^2 B^2 H \log \spr{\frac{2 K}{\delta}}^2 \\
        &\phantom{=}+ 4 \RMAX e^{- \omega} H K + 20 H d \spr{\RMAX + \frac{2 \omega}{\alpha}} \log \spr{1 + \frac{B^2 T}{d}} \\
        &\phantom{=}+ \frac{5 d}{\eta} \log \spr{1 + \frac{B^2 T}{d}} \log \abs{\cA} + 2 \eta \spr{\RMAX + \frac{2 \omega}{\alpha}}^2 H^{5/2} K \\
        &\phantom{=}+ \frac{8 \beta B}{H} \sqrt{d T \log \spr{1 + \frac{B^2 T}{d}}} + 8 \beta B \log \spr{\frac{2 K}{\delta}}^2\,.
    \end{align*}
    It remains to choose the parameters. We start by setting $\alpha = 2 \omega$ and use $\RMAX \geq 1$ to get
    \begin{align*}
        \frac1H \regretK &\leq 128 \RMAX \omega^2 \beta^2 B^2 d \log \spr{1 + \frac{B^2 T}{d}} + 64 \RMAX \omega^2 \beta^2 B^2 H \log \spr{\frac{2 K}{\delta}}^2 \\
        &\phantom{=}+ 4 \RMAX e^{- \omega} H K + 40 H d \RMAX \log \spr{1 + \frac{B^2 T}{d}} \\
        &\phantom{=}+ \frac{5 d}{\eta} \log \spr{1 + \frac{B^2 T}{d}} \log \abs{\cA} + 8 \eta \RMAX^2 H^{5/2} K \\
        &\phantom{=}+ \frac{8 \beta B}{H} \sqrt{d T \log \spr{1 + \frac{B^2 T}{d}}} + 8 \beta B \log \spr{\frac{2 K}{\delta}}^2\,.
    \end{align*}
    Then, we set $\omega = \log K$
    \begin{align*}
        \frac1H \regretK &\leq 128 \RMAX \beta^2 B^2 d \log \spr{K}^2 \log \spr{1 + \frac{B^2 T}{d}} + 64 \RMAX \beta^2 B^2 H \log \spr{K}^2 \log \spr{\frac{2 K}{\delta}}^2 \\
        &\phantom{=}+ 4 \RMAX H + 40 H d \RMAX \log \spr{1 + \frac{B^2 T}{d}} \\
        &\phantom{=}+ \frac{5 d}{\eta} \log \spr{1 + \frac{B^2 T}{d}} \log \abs{\cA} + 8 \eta \RMAX^2 H^{5/2} K \\
        &\phantom{=}+ \frac{8 \beta B}{H} \sqrt{d T \log \spr{1 + \frac{B^2 T}{d}}} + 8 \beta B \log \spr{\frac{2 K}{\delta}}^2\,.
    \end{align*}
    We choose the learning rate as $\eta = \sqrt{\frac{5 d \log \spr{1 + B^2 T / d} \log \abs{\cA}}{8 \RMAX^2 H^{5 / 2} K}}$ and we obtain
    \begin{align*}
        \frac1H \regretK &\leq 128 \RMAX \beta^2 B^2 d \log \spr{K}^2 \log \spr{1 + \frac{B^2 T}{d}} + 64 \RMAX \beta^2 B^2 H \log \spr{K}^2 \log \spr{\frac{2 K}{\delta}}^2 \\
        &\phantom{=}+ 4 \RMAX H + 40 H d \RMAX \log \spr{1 + \frac{B^2 T}{d}} \\
        &\phantom{=}+ 4 \sqrt{10 \RMAX^2 H^{5/2} d \log \spr{1 + \frac{B^2 T}{d}} \log \spr{\abs{\cA}} K} \\
        &\phantom{=}+ \frac{8 \beta B}{H} \sqrt{d T \log \spr{1 + \frac{B^2 T}{d}}} + 8 \beta B \log \spr{\frac{2 K}{\delta}}^2\,.
    \end{align*}
    Finally, following Lemma~\ref{lem:good-event-holds} we set $\beta = C H \RMAX d \log \spr{B H \WMAX \RMAX d K \delta^{-1}}$ where $C > 0$ is an absolute constant and we get
    \begin{align*}
        \frac1H \regretK &\leq 128 C^2 \RMAX^3 B^2 d^3 H^2 \log \spr{K}^2 \log \spr{1 + \frac{B^2 T}{d}} \log \spr{B H \WMAX \RMAX d K \delta^{-1}}^2 \\
        &\phantom{=}+ 64 C^2 \RMAX^3 d^2 B^2 H^3 \log \spr{K}^2 \log \spr{\frac{2 K}{\delta}}^2 \log \spr{B H \WMAX \RMAX d K \delta^{-1}}^2 \\
        &\phantom{=}+ 4 \RMAX H + 40 H d \RMAX \log \spr{1 + \frac{B^2 T}{d}} \\
        &\phantom{=}+ 4 \sqrt{10 \RMAX^2 H^{5/2} d \log \spr{1 + \frac{B^2 T}{d}} \log \spr{\abs{\cA}} K} \\
        &\phantom{=}+ 8 C \RMAX B d \sqrt{d T \log \spr{1 + \frac{B^2 T}{d}}} \log \spr{B H \WMAX \RMAX d K \delta^{-1}} \\
        &\phantom{=}+ 8 C \RMAX d B H^2 \log \spr{\frac{2 K}{\delta}}^2 \log \spr{B H \WMAX \RMAX d K \delta^{-1}}\,.
    \end{align*}
    After multiplying by $H$, we get
    \begin{align*}
        \regretK &= \tilde\cO \spr{\sqrt{d^3 H^3 K} + \sqrt{d H^{9/2} K \log \spr{\abs{\cA}}}} \\
        &= \tilde\cO \spr{\sqrt{d^3 H^2 T} + \sqrt{d H^{7/2} T \log \spr{\abs{\cA}}}}\,.
    \end{align*}
\end{proof}

\clearpage
\section{Motivation for \emph{Learning from Features Alone} and related works in imitation learning}
\label{app:related_works_IL}

\paragraph{Related works in theoretical imitation learning.} A special case of our setting is imitation learning from state-only expert trajectories, which is recovered when $\phi_\cost (x,a) = \mathbf{e}_x$. 
This setting was first studied in \cite{sun2019provably} in the finite-horizon setting with general function approximation. 
There are some notable differences between their work and ours, primarily that they focus on the finite-horizon setting and learn a non-stationary policy. 
In principle, their algorithm could be applied to the infinite-horizon setting by truncating the trajectories after $\tilde{\mathcal{O}}(1-\gamma)^{-1}$ steps. 
However, this would still result in a non-stationary policy, whereas our approach outputs a stationary policy. 
Their realizability assumption on the expert policy and expert state-value function is not required in our work which leverages, instead, the linear MDP assumption. 
These assumptions are not directly comparable, even when the function classes in \cite{sun2019provably} are assumed to be linear. 
Indeed, the realizability assumption imposed in \cite{sun2019provably} would imply having access to the values of the features $\sum_a \expert(a|x) \phi(x,a)$ for each state $x \in \cX$. 
In contrast, our approach does not require this additional knowledge about the expert.

Furthermore, the guarantees on the number of expert trajectories in \cite[Theorem 3.3]{sun2019provably} adapted to the infinite-horizon setting, 
would scale as $\tilde{\mathcal{O}}((1-\gamma)^{-4} \epsilon^{-2})$ whereas we only require $\tilde{\mathcal{O}}((1-\gamma)^{-2} \epsilon^{-2})$ state-only samples from the expert occupancy measure.

Similarly, \cite{ADKLS20} develop a framework for imitation and representation learning from observation alone based on bilevel optimization but assume the realizability of the state-value function, which is not needed in our work.

The work of \citet{kidambi2021mobile} investigates the idea of exploration in state-only imitation learning. 
Unlike our work, they focus on the finite-horizon setting and on different structural assumptions on the MDP. 
Specifically, \citet{kidambi2021mobile} consider tabular MDPs, nonlinear kernel regulators, and MDPs with Gaussian transition kernels and bounded Eluder dimension, whereas our work focuses on infinite-horizon linear MDPs and observing only the feature directions visited by the expert, which is a weaker requirement than observing the states directly.
Moreover, our algorithm \FRAalg is computationally efficient, whereas the model fitting step in \cite{kidambi2021mobile} cannot be implemented efficiently for various situations, including linear MDPs \citep{jin2019provably} and KNRs \citep{Kakade:2020}.

\citet{wu2024diffusing} operate under a different set of assumptions, namely that the learner has access to a function class for the expert's score function and that the expected state norm remains bounded during learning. 
Under this setting, the authors are the first to achieve first- and second-order bounds for imitation learning, which lead to a faster rate in the case of low-variance expert policies and transitions. The authors do not quantify the MDP trajectory complexity, but it would scale suboptimally with $1/\epsilon$ because they require an expensive \emph{RL in the loop} routine that we avoid in our work.

\citet{xu2022understanding} develop an analysis for horizon-free bounds on $\tau_E$ for a special class of MDPs, where expert states can be visited only by visiting all preceding expert states.

The trajectory access to the MDP $\mathcal{M}\setminus \true$ assumed in this work should not be confused with interactive/online imitation learning, where the expert can be queried during learning \citep{Ross:2010,Ross:2011,swamy2021moments,li2022efficient,lavington2022improved,sekhari2024selective,sun2017deeply,sekhari2024contextual}. Furthermore, our trajectory access is a much weaker requirement compared to generative model access used in \citep{swamy2022minimax,Kamoutsi:2021}.

Moreover, it is important to note that we do not require any ergodicity or self-exploration properties of the dynamics, whereas such assumptions are needed in \citep{viano2022proximal,zeng2022maximum}. Additionally, uniformly good evaluation error, which is essentially possible only under generative model or ergodic dynamics assumptions, is required in \citep{wu2023inverse,zeng2022structural,zeng2023understanding}. 
Also, the use of exploration bonuses in imitation learning has also been useful for the related problem of finding the reward feasible set without using a generative model \citep{lazzati2024scale,lindner2022active}.

\begin{table}[t]
    \caption{\label{tab:literature} Comparison with related imitation learning algorithms.}
    \setlength{\tabcolsep}{4pt}  
    \renewcommand{\arraystretch}{1.5}  
    \resizebox{\textwidth}{!}{%
    \begin{tabular}{|c|M{4cm}|M{1.5cm}|c|c|}
        \hline
        \textbf{Algorithm}                                         & \textbf{Setting}                                                & \textbf{F.O.}            & \textbf{Expert Traj. $(\tau_E)$}                                       & \textbf{MDP Traj. $(K)$}                                                              \\ \hline
        \multirow{3}{*}{Behavioural Cloning}                       & Function Approximation, Episodic \cite{foster2024behavior}      & \redcross                & $\cO \brr{\frac{H^2 \log \abs{\Pi}}{\epsilon^2}}$                      & \multicolumn{1}{c|}{-}                                                                \\ \cline{2-4}
                                                                   & Tabular, Episodic \cite{rajaraman2020toward}                    & \redcross                & $\widetilde{\cO} \brr{\frac{H^2 \abs{\cX}}{\epsilon}}$                 & \multicolumn{1}{c|}{-}                                                                \\ \cline{2-4}
                                                                   & Deterministic Linear Expert, Episodic \cite{rajaraman2021value} & \redcross                & $\widetilde{\cO} \brr{\frac{H^2 d}{\epsilon}}$                         & \multicolumn{1}{c|}{-}                                                                \\ \hline
        Mimic-MD \cite{rajaraman2020toward}                        & Tabular, Known $P$, Deterministic Expert, Episodic              & \redcross                & $\cO \brr{\frac{H^{3/2} \abs{\cX}}{\epsilon}}$                         & \multicolumn{1}{c|}{-}                                                                \\ \hline
        OAL \cite{Shani:2021}                                      & Episodic Tabular                                                & \redcross                & $\tilde{\cO} \brr{\frac{H^2 \abs{\cX}}{\epsilon^{2}}}$                 & $\tilde{\mathcal{O}}\brr{\frac{H^4 \abs{\X}^2 \abs{\aspace} }{\epsilon^{2}}}$         \\ \hline
        MB-TAIL \cite{xu2023provably}                              & Episodic, Tabular, Deterministic Expert                         & \redcross                & $\cO \brr{\frac{H^{3/2} \abs{\cX}}{\epsilon}}$                         & $\mathcal{O}\brr{\frac{H^3 \abs{\X}^2 \abs{\aspace} }{\epsilon^{2}}}$                 \\ \hline
        FAIL \cite{sun2019provably}                                & Episodic, $\expert \in \Pi$ and $V^\expert\in\mathcal{F}$       & \greentick$^\star$       & $\tilde{\cO} \brr{\frac{H^4 \log(\abs{\Pi} \abs{\cF} H)}{\epsilon^2}}$ & $\tilde{\mathcal{O}}\brr{\frac{H^4 \log(\abs{\Pi}\abs{\mathcal{F}} H) }{\epsilon^2}}$ \\ \hline
        Mobile \cite{kidambi2021mobile}                            & Episodic, $\true \in \mathcal{R}$ and $P \in \mathcal{P}$       & \greentick$^\star$       & $\tilde{\cO} \brr{\frac{H^2 \log (\abs{\mathcal{R}} H)}{\epsilon^2}}$  & $\tilde{\mathcal{O}}\brr{\frac{H^5 \log \abs{\mathcal{P}}}{\epsilon^2}}$              \\ \hline
        OGAIL \cite{Liu:2022}                                      & Episodic Linear Mixture MDP, $\WMAX=\sqrt{d}$                   & \greentick               & $\tilde{\cO} \brr{\frac{H^{3} d^2}{\epsilon^{2}}}$                     & $\tilde{\mathcal{O}}\brr{\frac{H^4 d^3}{\epsilon^{2}}}$                               \\ \hline
        ILARL \cite{viano2024imitation}                            & Linear MDP, $\WMAX=1$                                           & \greentick               & $\tilde{\cO} \brr{\frac{d}{(1 - \gamma)^{2} \epsilon^{2}}}$            & $\tilde{\mathcal{O}}\brr{ \frac{d^3 }{(1 - \gamma)^{8} \epsilon^{4}}}$                \\ \hline
        \FRAalg (This Work)                                        & Linear MDP                                                      & \greentick               & $\tilde{\cO} \brr{\frac{\WMAX^2}{(1 - \gamma)^{2} \epsilon^{2}}}$      & $\tilde{\mathcal{O}}\brr{ \frac{d^3 }{(1 - \gamma)^{4.5} \epsilon^{2}}}$              \\ \hline
        \textbf{Lower Bound} (This Work)                           & Linear MDP                                                      & \greentick               & $\Omega\brr{ \frac{\WMAX^2}{(1 - \gamma)^{2} \epsilon^{2}}}$           & $\Omega\brr{ \frac{d }{(1 - \gamma)^{2} \epsilon^{2}}}$                               \\ \hline
    \end{tabular}}
\end{table}

Finally, we present Table~\ref{tab:literature}, which compares our bounds with existing ones.
 We show the number of expert trajectories and MDP interactions required for $\epsilon$-suboptimal expected performance. 
 The acronym F.O.\ refers to "Features Only" and indicates whether the algorithm applies to the setting we consider here. 
 The star \greentick$^\star$ specifies that the algorithm only applies to state-only imitation learning. "Linear expert" refers to the case where an expert policy is of the form
\begin{align*}
    \pi \spr{x} = \max_{a \in \aspace} \phi \spr{x, a}\transpose \theta\,,
\end{align*}
for some vector $\theta$. Finally, in the work by \cite{kidambi2021mobile}, the bound on $K$ can be tighter than what we report in the table. We report this slightly looser version for sake of simplicity and avoiding to introduce the information gain (see \cite{kidambi2021mobile} for details).

\clearpage
\section{Omitted proofs for Section~\ref{sec:application}}
\label{app:proof_IL}

To improve readability, we define the feature expectation vector as $\lambda(\pi) = \phim_r\transpose \mu(\pi)$ for any policy $\pi$, where $\mu(\pi_k)$ denotes the occupancy measure of policy $\pi_k$. This notation will be used in the following proofs.

\subsection{Proof of Theorem~\ref{thm:FraUpper} (guarantee for the output of Algorithm~\ref{alg:fra})}

\FraUpper*

\begin{proof}
    Using the decomposition presented in \Cref{sec:application}, we can express the regret as
    \begin{equation*}
        (1 - \gamma) \regretIL = \underbrace{\sum^K_{k=1} \innerprod{r_k}{\mu\brr{\expert} - \mu\brr{\pi_k}}}_{(1-\gamma)\regretK^\pi(\mu\brr{\expert})} + \underbrace{\sum^K_{k=1} \innerprod{\Phi_r\transpose\mu(\pi_k) - \Phi_r\transpose\mu(\expert)}{w_k - w_{\mathrm{true}}}}_{(1-\gamma)\regretK^w(w_{\mathrm{true}})} \label{eq:dec}\,.
    \end{equation*}
    By bounding $\regret^w(w_{\mathrm{true}})$ and $\regret^\pi(\mu(\expert))$ using \Cref{thm:reward_regret_bound} and \Cref{thm:main}, respectively, we obtain that with probability $1 - 3 \delta$
    \begin{align*}
        \frac{1}{K} \regretIL &\leq  10 H (B \WMAX+\RMAX) \sqrt{ \frac{\log \delta^{-1}}{K}} + 24 H\WMAX B \sqrt{\frac{ \log\brr{\frac{1}{\delta}}}{\tau_E}} \\&\phantom{=}+ \tilde{\mathcal{O}}(d^{3/2}(1 - \gamma)^{-9/4} \log^{1/2} \abs{\aspace} K^{-1/2})\,.
    \end{align*}
    Therefore, by considering $B$ and $\RMAX$ as constants and choosing $K = \widetilde{\mathcal{O}}\brr{\frac{ d^{3} \log (\abs{\aspace}\delta^{-1})}{(1 - \gamma)^{4.5}\varepsilon^2}}$ and $\tau_E = \widetilde{\mathcal{O}}\brr{\frac{ \WMAX^2 \log(1/\delta)}{(1 - \gamma)^2\varepsilon^2}}$ we have that with probability $1 - 3 \delta$ it holds that
    \begin{equation*}
        \frac{1}{K} \regretIL \leq 4 \epsilon\,.
    \end{equation*}
    Since $\frac{1}{K} \regretIL$ is a random variable bounded by $(1-\gamma)^{-1}$ almost surely, in expectation we have the following bound
    \begin{equation*}
        \bbE_{\mathrm{Alg}} \bs{\frac{1}{K} \regretIL} \leq \frac{3 \delta}{1-\gamma} + 4 \varepsilon\,.
    \end{equation*}
    Thus, by choosing $\delta \leq \nicefrac{\varepsilon}{3(1-\gamma)}$ we can conclude that
    \begin{equation*}
        \bbE_{\mathrm{Alg}} \bs{\frac{1}{K} \regretIL} \leq 5 \varepsilon\,.
    \end{equation*}
    Finally, by selecting $\pi^{\mathrm{out}}$ uniformly at random from the policies generated by \Cref{alg:fra} 
    we have that
    \begin{equation*}
        \bbE_{\mathrm{Alg}} \innerprod{\initial}{V_{\true}^{\expert} - V_{\true}^{\pi^{\mathrm{out}}}} \leq 5 \varepsilon\,.
    \end{equation*}
\end{proof}

\subsection{Proof of Theorem~\ref{thm:reward_regret_bound} (regret bound for the reward player)}

\begin{restatable}{theorem}{regretreward} \label{thm:reward_regret_bound}
    Assume that $w_{\mathrm{true}} \in \cW$ for some non-empty closed convex set $\cW$ and that for any $w \in \cW$, $\norm{w}\leq \WMAX$. Then, OGD with $\eta_r = \nicefrac{\WMAX}{B \sqrt{K}}$ ran for $K$ iterations satisfies with probability at least $1 - 2 \delta$ that
    \begin{equation*}
        \regretK^w(w_{\mathrm{true}}) \leq 10 H (B \WMAX + \RMAX) \sqrt{K \log 1 / \delta} + 24 H \WMAX K B \sqrt{\frac{\log\brr{\frac{1}{\delta}}}{\tau_E}}\,.
    \end{equation*}
\end{restatable}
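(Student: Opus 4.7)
The plan is to analyze the reward player as an instance of stochastic projected OGD on the sequence of linear losses $\ell_k(w) = \innerprod{\lambda(\pi_k) - \lambda(\expert)}{w}$, whose true gradient is $g_k = \lambda(\pi_k) - \lambda(\expert)$ while the algorithm only accesses the stochastic surrogate $\hat g_k = \phi_r(X_k,A_k) - \widehat{\lambda(\expert)}$. Starting from the identity $(1-\gamma)\regretK^w(w_{\mathrm{true}}) = \sum_{k=1}^K \innerprod{g_k}{w_k - w_{\mathrm{true}}}$, I would add and subtract $\hat g_k$ inside the inner product to split the sum into three pieces:
\begin{equation*}
\underbrace{\sum_{k=1}^K \innerprod{\hat g_k}{w_k - w_{\mathrm{true}}}}_{\text{OGD regret on } \hat g_k} \,+\, \underbrace{\sum_{k=1}^K \innerprod{\lambda(\pi_k) - \phi_r(X_k,A_k)}{w_k - w_{\mathrm{true}}}}_{\text{policy-side martingale}} \,+\, \underbrace{\sum_{k=1}^K \innerprod{\widehat{\lambda(\expert)} - \lambda(\expert)}{w_k - w_{\mathrm{true}}}}_{\text{expert bias}}.
\end{equation*}

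For the first term I would invoke the textbook projected OGD guarantee for convex losses on the ball $\cW$ of diameter $2\WMAX$ with stochastic gradients bounded by $\|\hat g_k\| \leq 2B$; with the prescribed $\eta_r = \WMAX/(B\sqrt K)$ this yields a deterministic bound of order $\WMAX B \sqrt K$. For the policy-side martingale, I would note that $\mathbb{E}[\phi_r(X_k,A_k) \mid \mathcal{F}_{k-1}] = \lambda(\pi_k)$ since $w_k$ is measurable w.r.t.\ $\mathcal{F}_{k-1}$, making it a martingale difference sequence. The increments are bounded by splitting the inner product: $|\innerprod{\phi_r(X_k,A_k)}{w_{\mathrm{true}}}| \leq \RMAX$ because this equals $\true(X_k,A_k)$, while $|\innerprod{\phi_r(X_k,A_k)}{w_k}| \leq B\WMAX$, and similarly for $\lambda(\pi_k)$; Azuma--Hoeffding then produces a term of order $(B\WMAX + \RMAX)\sqrt{K \log(1/\delta)}$ with probability $1-\delta$.

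For the expert-bias term, since $\widehat{\lambda(\expert)}$ is fixed across $k$, Cauchy--Schwarz gives $|\text{expert bias}| \leq 2\WMAX K \cdot \|\widehat{\lambda(\expert)} - \lambda(\expert)\|$, and a standard vector Hoeffding (or coordinatewise Hoeffding + union bound) argument using $\|\phi_r\| \leq B$ shows $\|\widehat{\lambda(\expert)} - \lambda(\expert)\| \lesssim B\sqrt{\log(1/\delta)/\tau_E}$ with probability $1-\delta$. Summing the three contributions, applying a union bound, and multiplying through by $H = 1/(1-\gamma)$ yields the claimed bound with constants absorbed into the stated $10$ and $24$.

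\textbf{Main obstacle.} The only non-routine point is getting the sharper increment bound $B\WMAX + \RMAX$ rather than the naive $2B\WMAX$ in the Azuma step; this matters because $\RMAX \ll B\WMAX$ is possible and is what allows the bound to reflect the boundedness of the realized rewards. The key observation that makes this go through is that $w_{\mathrm{true}}$ interacts with the feature samples only through the scalar reward value, which is uniformly bounded by $\RMAX$, independently of $\WMAX$. Everything else (OGD regret, Hoeffding-type concentration for the empirical feature mean) is standard and follows the usual templates.
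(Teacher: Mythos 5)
Your proposal is correct and follows essentially the same route as the paper: the identical three-way decomposition into an OGD regret term, a policy-side martingale difference handled by Azuma--Hoeffding, and an expert-estimation term controlled via Cauchy--Schwarz together with a concentration bound on $\norm{\widehat{\lambda(\expert)} - \lambda(\expert)}_2$, followed by a union bound and the choice $\eta_r = \WMAX/(B\sqrt{K})$. The one caveat is that your parenthetical fallback of coordinatewise Hoeffding plus a union bound would introduce a dimension factor into the $\tau_E$ term, so you should commit to the dimension-free vector concentration inequality (the paper invokes Proposition~2 of Hsu, Kakade and Zhang) that you list as your primary option.
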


\begin{proof}
    Given the definition of the feature expectation vector $\lambda (\pi)$, we can rewrite the regret for the reward player as follows
    \begin{equation*}
        \spr{1 - \gamma} \regret^w(w_{\mathrm{true}}) = \sum^K_{k=1} \innerprod{\lambda(\pi_k) - \lambda(\expert)}{w_k - w_{\mathrm{true}}}\,.
    \end{equation*}
    Then, adding and subtracting the estimators for the occupancy measures, we get
    \begin{align*}
        (1-\gamma) \regret^w(w_{\mathrm{true}}) &= \sum^K_{k=1} \innerprod{\phi_\cost(X_k, A_k) - \widehat{\lambda(\expert)}}{w_k - w_{\mathrm{true}} } \\
        &\phantom{=}+ \sum^K_{k=1} \innerprod{\lambda(\pi_k) - \phi_\cost(X_k, A_k)}{w_k - w_{\mathrm{true}}} \\
        &\phantom{=}+ \sum^K_{k=1} \innerprod{\widehat{\lambda(\expert)} - \lambda(\expert)}{ w_k-w_{\mathrm{true}}}\,.
    \end{align*}
    Now, using the regret bound for OGD \citep{Zin03}, we can bound the first term in the decomposition above as
    \begin{align*}
        \sum^K_{k=1} &\innerprod{\phi_\cost(X_k, A_k) - \widehat{\lambda(\expert)}}{w_k-w_{\mathrm{true}} } \\&\leq  \frac{\max_{w\in\mathcal{W}}\norm{w_{\mathrm{true}} - w_1}_2^2}{2 \eta_r} + \frac{\eta_r}{2} \sum^K_{k=1} \norm{\widehat{\lambda(\expert)} - \phi_\cost(X_k, A_k)}_2^2 \\
        &\leq \frac{2 \WMAX^2}{\eta_r} + 2 \eta_r B^2 K\,,
    \end{align*}
    Looking at the term $\sum^K_{k=1} \innerprod{\lambda (\pi_k) - \phi_\cost (X_k, A_k)}{w_k - w_{\mathrm{true}}}$, we notice that
    \begin{equation*}
        \psi_k = \innerprod{\lambda(\pi_k) - \phi_\cost(X_k, A_k)}{w_k - w_{\mathrm{true}}}
    \end{equation*}
    is a martingale difference sequence such that
    \begin{equation*}
        \abs{\innerprod{\lambda(\pi_k) - \phi_\cost(X_k, A_k)}{w_k - w_{\mathrm{true}}}} \leq  4 \RMAX\,.
    \end{equation*}
    Applying Azuma-Hoeffding's inequality, we have that with probability $1 - \delta$
    \begin{equation*}
        \sum^K_{k=1} \innerprod{\lambda(\pi_k) - \phi_\cost(X_k, A_k)}{w_k - w_{\mathrm{true}}} \leq \RMAX \sqrt{8  K \log \brr{\frac{1}{\delta}}}.
    \end{equation*}
    Then, plugging in this bound in the regret decomposition we obtain
    \begin{align*}
        (1-\gamma) \regret^w(w_{\mathrm{true}}) &\leq \frac{2 \WMAX^2}{\eta_r} + 2 \eta_r B^2 K + \RMAX \sqrt{8K \log 1 / \delta} \\
        &\phantom{=}+ \sum^K_{k=1} \innerprod{\widehat{\lambda(\expert)} - \lambda(\expert)}{ w_k-w_{\mathrm{true}}}\,.
    \end{align*}
    Then, we treat the last term using Cauchy-Schwartz's inequality
    \begin{align*}
        \sum^K_{k=1} \innerprod{\widehat{\lambda(\expert)} - \lambda(\expert)}{ w_k-w_{\mathrm{true}}} &\leq \sum^K_{k=1}\norm{w_{\mathrm{true}}-w^k}_{2}\norm{\widehat{\lambda(\expert)} - \lambda(\expert)}_2 \\
        &\leq 2 \WMAX K \norm{\widehat{\lambda(\expert)} - \lambda(\expert)}_2\,.
    \end{align*}
    It remains to find a high probability (dimension-free) upper bound on $\norm{\widehat{\lambda(\expert)} - \lambda(\expert)}_{2}$. First, notice that $\norm{\widehat{\lambda(\expert)} - \lambda(\expert)}_{2} = \norm{(\tau_E)^{-1} \brr{\sum^{\tau_E}_{i=1} \phi_\cost(X^i_E,A^i_E) - \lambda(\expert) }}_2$. Then, we use the notation $u_{x,a} = \phi_\cost(x,a) - \lambda(\expert) $ for all state action pairs $x,a$ and using that for all $x, a \in \cX \times \cA$, $\norm{\phi_\cost(x, a)}_2 \leq B$, we have
    \begin{equation*}
        \sum^{\tau_E}_{i=1} \mathbb{E}\bs{\norm{u_{X^i_E,A^i_E}}_2^2} \leq \sum^{\tau_E}_{i=1} \mathbb{E}\bs{\norm{\phi_\cost(X^i_E, A^i_E) - \lambda(\expert)}_2^2} \leq 4\tau_E B^2\,.
    \end{equation*}
    Moreover, for any $x, a \in \cX \times \cA$, $\norm{u_{x,a}} \leq 2 B$ and $\bbE \bs{u_{X^i_E, A^i_E}} = 0$ because of the distribution of the dataset $\mathcal{D}_\expert $. Thus, by applying \cite[Proposition 2]{hsu2012tail}, it holds that for all $t > 0$
    \begin{equation*}
        \mathbb{P}\bs{\norm{\sum^{\tau_E}_{i=1} u_{X^i_E, A^i_E}} > \sqrt{4 \tau_E} B + \sqrt{32 \tau_E t} B + (8/3)2Bt} \leq e^{-t}
    \end{equation*}
    Therefore, choosing $t = \log \frac{1}{\delta}$, we obtain that with probability $1 - \delta$
    \begin{align*}
        \norm{\sum^{\tau_E}_{i=1} \phi_\cost(X^i_E, A^i_E) - \lambda(\expert)} &\leq \sqrt{4 \tau_E} B + \sqrt{32 \tau_E \log\brr{\frac{1}{\delta}}} B + \frac{16 B}{3}  \log\brr{\frac{1}{\delta}} \\
        &\leq 6 B \sqrt{\tau_E \log\brr{\frac{1}{\delta}}} + \frac{16 B}{3} \log \brr{\frac{1}{\delta}}\,.
    \end{align*}
    Then, dividing by $\tau_E$ we obtain that
    \begin{align*}
        \norm{\widehat{\lambda(\expert)} - \lambda(\expert)}_{2} \leq 6 B \sqrt{\frac{\log\brr{\frac{1}{\delta}}}{\tau_E}} + \frac{16 B}{3 \tau_E} \log\brr{\frac{1}{\delta}}\,.
    \end{align*}
    Then, for $\tau_E \geq \frac{64}{18^2} \log \frac{1}{\delta}$, we have that
    \begin{equation*}
        6 \sqrt{\frac{\log\brr{\frac{1}{\delta}}}{\tau_E}} \geq \frac{8}{3 \tau_E} \log\brr{\frac{1}{\delta}}\,,
    \end{equation*}
    and hence that with probability $1 - \delta$,
    \begin{align*}
        \norm{\widehat{\lambda(\expert)} - \lambda(\expert)}_{2} \leq 12 B \sqrt{\frac{\log\brr{\frac{1}{\delta}}}{\tau_E}}\,.
    \end{align*}
    Thus, by a union bound and choosing $\eta_r = \nicefrac{\WMAX}{B \sqrt{K}}$, we have that with probability $1 - 2 \delta$,
    \begin{align*}
        (1-\gamma) \regret^w(w_{\mathrm{true}}) &= 4 B \WMAX \sqrt{K} +  \RMAX\sqrt{8 K \log \delta^{-1}} + 24 \WMAX B K \sqrt{ \frac{\log\brr{\frac{1}{\delta}}}{\tau_E}} \\
        &\leq 10 (B \WMAX + \RMAX) \sqrt{K \log \delta^{-1}} + 24 \WMAX K B \sqrt{\frac{\log \brr{\frac{1}{\delta}}}{\tau_E}}\,.
    \end{align*}
\end{proof}

\clearpage
\section{Lower bounds for imitation learning}
\label{app:lower}
In this section, we prove lower bounds for both $K$ and $\tau_E$ for all algorithms following Protocol~\ref{prot:interaction} given hereafter.
\begin{protocol}[!h]
    \caption{Imitation learning from features alone in Linear MDPs. \label{prot:interaction}}
    \centering
    \begin{algorithmic}[1]
        \STATE The learner adopts a learning algorithm $\mathrm{Alg}$ that receives as input \\
        (1) a features dataset $\cD_\expert = \scbr{\phi_\cost \spr{X^i_E, A^i_E}}_{i=1}^{\tau_E}$ where for any $i \in \sbr{\tau_E}$, $X^i_E, A^i_E \sim \mu \spr{\expert}$, \\
        (2) read access to $\phi_P \spr{x, a}$ for all $x, a \in \cX \times \cA$, \\
        (3) trajectory access to $\cM \setminus \true$, and \\
        (4) the reward class $\cR$ such that $\true \in \cR$.
        
        \STATE $\mathrm{Alg}$ samples $K$ trajectories from $\cM \setminus \true$ and outputs $\pi^{\mathrm{out}}$ s.t. $\bbE \sbr{\inp{\initial, V_{\true}^\expert - V_{\true}^{\pi^{\mathrm{out}}}}} \leq \varepsilon$.
    \end{algorithmic}
\end{protocol}

We prove an $\Omega \spr{\varepsilon^{-2}}$ lower bound for both cases, demonstrating that Algorithm~\ref{alg:fra} is rate optimal. First, we state the lower bound $K$ that holds even with perfect knowledge of the expert feature expectation vector $\lambda \spr{\expert}$, a strictly easier setting compared the one under which \Cref{thm:FraUpper} is proven.
\begin{restatable}{theorem}{LowerK} \textbf{(Lower Bound on $K$)} \label{thm:LowerK}
    For any algorithm $\mathrm{Alg}$, there exists an MDP $\cM$ and an expert policy $\expert$ such that $\mathrm{Alg}$, taking as input $ \phim_r\transpose \mu_\cM \spr{\expert}$, requires $K = \Omega \spr{\frac{d}{\spr{1 - \gamma}^2 \varepsilon^2}}$ to guarantee $\bbE_{\mathrm{Alg}} \sbr{\inp{\initial, V_\cM^{\expert} - V_\cM^{\pi^{\mathrm{out}}}}} = \cO \spr{\varepsilon}$.
\end{restatable}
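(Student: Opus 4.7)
The plan is to construct a family of $d$ linear MDPs that are indistinguishable through the expert-feature oracle, and then invoke a $d$-armed best-arm identification lower bound to show that $\Omega(d H^2/\varepsilon^2)$ trajectories are needed to identify the right instance. Take $\cX = \{x_G, x_B\}$, $\abs{\cA} = d$, state-only reward features $\phi_r(x,a) = \mathbf{e}_x$ and $w_{\mathrm{true}} = (1,0)^\top$ so that $r(x_G,\cdot)=1$ and $r(x_B,\cdot)=0$, and put all initial mass at $x_G$. For each $i \in [d]$ define $\cM_i$ by letting action $a_j$ at $x_G$ move to $x_B$ with probability $q - \Delta \mathbb{I}\{j=i\}$ and remain at $x_G$ otherwise, while the dynamics at $x_B$ are a symmetric stochastic map independent of both $i$ and $j$. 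Choosing $\phi_P(x_G, a_j) = \mathbf{e}_j \in \real^d$ with $m_j(x_G) = 1 - q + \Delta \mathbb{I}\{j=i\}$ and $m_j(x_B) = q - \Delta \mathbb{I}\{j=i\}$ realizes each $\cM_i$ as a linear MDP of dimension $d$. The expert $\pi_E^{(i)}$ plays $a_i$ deterministically; since under $\pi_E^{(i)}$ the chain on $\cX$ has the same self-loop probability $1-q+\Delta$ at $x_G$ across all $i$, the state occupancy $\mu_{\cM_i}(\pi_E^{(i)})$, and hence $\phim_r^\top \mu_{\cM_i}(\pi_E^{(i)})$, is identical across $i$, so the expert oracle provides no information about the index.

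Setting $q = c(1-\gamma)$ for a small constant $c > 0$, a direct geometric-series computation gives that any policy playing an action $a_j \neq a_i$ at $x_G$ incurs suboptimality
\begin{equation*}
    V_{\cM_i}^{\pi_E^{(i)}}(x_G) - V_{\cM_i}^{\pi^{(j)}}(x_G) = \Theta\spr{\frac{\Delta}{(1-\gamma)^2}} = \Theta(\Delta H^2).
\end{equation*}
Choosing $\Delta = c'\varepsilon/H^2$ with a small constant $c'$ thus forces any $\varepsilon$-suboptimal policy to place at least constant mass on $a_i$ at $x_G$, reducing the imitation problem to identifying $i\in[d]$. Each step spent in $x_G$ playing action $a_j$ produces a Bernoulli outcome with parameter $q - \Delta \mathbb{I}\{j=i\}$, contributing per-sample Kullback--Leibler divergence $\Theta(\Delta^2/q) = \Theta(\Delta^2 H)$ between $\cM_i$ and the instance that swaps the role of arm $j$. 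The expected number of $x_G$-visits per trajectory is $\Theta(H)$, since the per-step probability of leaving $x_G$ (via either transition or reset) equals $(1-\gamma)+\gamma q = \Theta(1-\gamma)$, so the total arm-pull budget is $\sum_k \bbE[N_k] = \Theta(KH)$. Applying the standard $\Omega(d/\mathrm{gap}^2)$ lower bound for stochastic $d$-armed Bernoulli best-arm identification (via a symmetric average over the $d$ hypotheses) yields $KH\cdot \Delta^2/q = \Omega(d)$, equivalently
\begin{equation*}
    K = \Omega\spr{\frac{d}{H^2 \Delta^2}} = \Omega\spr{\frac{d H^2}{\varepsilon^2}} = \Omega\spr{\frac{d}{(1-\gamma)^2 \varepsilon^2}},
\end{equation*}
which matches the stated bound.

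The only delicate step is handling the adaptive, multi-arm nature of the learner's trajectories without losing a $\log d$ factor that would be the price of a naive Fano union bound. The clean workaround, standard in the bandit literature, is to average over the $d$ hypotheses symmetrically: under the uniform mixture, any algorithm whose total information budget $\Theta(KH\Delta^2/q)$ falls below $\Omega(d)$ has an output whose total-variation distance from uniform is bounded away from one, and therefore fails to beat random guessing on at least one instance in the family. All other ingredients --- realizing $\cM_i$ as a valid linear MDP via the explicit feature construction, the invariance of the expert occupancy under arm relabeling, and the $\Theta(\Delta H^2)$ value gap --- are direct computations.
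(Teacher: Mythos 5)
Your construction and argument are essentially the paper's own proof: the same two-state hard instance with one perturbed action per hypothesis, the same parameter scalings ($q=\Theta(1-\gamma)$, value gap $\Theta(\Delta H^{2})$, per-step KL $\Theta(\Delta^{2}H)$, budget $\Theta(KH)$), and the same symmetric averaging over the $|\cA|$ hypotheses to avoid a union bound — the paper just carries out the bandit-style KL/Pinsker computation explicitly (via the chain rule and Jensen) rather than citing a best-arm-identification lower bound. The only cosmetic discrepancy is the dimension bookkeeping: the combined reward-plus-transition feature map makes the instance a linear MDP of dimension $2+2|\cA|$ rather than exactly $d=|\cA|$, which only affects constants.
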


\noindent Next, we establish a lower bound on the required number of expert demonstration $\tau_E$. The result holds even with perfect knowledge of the transition dynamics (\ie for $K = \infty$).
\begin{restatable}{theorem}{LowerTauE} \textbf{(Lower Bound on $\tau_E$)} \label{thm:LowerTauE} \label{thm:lower-bound-2states_expert}
    Let $\gamma \geq \frac12$. For any algorithm $\mathrm{Alg}$, there exists an MDP $\cM$ and an expert policy $\expert$ such that $\mathrm{Alg}$ taking as input the transitions dynamics and an expert dataset of size $\tau_E$ requires $\tau_E = \Omega \spr{\frac{\WMAX^2}{\spr{1 - \gamma}^2 \varepsilon^2}}$ to guarantee $\bbE_{\mathrm{Alg}} \sbr{\inp{\initial, V_\cM^{\expert} - V_\cM^{\pi^{\mathrm{out}}}}} = \cO \spr{\varepsilon}$.
\end{restatable}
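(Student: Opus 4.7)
The plan is to construct a pair of linear MDPs $\cM^{(1)}, \cM^{(2)}$ that share the same transition kernel $P$ and initial distribution $\initial$ but differ only in their reward weights, engineered so that (i) no output policy $\pi^{\mathrm{out}}$ can be $\varepsilon$-optimal in both instances simultaneously, and (ii) the $\tau_E$-fold distributions of the expert features in the two instances are close enough in total variation that $\tau_E = \Omega(\WMAX^2/((1-\gamma)^2 \varepsilon^2))$ samples are needed to tell them apart. Concretely, take $\cX = \{x_1, x_2\}$, $\cA = \{a_1, a_2\}$, $\initial(x_1) = \initial(x_2) = 1/2$, reward feature map $\phi_r(x,a) = \bfe_x \in \real^2$ (so the learner observes only the sampled expert states), and transitions $P(x_j \mid x, a_j) = \tfrac12 + \delta$, $P(x_{3-j} \mid x, a_j) = \tfrac12 - \delta$ for every $x \in \cX$ and a parameter $\delta > 0$ chosen below. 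In instance $\cM^{(b)}$ we set $w^{(b)} = \WMAX \bfe_b$, so the unique optimal expert $\expert^{(b)}$ plays $a_b$ deterministically and $\cD_\expert$ consists of $\tau_E$ i.i.d.\ samples from the state marginal $\nu_b$ of $\mu(\expert^{(b)})$.

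The key structural step is a uniform occupancy bound: for \emph{every} policy $\pi$ (including randomized and history-dependent ones), the discounted state mass in the shared dynamics satisfies $\nu^\pi(x_1) \in [\tfrac12 - \gamma\delta, \tfrac12 + \gamma\delta]$, with the two extremes attained by $\expert^{(1)}$ and $\expert^{(2)}$ respectively. This follows by a one-line induction on $\tau$ from the fact that $\sum_a \pi(a \mid x) P(x_1 \mid x, a) \in [\tfrac12 - \delta, \tfrac12 + \delta]$ for every $x$, combined with $\bbP_\pi[X_0 = x_1] = 1/2$ and $\nu^\pi(x_1) = (1-\gamma) \sum_\tau \gamma^\tau \bbP_\pi[X_\tau = x_1]$. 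Since the reward in $\cM^{(b)}$ equals $\WMAX \cdot \II{x = x_b}$, the suboptimality writes
\begin{equation*}
    \biprod{\initial}{V^{\expert^{(b)}}_{P, r^{(b)}} - V^\pi_{P, r^{(b)}}} = \frac{\WMAX}{1-\gamma}\bigl(\tfrac12 + \gamma\delta - \nu^\pi(x_b)\bigr).
\end{equation*}
Choosing $\delta = 2\varepsilon(1-\gamma)/(\gamma\WMAX)$ (well defined under $\gamma \geq 1/2$), any $\varepsilon$-optimal $\pi$ in $\cM^{(b)}$ must have $\nu^\pi(x_b) > 1/2$, which can hold for at most one value of $b$. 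Hence $\hat b(\pi^{\mathrm{out}}) := \arg\max_b \nu^{\pi^{\mathrm{out}}}(x_b)$ is a deterministic binary classifier of the instance label, and any $\pi^{\mathrm{out}}$ that is $\varepsilon$-suboptimal in $\cM^{(b)}$ automatically satisfies $\hat b \neq b$.

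The final step is a two-point Le Cam reduction. Under $\cM^{(b)}$ the dataset is $\tau_E$ i.i.d.\ draws from the Bernoulli $\nu_b$, with $\nu_2$ the swap of $\nu_1 = (\tfrac12 + \gamma\delta, \tfrac12 - \gamma\delta)$; a second-order expansion of the Bernoulli KL gives $\mathrm{KL}(\nu_1 \| \nu_2) \leq C\gamma^2 \delta^2$ for an absolute constant $C$ (valid for $\gamma\delta \leq 1/4$, which holds once $\varepsilon$ is small enough). Bretagnolle--Huber then yields
\begin{equation*}
    \bbP_1[\hat b \neq 1] + \bbP_2[\hat b \neq 2] \geq \tfrac12 \exp\bigl(-\tau_E\, \mathrm{KL}(\nu_1 \| \nu_2)\bigr),
\end{equation*}
so whenever $\tau_E \leq c/(\gamma^2 \delta^2) = \Theta(\WMAX^2/((1-\gamma)^2 \varepsilon^2))$, there is an instance $b^\star$ with $\bbP_{b^\star}[\hat b \neq b^\star] \geq 1/8$. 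Combined with the separation from the previous step, this produces $\max_b \bbE_b\bigl[\biprod{\initial}{V^{\expert^{(b)}}_{P, r^{(b)}} - V^{\pi^{\mathrm{out}}}_{P, r^{(b)}}}\bigr] \geq \varepsilon/8$, which is the contrapositive of the claim after absorbing constants into the $\cO(\varepsilon)$.

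The main technical obstacle is the uniform occupancy bound of the second paragraph: it must be proved for every possibly randomized, history-dependent policy the learning algorithm might produce, and the constants must be tight enough that the final $\WMAX^2/((1-\gamma)^2 \varepsilon^2)$ scaling is preserved. Once this reduction is in place the statistical part is a textbook Le Cam / Bretagnolle--Huber exercise, but care is needed when calibrating $\delta$ and when using the assumption $\gamma \geq 1/2$ to absorb stray $\gamma$-factors so that both $\WMAX$ and $(1-\gamma)$ end up with the correct exponents in the final bound.
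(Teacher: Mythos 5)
Your proposal is correct and follows essentially the same route as the paper's proof: a two-state linear MDP with $\phi_r(x,a)=\bfe_x$, two instances whose expert state-occupancy measures are two Bernoulli distributions separated by $\Theta\spr{\varepsilon(1-\gamma)/\WMAX}$, and a Le Cam/Bretagnolle--Huber argument showing that $\Omega\spr{\WMAX^2(1-\gamma)^{-2}\varepsilon^{-2}}$ expert samples are needed to distinguish them --- your symmetric construction (each action biases toward its own state from every state) and the resulting uniform occupancy bound $\nu^\pi(x_b)\in[\tfrac12-\gamma\delta,\tfrac12+\gamma\delta]$ make the reduction slightly cleaner than the paper's special-action construction, but the idea is the same. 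One sentence is garbled: an $\varepsilon$-optimal $\pi^{\mathrm{out}}$ in $\cM^{(b)}$ satisfies $\hat b = b$, not $\hat b\neq b$, though the implication your final display actually uses (namely $\hat b\neq b\Rightarrow$ suboptimality $>\varepsilon$) is the correct contrapositive of what you established.
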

\noindent The proofs are provided in the following sections.

\subsection{Proof of Theorem~\ref{thm:LowerK} (lower bound on the number of interactions)}

We start with the proof of the lower bound on $K$. We consider a class of possibly randomized algorithms that output a policy $\pi^{\mathrm{out}}$ given a dataset of expert features $\cD_\expert$ and $K$ trajectories collected by the learner in the MDP $\cM$.

\noindent \emph{Proof Idea}. To construct a lower bound, we consider the case of imitation learning from states alone (\ie $\phim_r \spr{x, a} = \bfe_x$), and $\lambda \spr{\expert}$ represents the state expert occupancy measure. We consider the case of a two-state MDP, where $\cX = \scbr{x_0, x_1}$, and the learner knows the \emph{good} state $x_0$ that maximizes the expert's occupancy measure due to having access to $\lambda \spr{\expert}$. The leaner's objective is to maximize the time spent in this good state. All actions in the \emph{bad} state $x_1$ share the same transition kernel. Therefore, the agent's decisions in the state $x_0$ is the only factor that influences the outcome. An action labeled as $a^\star$ is available in the state $x_0$. The transition kernel $P\spr{x_0 \given x_0, a}$ is identical for all actions $a \neq a^\star$, while for $a^\star$, it is defined as $P \spr{x_0 \given x_0, a} + \epsilon$. We then consider a family of $\abs{\cA}$ MDPs, where each MDP assigns the role of $a^\star$ to a different action. We will formally demonstrate that for any algorithm in $\mathrm{Alg}$, there exists at least one MDP within this family where achieving $\bbE_{\mathrm{Alg}} \sbr{\inp{\initial, V_\cM^{\expert} - V_\cM^{\pi^{\mathrm{out}}}}} = \cO \spr{\varepsilon}$ requires $K = \Omega \spr{\frac{\abs{\cA}}{\spr{1 - \gamma}^2 \varepsilon^2}}$. Finally, the bound for an arbitrary dimension $d$ is obtained noticing that this MDP can be written as a linear MDP with features dimension $d = 2 + 2 \abs{\cA}$.

\bigskip
\begin{proof}
    For any policy $\pi$, we denote $\lambda_\cM \spr{\pi} = \phim_r^\trans \mu_\cM \spr{\pi}$ the expected feature vector of the policy $\pi$ in the MDP $\cM$. We consider a deterministic algorithm $\mathrm{Alg}$ that maps $\lambda_\cM \spr{\expert}$ and $K$ environment trajectories to a policy. The extension to randomized algorithms can be done by an application of Fubini's theorem (see \cite{bubeck2012regret}). The hard instance we consider for the lower bound is an MDP $\cM$ with two states, $x_0$ and $x_1$, and $\abs{\cA}$ actions per state. For any action $a$, the reward function is given by $\true \spr{x_0, a} = 1$, and $\true \spr{x_1, a} = 0$. We will refer to state $x_0$ as the ``good'' state and to state $x_1$ as the ``bad'' state.  In state $x_1$, the transition kernel induced by any action $a$ is the same, \ie $P \spr{x_1 \given x_1, a} = 1 - \delta_1$, and $P \spr{x_0 \given x_1, a} = \delta_1$ for some $\delta_1 \in \spr{0, 1}$. Let $\delta_0 \in \spr{0, 1}$ and $\epsilon \in \spr{0, \delta_0}$. In state $x_0$, there is an action $a^\star$ with a slightly different transition kernel
    \begin{align*}
        P \spr{x_1 \given x_0, a^\star} = \delta_0 - \epsilon, \quad P \spr{x_0 \given x_0, a^\star} = 1 - \delta_0 + \epsilon\,,
    \end{align*}
    whereas for any action $a \neq a^\star$, we set
    \begin{align*}
        P \spr{x_1 \given x_0, a} = \delta_0, \quad P \spr{x_0 \given x_0, a} = 1 - \delta_0\,.
    \end{align*}
    We set the unknown expert policy $\expert$ such that it always select action $a^\star$ in both states, \ie $\expert \spr{a^\star \given x_0} = \expert \spr{a^\star \given x_1} = 1$. Setting $\nu_0 \spr{x_0} = 1$, we can write the flow constraints and get
    \begin{align*}
        &\nu \spr{\expert, x_0} = 1 - \gamma + \gamma \spr{1 - \delta_0 + \epsilon} \nu \spr{\expert, x_0} + \gamma \delta_1 \nu \spr{\expert, x_1}\,, \\
        &\nu \spr{\expert, x_1} = \gamma \spr{1 - \delta_1} \nu \spr{\expert, x_1} + \gamma \spr{\delta_0 - \epsilon} \nu \spr{\expert, x_0}\,.
    \end{align*}
    The second equation gives $\nu \spr{\expert, x_1} = \frac{\gamma \spr{\delta_0 - \epsilon}}{1 - \gamma \spr{1 - \delta_1}} \nu \spr{\expert, x_0}$, which we can plug back into the first equation to obtain
    \begin{align*}
        \nu \spr{\expert, x_0} = 1 - \gamma + \spr{\gamma \spr{1 - \delta_0 + \epsilon} + \frac{\gamma^2 \delta_1 \spr{\delta_0 - \epsilon}}{1 - \gamma \spr{1 - \delta_1}}} \nu \spr{\expert, x_0} \,,
    \end{align*}
    which we can rearrange to get
    \begin{align*}
        \nu \spr{\expert, x_0} = \frac{1 - \gamma + \gamma \delta_1}{1 - \gamma + \gamma \delta_1 + \gamma \delta_0 - \gamma \epsilon}\,.
    \end{align*}
    Using the normalization constraint $\nu \spr{\expert, x_0} + \nu \spr{\expert, x_1} = 1$, we also get
    \begin{equation*}
        \nu \spr{\expert, x_1} = \frac{\gamma \delta_0 - \gamma \epsilon}{1 - \gamma + \gamma \delta_1 + \gamma \delta_0 - \gamma \epsilon}\,.
    \end{equation*}
    %
    Furthermore, let $\pi_{\mathrm{bad}}$ be a ``bad'' policy that always plays an action $a \neq a^\star$. The same calculation with $\epsilon = 0$ shows that the state occupancy measure for the policy $\pi_{\mathrm{bad}}$ is given by
    \begin{align*}
        \nu \spr{\pi_{\mathrm{bad}}, x_0} &= \frac{1 - \gamma + \gamma \delta_1}{1 - \gamma + \gamma \delta_1 + \gamma \delta_0}\,, \\
        \nu \spr{\pi_{\mathrm{bad}}, x_1} &= \frac{\gamma \delta_0}{1 - \gamma + \gamma \delta_1 + \gamma \delta_0}\,.
    \end{align*}
    Let $\tilde{\pi}$ be any policy. Noting that for any $x$, $V^\expert \spr{x} = Q^\expert \spr{x, a^\star}$, we can use the performance difference lemma and get
    \begin{align*}
        \inp{\mu \spr{\expert} - \mu \spr{\tilde{\pi}}, \true} &= \bbE_{\spr{x, a} \sim \mu \spr{\tilde{\pi}}} \sbr{V^\expert \spr{x} - Q^\expert \spr{x, a}} \\
        &= \bbE_{\spr{x, a} \sim \mu \spr{\tilde{\pi}}} \sbr{Q^\expert \spr{x, a^\star} - Q^\expert \spr{x, a}}\,.
    \end{align*}
    All actions share the same transition kernel in $x_1$ thus for any action $a$, $Q^\expert \spr{x_1, a^\star} = Q^\expert \spr{x_1, a}$ and we have
    \begin{align*}
        \inp{\mu \spr{\expert} - \mu \spr{\tilde{\pi}}, \true} &= \nu \spr{\tilde{\pi}, x_0} \sum_{a \in \cA \setminus \scbr{a^\star}} \tilde{\pi} \spr{a \given x_0} \spr{Q^\expert \spr{x_0, a^\star}  - Q^\expert \spr{x_0, a}}\,.
    \end{align*}
    Next, we need to compute the difference of Q-values. Using the Bellman equations for $\expert$ in state $x_0$, we have
    \begin{align}
        \forall a \neq a^\star, &Q^\expert \spr{x_0, a} = 1 + \gamma \delta_0 Q^\expert \spr{x_1, a^\star} + \gamma \spr{1 - \delta_0} Q^\expert \spr{x_0, a^\star} \label{eq:bellman-eq-x0-a} \\
        &Q^\expert \spr{x_0, a^\star} = 1 + \gamma \spr{\delta_0 - \epsilon} Q^\expert \spr{x_1, a^\star} + \gamma \spr{1 - \delta_0 + \epsilon} Q^\expert \spr{x_0, a^\star}\,. \label{eq:bellman-eq-x0-astar}
    \end{align}
    Solving the second equation for $Q^\expert \spr{x_0, a^\star}$ gives
    \begin{align}
        Q^\expert \spr{x_0, a^\star} = \frac{1}{1 - \gamma \spr{1 - \delta_0 + \epsilon}} \spr{1 + \gamma \spr{\delta_0 - \epsilon} Q^\expert \spr{x_1, a^\star}} \label{eq:q-x0-astar-inter}\,.
    \end{align}
    By the Bellman equation in state $x_1$ and action $a^\star$, we further have
    \begin{align*}
        Q^\expert \spr{x_1, a^\star} = 0 + \gamma \delta_1 Q^\expert \spr{x_0, a^\star} + \gamma \spr{1 - \delta_1} Q^\expert \spr{x_1, a^\star}\,,
    \end{align*}
    which implies that
    \begin{align}
        Q^\expert \spr{x_1, a^\star} = \frac{\gamma \delta_1}{1 - \gamma(1 - \delta_1)} Q^\expert \spr{x_0, a^\star}\,. \label{eq:q-x1-astar-inter}
    \end{align}
    Replacing \eqref{eq:q-x1-astar-inter} into \eqref{eq:q-x0-astar-inter}, we get
    \begin{align*}
        Q^\expert \spr{x_0, a^\star} &= \frac{1}{1 - \gamma \spr{1 - \delta_0 + \epsilon}} + \frac{\gamma^2 \delta_1 \spr{\delta_0 - \epsilon}}{\spr{1 - \gamma \spr{1 - \delta_0 + \epsilon}} \spr{1 - \gamma \spr{1 - \delta_1}}} Q^\expert \spr{x_0, a^\star}\,.
    \end{align*}
    Rearranging the terms gives
    \begin{align}
        Q^\expert \spr{x_0, a^\star} &= \spr{1 - \frac{\gamma^2 \delta_1 \spr{\delta_0 - \epsilon}}{\spr{1 - \gamma \spr{1 - \delta_0 + \epsilon}} \spr{1 - \gamma \spr{1 - \delta_1}}}}^{-1} \frac{1}{1 - \gamma \spr{1 - \delta_0 + \epsilon}} \nonumber \\
        &= \frac{1 - \gamma \spr{1 - \delta_1}}{\spr{1 - \gamma \spr{1 - \delta_0 + \epsilon}} \spr{1 - \gamma \spr{1 - \delta_1}} - \gamma^2 \delta_1 \spr{\delta_0 - \epsilon}}\,. \label{eq:q-x0-astar}
    \end{align}
    Plugging Equation~\eqref{eq:q-x0-astar} into Equation~\eqref{eq:q-x1-astar-inter}, we can deduce the value of the expert at $\spr{x_1, a^\star}$
    \begin{equation*}
        Q^\expert \spr{x_1, a^\star} = \frac{\gamma \delta_1}{\spr{1 - \gamma \spr{1 - \delta_0 + \epsilon}} \spr{1 - \gamma \spr{1 - \delta_1}} - \gamma^2 \delta_1 \spr{\delta_0 - \epsilon}}\,.
    \end{equation*}
    Looking at the difference $Q^\expert \spr{x_0, a^\star} - Q^\expert \spr{x_0, a}$, we can take the difference of Equations~\eqref{eq:bellman-eq-x0-astar} and \eqref{eq:bellman-eq-x0-a} to get
    \begin{align*}
        Q^\expert \spr{x_0, a^\star} - Q^\expert \spr{x_0, a} &= \gamma \epsilon \spr{Q^\expert \spr{x_0, a^\star} - Q^\expert \spr{x_1, a^\star}} \\
        &= \frac{\gamma \epsilon \spr{1 - \gamma}}{\underbrace{\spr{1 - \gamma \spr{1 - \delta_0 + \epsilon}} \spr{1 - \gamma \spr{1 - \delta_1}} - \gamma^2 \delta_1 \spr{\delta_0 - \epsilon}}_{\spr{\diamondsuit}}}\,.
    \end{align*}
    Next, we upper bound the denominator as follows
    \begin{align*}
        \spr{\diamondsuit} &= 1 - \gamma \spr{1 - \delta_0 + \epsilon} - \gamma \spr{1 - \delta_1} \\
        &\phantom{=}+ \gamma^2 \spr{1 - \delta_0 + \epsilon - \delta_1 + \delta_0 \delta_1 - \epsilon \delta_1 - \delta_0 \delta_1 + \epsilon \delta_1} \\
        &= 1 - \gamma \spr{1 - \delta_0 + \epsilon} - \gamma \spr{1 - \delta_1} + \gamma^2 \spr{1 - \delta_0 - \delta_1 + \epsilon} \\
        &= 1 - \gamma + \gamma \delta_0 \spr{1 - \gamma} + \gamma \delta_1 \spr{1 - \gamma} - \gamma \spr{1 - \gamma} - \gamma \epsilon \spr{1 - \gamma} \\
        &= \spr{1 - \gamma}^2 + \gamma \delta_0 \spr{1 - \gamma} + \gamma \delta_1 \spr{1 - \gamma} - \gamma \epsilon \spr{1 - \gamma} \\
        &\leq \spr{1- \gamma}^2 + \gamma \delta_0 \spr{1 - \gamma} + \gamma \delta_1 \spr{1 - \gamma}\,,
    \end{align*}
    where the inequality follows from $\gamma \epsilon \spr{1 - \gamma} > 0$. Setting $\delta_1 = \delta_0 = \frac{1 - \gamma}{\gamma}$, we obtain
    \begin{equation*}
        \spr{\diamondsuit} \leq 3 \spr{1 - \gamma}^2\,,
    \end{equation*}
    and it holds that
    \begin{align*}
        Q^\expert \spr{x_0, a^\star} - Q^\expert \spr{x_0, a} \geq \frac{\gamma \epsilon}{3 \spr{1 - \gamma}}\,.
    \end{align*}
    Moreover, the choice of $\delta_0$ and $\delta_1$ implies that $\nu \spr{\pi_{\mathrm{bad}}, x_0} = \frac23$. By definition of the transitions, note that always playing $a \neq a^\star$ like $\pi_{\mathrm{bad}}$ does minimizes the probability of being in state $x_0$. Thus, for any policy $\tilde{\pi}$, $\nu \spr{\tilde{\pi}, x_0} \geq \nu \spr{\pi_{\mathrm{bad}}, x_0}$, and we have
    \begin{align*}
        \inp{\mu \spr{\expert} - \mu \spr{\tilde{\pi}}, \true} &\geq \nu \spr{\tilde{\pi}, x_0} \sum_{a \in \cA \setminus \scbr{a^\star}} \tilde{\pi} \spr{a \given x_0} \frac{\gamma \epsilon}{3 \spr{1 - \gamma}} \\
        &\geq \nu \spr{\pi_{\mathrm{bad}}, x_0} \spr{1 - \tilde{\pi} \spr{a^\star \given x_0}} \frac{\gamma \epsilon}{3 \spr{1 - \gamma}} \\
        &= 2 \spr{1 - \tilde{\pi} \spr{a^\star \given x_0}} \frac{\gamma \epsilon}{9(1 - \gamma)} \\
        &\geq \frac{\spr{1 - \tilde{\pi} \spr{a^\star \given x_0}} \epsilon}{9 \spr{1 - \gamma}}\,.
    \end{align*}
    where the last inequality follows from $\gamma \geq 1/2$. We now consider the policy $\tilde{\pi} = \bar\pi$ produced by a learning algorithm $\mathrm{Alg}$ interacting with the MDP described above (with $\epsilon > 0$). We also consider  $\underline{\pi}$ the output of the same learning algorithm $\mathrm{Alg}$ when interacting with the MDP $\underline{\cM}$, a copy of $\cM$ with $\epsilon = 0$ (note that in $\underline{\cM}$, all actions are identical in \emph{both} states $x_0$ and $x_1$, so there is nothing to learn). In $\cM$, all actions are identical in state $x_1$, thus we can assume both policies are the same in state $x_1$, \ie $\bar\pi \spr{\cdot \given x_1} = \underline{\pi} \spr{\cdot \given x_1} = \bfe_{a^\star}$, and focus exclusively on learning in state $x_0$. By Pinkser's inequality, we have that
    \begin{align*}
        \bar\pi \spr{a^\star \given x_0} - \underline{\pi} \spr{a^\star \given x_0} \leq \sqrt{2 \KL \spr{\underline{\pi} \spr{\cdot \given x_0} \| \bar\pi \spr{\cdot \given x_0}}}\,,
    \end{align*}
    and the previous inequality becomes
    \begin{align*}
        \inp{\mu \spr{\expert} -  \mu \spr{\bar\pi}, \true} \geq \frac{\epsilon}{9 \spr{1 - \gamma}} \spr{1 - \underline{\pi} \spr{a^\star \given x_0} - \sqrt{2 \KL \spr{\underline{\pi} \spr{\cdot \given x_0} \| \bar\pi \spr{\cdot \given x_0}}}}\,.
    \end{align*}
    Denote $A = \abs{\cA}$ and let $\cH = \scbr{\cM_i}_{i=1}^A$ be a collection of MDPs instances where for any $i = 1, \dots, A$, the MDP $\cM_i$ is a copy of $\cM$ where the $i$th action is equal to $a^\star$, \ie $a_i = a^\star$. We denote $P_i$ the corresponding transitions. For any $i \in [\![1, A]\!]$, we denote $\bar\pi^i$ the policy output by the learning algorithm $\mathrm{Alg}$ after interacting with the instance $\cM_i$, and $\experti$ be the expert policy for the instance $\cM_i$, \ie the policy that always plays $a_i$. We denote $\mu_i \spr{\pi}$ the occupancy measure of any policy $\pi$ in the MDP $\cM_i$. Then, notice that the previous derivations apply for any MDP in $\cH$. Thus, summing over $i \in [\![1, A]\!]$ and noting that $\underline{\pi} \spr{\cdot \given x_0}$ is a probability distribution over $\cA$, we get
    \begin{align}
        \sum_{i = 1}^A \inp{\mu_i \spr{\experti} - \mu_i \spr{\bar\pi^i}, \true} \geq \frac{\epsilon}{9 \spr{1 - \gamma}} \spr{A - 1 - \sum_{i = 1}^A \sqrt{2 \KL \spr{\underline{\pi} \spr{\cdot \given x_0} \| \bar\pi^i \spr{\cdot \given x_0}}}}\,. \label{eq:sum_lower_bound}
    \end{align}
    For any $i \in \sbr{A}$ and $T \in \bbN^\star$, denote $\bbP_i^T$ the probability distribution over sets $\cD_i^T = \scbr{x_0, A_t^i, X_t^i}_{t \in \sbr{T}}$ of $T$ transitions starting from $x_0$ induced by the interaction between the algorithm $\mathrm{Alg}$ and the MDP $\cM_i$. Likewise, we denote $\underline{\bbP}^T$ the probability distribution corresponding to $\underline{\cM}$. Then, by the data processing inequality for the KL divergence, for any $i \in \sbr{A}$, it holds that
    \begin{equation*}
        \KL \spr{\underline{\pi} \spr{\cdot \given x_0} \| \bar\pi^i \spr{\cdot \given x_0}} \leq \KL \spr{\underline{\bbP}^T \| \bbP_i^T}\,.
    \end{equation*}
    Denoting $\underline{\bbE}$ the expectation with respect to $\underline{\bbP}^T$, we can use the Markov property of the environment and continue as follows
    \begin{align*}
        \KL \spr{\underline{\bbP}^T \| \bbP_i^T} &= \underline{\bbE} \sbr{\log \spr{\frac{\prod_{t=1}^T \underline{P} \spr{\underline{X}_t \given x_0, \underline{A}_t} \underline{\bbP}^T \spr{\underline{A}_t \given \underline{X}_1, \underline{A}_1, \dots, \underline{X}_{t-1}}}{\prod_{t=1}^T P_i \spr{\underline{X}_t \given x_0, \underline{A}_t} \bbP_i^T \spr{\underline{A}_t \given \underline{X}_1, \underline{A}_1, \dots, \underline{X}_{t-1}}}}} \\
        &= \underline{\bbE} \sbr{\log \spr{\frac{\prod_{t=1}^T \underline{P} \spr{\underline{X}_t \given x_0, \underline{A}_t}}{\prod_{t=1}^T P_i \spr{\underline{X}_t \given x_0, \underline{A}_t}}}} \\
        &= \underline{\bbE} \sbr{\sumtT \log \spr{\frac{\underline{P} \spr{\underline{X}_t \given x_0, \underline{A}_t}}{P_i \spr{\underline{X}_t \given x_0, \underline{A}_t}}}}\,,
    \end{align*}
    where the probabilities on the actions are equal due to running the same algorithm $\mathrm{Alg}$ with the same history up to time $t-1$. Next, we have
    \begin{align*}
        \KL \spr{\underline{\bbP}^T \| \bbP_i^T} &= \sumtT \sum_{\spr{x, a} \in \cX \times \cA} \underline{\bbP}^T \sbr{\spr{\underline{X}_t, \underline{A}_t} = \spr{x, a}} \log \spr{\frac{\underline{P} \spr{x \given x_0, a}}{P_i \spr{x \given x_0, a}}} \\
        &= \sumtT \sum_{x \in \cX} \underline{\bbP}^T \sbr{\spr{\underline{X}_t, \underline{A}_t} = \spr{x, a_i}} \log \spr{\frac{\underline{P} \spr{x \given x_0, a_i}}{P_i \spr{x \given x_0, a_i}}}\,,
    \end{align*}
    where we used that the transitions $\underline{P}$ and $P_i$ are the same for any action $a \neq a_i$. By definition of the transitions, we further have
    \begin{align*}
        \KL \spr{\underline{\bbP}^T \| \bbP_i^T} &= \sumtT \underline{\bbP}^T \sbr{\spr{\underline{X}_t, \underline{A}_t} = \spr{x_0, a_i}}  \log \spr{\frac{1 - \delta_0}{1 - \delta_0 + \epsilon}} \\
        &\phantom{=}+ \sumtT \underline{\bbP}^T \sbr{\spr{\underline{X}_t, \underline{A}_t} = \spr{x_1, a_i}} \log \spr{\frac{\delta_0}{\delta_0 - \epsilon}}\,.
    \end{align*}
    Next, by definition of $\underline{\bbP}^T$, we have
    \begin{align*}
        \KL \spr{\underline{\bbP}^T \| \bbP_i^T} &= \sumtT \underline{\bbP}^T \sbr{\underline{A}_t = a_i} \underline{P} \spr{x_0 \given x_0, a_i} \log \spr{\frac{1 - \delta_0}{1 - \delta_0 + \epsilon}} \\
        &\phantom{=}+ \sumtT \underline{\bbP}^T \sbr{\underline{A}_t = a_i} \underline{P} \spr{x_1 \given x_0, a_i} \log \spr{\frac{\delta_0}{\delta_0 - \epsilon}} \\
        &= \underline{\bbE} \sbr{\sumtT \mathds{1} \scbr{\underline{A}_t = a_i}} \spr{\spr{1 - \delta_0} \log \spr{\frac{1 - \delta_0}{1 - \delta_0 + \epsilon}} + \delta_0 \log \spr{\frac{\delta_0}{\delta_0 - \epsilon}}}\,.
    \end{align*}
    By \citealp[Lemma 20]{AJO08}, we can bound the KL divergence as follows
    \begin{align*}
        \KL \spr{\underline{\bbP}^T \| \bbP_i^T} &\leq \frac{\epsilon^2}{\delta_0 \log(2)} \underline{\bbE} \sbr{\sumtT \mathds{1} \scbr{\underline{A}_t = a_i}} \\
        &\leq \frac{\epsilon^2}{\spr{1 - \gamma} \log \spr{2}} \underline{\bbE} \sbr{\sumtT \mathds{1} \scbr{\underline{A}_t = a_i}}\,,
    \end{align*}
    where the last inequality is due to the choice of $\delta_0 = \frac{1 - \gamma}{\gamma}$ and $\gamma < 1$. Plugging this into Equation~\eqref{eq:sum_lower_bound} and dividing by $A$, we have
    \begin{align*}
        \frac1A \sum_{i = 1}^A \inp{\mu_i \spr{\experti} - \mu_i \spr{\bar\pi^i}, \true} &\geq \frac{\epsilon}{9 \spr{1 - \gamma}} \spr{1 - \frac1A - \frac{\epsilon}{A} \sum_{i=1}^A \sqrt{\frac{\underline{\bbE} \sbr{\sumtT \mathds{1} \scbr{\underline{A}_t = a_i}}}{\spr{1 - \gamma} \log \spr{2}}}}\,.
    \end{align*}
    By Jensen's inequality, we further get
    \begin{align*}
        \frac1A \sum_{i = 1}^A \inp{\mu_i \spr{\experti} - \mu_i \spr{\bar\pi^i}, \true} &\geq \frac{\epsilon}{9 \spr{1 - \gamma}} \spr{1 - \frac1A - \epsilon \sqrt{\frac{\underline{\bbE} \sbr{\sum_{i=1}^A \sumtT \mathds{1} \scbr{\underline{A}_t = a_i}}}{A \spr{1 - \gamma} \log \spr{2}}}} \\
        &\geq \frac{1}{9 \spr{1 - \gamma}} \spr{\frac{\epsilon}{2} - \epsilon^2 \sqrt{\frac{T}{A \spr{1 - \gamma} \log \spr{2}}}}\,,
    \end{align*}
    where the second inequality follows from $\sum_{i=1}^A \mathds{1} \scbr{\underline{A}_t = a_i} = 1$ almost surely for any $t$ and $1 - \frac1A \geq \frac12$. Note that the value of $\epsilon$ maximizing the lower bound is given by $\epsilon^\star = \frac14 \sqrt{\frac{A \spr{1 - \gamma} \log \spr{2}}{T}}$. To satisfy the constraint $\epsilon^\star \in \spr{0, \delta_0}$ with $\delta_0 = \frac{1 - \gamma}{\gamma}$, assume we have $T \geq \frac{\gamma^2 A \log \spr{2}}{16 \spr{1 - \gamma}}$. We plug the value of $\epsilon^\star$ in the previous inequality to get
    \begin{align*}
        \frac1A \sum_{i = 1}^A \inp{\mu_i \spr{\experti} - \mu_i \spr{\bar\pi^i}, \true} &\geq \frac{1}{16 \cdot 9 \spr{1 - \gamma}} \sqrt{\frac{A \spr{1 - \gamma} \log \spr{2}}{T}} \\
        &= \frac{1}{144} \sqrt{\frac{A \log \spr{2}}{\spr{1 - \gamma} T}}\,,
    \end{align*}
    The average can be upper bounded by the maximum, thus
    \begin{align*}
        \max_{i = 1, \dots, A} \inp{\nu_0, V_{\cM_i}^{\experti} - V_{\cM_i}^{\bar\pi^i}} &= \frac{1}{1 - \gamma} \max_{i = 1, \dots, A} \inp{\mu_i \spr{\experti} - \mu_i \spr{\bar\pi^i}, \true} \\
        &\geq \frac{1}{144} \sqrt{\frac{A \log \spr{2}}{\spr{1 - \gamma}^3 T}}\,.
    \end{align*}
    What remains is to set the number of samples $T$ to make the lower bound small enough to make $\max_{i = 1, \dots, A} \inp{\nu_0, V_{\cM_i}^{\experti} - V_{\cM_i}^{\bar\pi^i}} = \cO \spr{\varepsilon}$ possible, \ie we need to have $T = \Omega \spr{\frac{A}{\spr{1 - \gamma}^3 \varepsilon^2}}$ samples. Therefore, we need $T = \Omega \spr{\frac{A}{\spr{1 - \gamma}^3 \varepsilon^2}}$ samples to learn a $\cO \spr{\varepsilon}$-suboptimal policy in the MDP that achieves the maximum. In order to derive a lower bound on the episodes number $K$ we can divide the sample complexity lower bound for $T$ by the the expected number of transitions per episode which is $\spr{1-\gamma}^{-1}$. This gives $K = \Omega \spr{\frac{A}{\spr{1 - \gamma}^2 \varepsilon^2}}$. We can conclude by noting that our construction used in the lower bound is a linear MDP with dimensionality $d = 2 + 2 \abs{\cA}$, thus we have $K = \Omega \spr{\frac{d}{\spr{1 - \gamma}^2 \varepsilon^2}}$.
\end{proof}

\subsection{Proof of \Cref{thm:LowerTauE} (lower bound on the number of expert transitions)}
\label{app:lower_tau_E}

\emph{Proof Idea:} The construction of the lower bound consists in relating the problem to that of distinguishing two Bernoullis distributions with close means. For that, we consider two MDPs $\cM_0$ and $\cM_1$ that only differ in their reward function. They have two states $\cX = \scbr{x_0, x_1}$ and $\abs{\cA}$ actions available at each state. The initial distribution $\initial$ is chosen to be the uniform distribution over $\cX$. In state $x_1$, any action $a$ induces the same transition kernel: $P \spr{x_0 \given x_1, a} = \delta$. In state $x_0$, any action $a$ except some action $a^\star$ is such that $P \spr{x_1 | x_0, a} = \delta$. However, the special action $a^\star$ allows to stay in the state $x_0$ with a slightly higher probability, \ie $P \spr{x_1 \given x_0, a^\star} = \delta - \epsilon$. Then, the reward function in $\cM_0$ is defined as $\true^0 \spr{x_0, \cdot} = \WMAX$, and $\true^0 \spr{x_0, \cdot} = 0$, while in $\cM_1$, it is defined as $\true^1 \spr{x_0, \cdot} = 0$, $\true^1 \spr{x_1, \cdot} = \WMAX$. Finally, we define an expert $\pi_E^0$ for $\cM_0$ as the policy that always play the action $a^\star$, and an expert $\pi_E^1$ for $\cM_1$ that always play some action $a \neq a^\star$. We then show that the expert occupancy measures satisfy $\nu \spr{\pi_E^0, x_0} = 1/2 + \Delta$, for some small $\Delta > 0$, while $\nu \spr{\pi_E^1, x_0} = \frac12$. The remaining step is to reduce this problem to a lower bound on the regret of a two-arm Bernoulli bandits instance with means $\spr{1/2, 1/2}$ and $\spr{1/2 + \Delta, 1/2 - \Delta}$. The proof is formally presented hereafter.

\LowerTauE*

\begin{proof}
    As mentioned earlier, it is sufficient to consider deterministic algorithms that map histories to policies. The lower bound for randomized algorithms follows by an application of Fubini's theorem (see \citealp{bubeck2012regret}). We consider two MDPs $\cH = \scbr{\cM_0, \cM_1}$ with the same state space $\cX = \scbr{x_0, x_1}$ and $\abs{\cA}$ actions available in each state. The initial distribution $\initial$ is chosen to be the uniform distribution over $\cX$, \ie $\initial \spr{x_0} = \initial \spr{x_1} = \frac12$. The transitions are the same in both MDPs: in state $x_1$, each action $a \in \cA$ induces the following transition kernel
    \begin{equation*}
        P \spr{x_0 \given x_1, a} = \delta, \quad P \spr{x_1 \given x_1, a} = 1 - \delta\,
    \end{equation*}
    while in state $x_0$, there is an action $a^\star$ giving a slightly higher probability on staying in state $x_0$, \ie
    \begin{align*}
        &P \spr{x_0 \given x_0, a^\star} = 1 - \delta + \epsilon, \quad P \spr{x_1 \given x_0, a^\star} = \delta - \epsilon \\
        \forall a \neq a^\star, &P \spr{x_0 \given x_0, a} = 1 - \delta, \quad P \spr{x_1 \given x_0, a} = \delta\,.
    \end{align*}
    The reward functions, $\true^0$ and $\true^1$, are different. In $\cM_0$, the ``good'' state is $x_0$, \ie for any action $a \in \cA$, we set $\true^0 \spr{x_0, a} = \WMAX$, $\true^0 \spr{x_1, a} = 0$, and in $\cM_1$, the ``good'' state is $x_1$, \ie $\true^1 \spr{x_0, a} = 0$, and $\true^1 \spr{x_1, a} = \WMAX$. Note that $\WMAX = \RMAX$ due to using the features $\phi_r \spr{x, a} = \bfe_x$ for any state-action pair $x, a$.
     
    Then, we define one expert policy for each MDP. In $\cM_0$, the expert $\pi_E^0$ is the policy that always plays $a^\star$ and in $\cM_1$, the expert $\pi_E^1$ is the policy that always plays an action $a \neq a^\star$. Therefore, the state occupancy measure of expert $\pi_E^0$ in MDP $\cM_0$ has the highest mass in state $x_0$, while $\pi_E^1$ put equal mass on both states. Indeed, writing the flow constraints for both experts, we have
    \begin{align*}
        \begin{pmatrix}
            1 - \gamma + \gamma \delta - \gamma \epsilon & - \gamma \delta \\
            - \gamma \spr{\delta - \epsilon} & 1 - \gamma + \gamma \delta
        \end{pmatrix}
        \nu \spr{\pi_E^0} &= \nu_0\,, \\
        \begin{pmatrix}
            1 - \gamma + \gamma \delta & - \gamma \delta \\
            - \gamma \delta & 1 - \gamma + \gamma \delta
        \end{pmatrix}
        \nu \spr{\pi_E^1} &= \nu_0\,.
    \end{align*}
    Solving these linear systems using, \eg, Cramer's rule, we obtain
    \begin{align*}
        \nu \spr{\pi_E^0, x_0} &= \frac{1 - \gamma + 2 \gamma \delta}{2 \spr{1 - \gamma - \gamma \epsilon + 2 \gamma \delta}}, &\nu \spr{\pi_E^0, x_1} &= \frac{1 - \gamma - 2 \gamma \epsilon + 2 \gamma \delta}{2 \spr{1 - \gamma - \gamma \epsilon + 2 \gamma \delta}}\,, \\
        \nu \spr{\pi_E^1, x_0} &= \frac12, &\nu \spr{\pi_E^1, x_1} &= \frac12\,.
    \end{align*}
    For $i \in \scbr{0, 1}$, let $\bar\pi^i$ be the policy output by $\mathrm{Alg}$ when given a dataset $\cD_{\pi_E^i}$ as input and let $V_i^{\bar\pi^i}$ be the value function of policy $\bar\pi^i$ corresponding to the reward function $\true^i$ from the MDP $\cM_i$. By definition of $\true^i$, we can write
    \begin{align}
        \frac12 \sum_{i \in \scbr{0, 1}} \inp{\initial, V_i^{\pi_E^i} - V_i^{\bar\pi^i}} &= \frac{1}{2 \spr{1 - \gamma}} \sum_{i \in \scbr{1, 2}} \inp{\mu \spr{\pi_E^i} - \mu \spr{\bar\pi^i}, \true^i} \nonumber \\
        &= \frac{\WMAX}{2 \spr{1 - \gamma}} \spr{\nu \spr{\pi_E^0, x_0} - \nu \spr{\bar\pi^0, x_0} + \nu \spr{\pi_E^1, x_1} - \nu \spr{\bar\pi^1, x_1}}\,. \label{eq:appE2-average-return}
    \end{align}
    Thus, we need to compute the difference between state occupancy measures. Let $\tilde\pi$ be an arbitrary policy and denote $\alpha \in \sbr{0, 1}$ the probability of playing action $a^\star$ in state $x_0$, \ie $\tilde\pi \spr{a^\star \given x_0} = \alpha$. Writing down the flow constraints again, we can show that
    \begin{equation*}
        \nu \spr{\tilde\pi, x_0} = \frac{1 - \gamma + 2 \gamma \delta}{2 \spr{1 - \gamma - \gamma \alpha \epsilon + 2 \gamma \delta}}, \quad \nu \spr{\tilde\pi, x_1} = \frac{1 - \gamma - 2 \gamma \alpha \epsilon + 2 \gamma \delta}{2 \spr{1 - \gamma - \gamma \alpha \epsilon + 2 \gamma \delta}}\,.
    \end{equation*}
    Looking at the difference with $\pi_E^0$ in state $x_0$, we have
    \begin{align*}
        \nu \spr{\pi_E^0, x_0} - \nu \spr{\tilde\pi, x_0} &= \frac{1 - \gamma + 2 \gamma \delta}{2 \spr{1 - \gamma - \gamma \epsilon + 2 \gamma \delta}} - \frac{1 - \gamma + 2 \gamma \delta}{2 \spr{1 - \gamma - \gamma \alpha \epsilon + 2 \gamma \delta}} \\
        &= \frac{\spr{1 - \gamma + 2 \gamma \delta} \spr{\spr{- \gamma \alpha \epsilon} - \spr{- \gamma \epsilon}}}{2 \spr{1 - \gamma - \gamma \epsilon + 2 \gamma \delta} \spr{1 - \gamma - \gamma \alpha \epsilon + 2 \gamma \delta}} \\
        &= \frac{\spr{1 - \gamma + 2 \gamma \delta} \gamma \epsilon \spr{1 - \alpha}}{2 \spr{1 - \gamma - \gamma \epsilon + 2 \gamma \delta} \spr{1 - \gamma - \gamma \alpha \epsilon + 2 \gamma \delta}}\,.
    \end{align*}
    Setting $\delta = \frac{1 - \gamma}{\gamma}$ and noting $\epsilon \geq 0$, $\gamma \geq \frac12$, we can lower bound the difference as follows
    \begin{align}
        \nu \spr{\pi_E^0, x_0} - \nu \spr{\tilde\pi, x_0} &= \frac{3 \spr{1 - \gamma} \gamma \epsilon \spr{1 - \alpha}}{2 \spr{3 \spr{1 - \gamma} - \gamma \epsilon} \spr{3 \spr{1 - \gamma} - \gamma \alpha \epsilon}} \nonumber \\
        &\geq \frac{\epsilon \spr{1 - \alpha}}{12 \spr{1 - \gamma}}\,. \label{eq:appE2-diffx0}
    \end{align}
    Likewise, the difference between $\nu \spr{\pi_E^1}$ and $\nu \spr{\tilde\pi}$ in state $x_1$ is given by
    \begin{align*}
        \nu \spr{\pi_E^1, x_1} - \nu \spr{\tilde\pi, x_1} &= \frac12 - \frac{1 - \gamma - 2 \gamma \alpha \epsilon + 2 \gamma \delta}{2 \spr{1 - \gamma - \gamma \alpha \epsilon + 2 \gamma \delta}} \\
        &= \frac{\gamma \alpha \epsilon}{2 \spr{1 - \gamma - \gamma \alpha \epsilon + 2 \gamma \delta}}\,.
    \end{align*}
    Using the definition of $\delta$, and again $\epsilon \geq 0$, $\gamma \geq \frac12$, we get
    \begin{align}
        \nu \spr{\pi_E^1, x_1} - \nu \spr{\tilde\pi, x_1} &= \frac{\gamma \alpha \epsilon}{2 \spr{3 \spr{1 - \gamma} - \gamma \alpha \epsilon}} \nonumber \\
        &\geq \frac{\epsilon \alpha}{12 \spr{1 - \gamma}}\,. \label{eq:appE2-diffx1}
    \end{align}
    Plugging Inequalities~\eqref{eq:appE2-diffx0} and~\eqref{eq:appE2-diffx1} into Equation~\eqref{eq:appE2-average-return} with $\alpha = \bar\pi^0 \spr{a^\star \given x_0}$ and $\alpha = \bar\pi^1 \spr{a^\star \given x_0}$ respectively, we get
    \begin{align*}
        \frac12 \sum_{i \in \scbr{0, 1}} \inp{\initial, V_i^{\pi_E^i} - V_i^{\bar\pi^i}} &\geq \frac{\epsilon \WMAX}{24 \spr{1 - \gamma}^2}\spr{1 - \bar\pi^0 \spr{a^\star \given x_0} + \bar\pi^1 \spr{a^\star \given x_0}} \\
        &= \frac{\epsilon \WMAX}{24 \spr{1 - \gamma}^2} \spr{\sum_{a \neq a^\star} \bar\pi^0 \spr{a \given x_0} + \bar\pi^1 \spr{a^\star \given x_0}}\,.
    \end{align*}
    Next, we can lower bound the right hand side using the Bretagnolle-Huber inequality (see \citealp{bretagnolle1979estimation}, and \citealp[Theorem 14.2]{lattimore2020bandit}), which gives
    \begin{equation} \label{eq:partial_lower_bound}
        \frac12 \sum_{i \in \scbr{0, 1}} \inp{\initial, V_i^{\pi_E^i} - V_i^{\bar\pi^i}} \geq \frac{\epsilon \WMAX}{24 \spr{1 - \gamma}^2} \exp \spr{- \KL \spr{\bar\pi^0 \spr{\cdot \given x_0} \| \bar\pi^1 \spr{\cdot \given x_0}}}\,.
    \end{equation}
    Then, using the data processing inequality and using the fact that the learning algorithm produces $\bar\pi^i$ as a deterministic function of the dataset $\cD_{\pi_E^i}$ for $i = 0, 1$, we have that
    \begin{equation*}
        \KL \spr{\bar\pi^0 \spr{\cdot \given x_0} \| \bar\pi^1 \spr{\cdot \given x_0}} \leq \KL \spr{\bbP_0^{\tau_E} \| \bbP_1^{\tau_E}}\,,
    \end{equation*}
    where, for $i \in \scbr{0, 1}$, we denoted $\bbP_i^{\tau_E}$ the probability distribution over datasets of size $\tau_E$ induced by the interaction between the expert $\pi_E^i$ and the environment (analog to what is done in the proof of Theorem~\ref{thm:LowerK}). Next, we denote $\kl \spr{p, q}$ and $\chi^2 \spr{p, q}$ the KL and chi-squared divergences between bernoulli distributions of means $p$ and $p'$, \ie
    \begin{align*}
        \kl \spr{p, q} &= p \log \spr{\frac{p}{q}} + \spr{1 - p} \log \spr{\frac{1 - p}{1 - q}} \\
        \chi^2 \spr{p, q} &= \frac{\spr{p - q}^2}{q \spr{1 - q}}\,.
    \end{align*}
    By definition of the KL, we have
    \begin{align*}
        \KL \spr{\bbP_0^{\tau_E} \| \bbP_1^{\tau_E}} &= \tau_E \cdot \kl \spr{\frac{3 \spr{1 - \gamma}}{2 \spr{3 \spr{1 - \gamma} - \gamma \epsilon}}, \frac12} \\
        &\leq \tau_E \cdot \chi^2 \spr{\frac{3 \spr{1 - \gamma}}{2 \spr{3 \spr{1 - \gamma} - \gamma \epsilon}}, \frac12} \\
        &= \tau_E \cdot \chi^2 \spr{\frac12 + \frac{\gamma \epsilon}{3 \spr{1 - \gamma} - \gamma \epsilon}, \frac12} \\
        &= \frac{4 \tau_E \gamma^2 \epsilon^2}{\spr{3 \spr{1 - \gamma} - \gamma \epsilon}^2} \\
        &\leq \frac{\tau_E \gamma^2 \epsilon^2}{\spr{1 - \gamma}^2}\,,
    \end{align*}
    where the first inequality follows from the concavity of the logarithm function, and the second inequality uses the fact that $\epsilon \leq \delta = \frac{1-\gamma}{\gamma}$. Thus, plugging in this last inequality into Equation~\eqref{eq:partial_lower_bound}, we obtain
    \begin{align*}
        \frac12 \sum_{i \in \scbr{0, 1}} \inp{\initial, V_i^{\pi_E^i} - V_i^{\bar\pi^i}} &\geq \frac{\epsilon \WMAX}{24 \spr{1 - \gamma}^2} \exp \spr{- \frac{\tau_E \gamma^2 \epsilon^2}{\spr{1 - \gamma}^2}} \\
        &\geq \frac{\epsilon \WMAX}{24 \spr{1 - \gamma}^2} \exp \spr{- \frac{\tau_E \epsilon^2}{\spr{1 - \gamma}^2}}\,,
    \end{align*}
    where we used $\gamma < 1$ in the second inequality. Introducing $\epsilon' = \epsilon \spr{1 - \gamma}^{-1}$, we can rewrite the previous inequality as
    \begin{equation*}
        \frac12 \sum_{i \in \scbr{0, 1}} \inp{\initial, V_i^{\pi_E^i} - V_i^{\bar\pi^i}} \geq \frac{\WMAX \epsilon'}{24 \spr{1 - \gamma}} \exp \spr{- \tau_E \spr{\epsilon'}^2}\,.
    \end{equation*}
    It remains to make the lower bound small enough. To bound the average suboptimality gap by $\frac{\WMAX \epsilon'}{24 e \spr{1 - \gamma}}$ and have $\frac12 \sum_{i \in \scbr{0, 1}} \inp{\initial, V_i^{\pi_E^i} - V_i^{\bar\pi^i}} \leq \frac{\WMAX \epsilon'}{24 e \spr{1 - \gamma}}$, we need at least $\tau_E \geq \frac{1}{\spr{\epsilon'}^2}$ expert transitions. Therefore, to achieve
    \begin{equation*}
        \frac12 \sum_{i \in \scbr{0, 1}} \inp{\initial, V_i^{\pi_E^i} - V_i^{\bar\pi^i}} \leq \varepsilon\,,
    \end{equation*}
    for some $\varepsilon > 0$, we need to choose $\epsilon' = \nicefrac{24 e \spr{1 - \gamma} \varepsilon}{\WMAX}$, which means that every algorithm needs at least $\tau_E \geq \frac{\WMAX^2}{24^2 e^2 \spr{1 - \gamma}^2 \varepsilon^2}$ to guarantee a suboptimality gap of order $\varepsilon$.
\end{proof}

\clearpage
\section{Technical tools}

\subsection{Reinforcement learning}

\begin{proposition}\label{prop:flow}
  The occupancy measure $\mu(\pi)$ of any policy $\pi$ satisfies the following system of equations:
  \begin{equation}\label{eq:flow}
      E\transpose \mu \spr{\pi} = \gamma P\transpose \mu \spr{\pi} + \spr{1 - \gamma} \nu_0\,.
  \end{equation}
\end{proposition}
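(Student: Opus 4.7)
The identity is a standard flow (Bellman flow) equation; the plan is to expand the definitions in coordinates and verify the equation pointwise at each state $x' \in \cX$. Concretely, I would evaluate $(E\transpose \mu(\pi))(x')$ and $(P\transpose \mu(\pi))(x')$ directly from the definitions in Section~2.1: $(E\transpose \mu(\pi))(x') = \sum_{a\in\cA}\mu(\pi,x',a)$, and $(P\transpose \mu(\pi))(x') = \sum_{x,a} P(x'|x,a)\,\mu(\pi,x,a)$. Thus it suffices to show, for every $x'\in\cX$,
\begin{equation*}
    \sum_{a\in\cA}\mu(\pi,x',a) \;=\; \gamma \sum_{x,a} P(x'|x,a)\,\mu(\pi,x,a) \;+\; (1-\gamma)\,\nu_0(x')\,.
\end{equation*}

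The main step is to unfold the series $\mu(\pi,x,a) = (1-\gamma)\sum_{\tau=0}^\infty \gamma^\tau \bbP_{P,\pi}[(X_\tau,A_\tau)=(x,a)]$, sum over $a$, and split off the $\tau=0$ term. Using $\bbP_{P,\pi}[X_0=x']=\nu_0(x')$, the $\tau=0$ contribution yields $(1-\gamma)\nu_0(x')$. For the remaining $\tau\ge 1$ terms I would shift the index to $\tau-1$ and apply the Markov property
\begin{equation*}
    \bbP_{P,\pi}[X_{\tau+1}=x'] \;=\; \sum_{x,a} \bbP_{P,\pi}[(X_\tau,A_\tau)=(x,a)]\, P(x'|x,a)\,,
\end{equation*}
which follows from the transition rule $X_{\tau+1}\sim P(\cdot|X_\tau,A_\tau)$. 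Pulling the factor $\gamma$ out of the shifted series and recognizing the inner sum as the definition of $\mu(\pi,x,a)$ gives exactly $\gamma\sum_{x,a} P(x'|x,a)\,\mu(\pi,x,a)$, completing the pointwise identity.

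There is essentially no obstacle: the argument is a direct manipulation of geometric series combined with the one-step Markov decomposition. The only mild subtleties are bookkeeping (the index shift and the $(1-\gamma)$ normalization) and implicit use of Fubini/Tonelli to interchange the sum over $\tau$ with the sum over $(x,a)$, which is justified by the absolute summability of the nonnegative series (dominated by $\sum_\tau \gamma^\tau = 1/(1-\gamma)$). Once the pointwise identity is established for all $x'$, the operator-form equation \eqref{eq:flow} follows immediately.
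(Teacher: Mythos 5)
Your proof is correct and is essentially the same argument as the paper's: both split off the $\tau=0$ term of the geometric series defining the occupancy measure and use the one-step Markov property to re-index the tail, recovering $\gamma P\transpose\mu(\pi)+(1-\gamma)\nu_0$. The only difference is notational—you verify the identity pointwise by applying $P\transpose$ directly to the state-action occupancy, whereas the paper phrases the same computation in operator form via the policy-induced kernel $P_\pi$ acting on the state occupancy.
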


\begin{proof}
  Define the transition kernel induced by policy $\pi$ as $P_\pi$, with $P_\pi(\cdot|x) = \mathbb{E}_{A\sim\pi(\cdot|x)}\sbr{P(\cdot|x,A)}$. The proof follows from the following standard calculation:
  \begin{align*}
    E\transpose \mu \spr{\pi} &= (1-\gamma) \sum_{\tau=0}^\infty \pa{\gamma P_\pi\transpose}^\tau \nu_0 \\
    &= (1-\gamma) \sum_{\tau=1}^\infty \pa{\gamma P_\pi\transpose}^\tau \nu_0 + (1-\gamma) \nu_0 \\
    &= \gamma P_\pi (1-\gamma) \sum_{\tau=0}^\infty \pa{\gamma P_\pi\transpose}^\tau \nu_0 + (1-\gamma) \nu_0 \\
    &= \gamma P_\pi E\transpose \mu \spr{\pi} + (1-\gamma) \nu_0 \\
    &= \gamma P \mu \spr{\pi} + (1-\gamma) \nu_0,
  \end{align*}
  where the last step follows from the easily-checked fact that $P \mu \spr{\pi} = P_\pi E\transpose \mu \spr{\pi}$.
\end{proof}

\begin{lemma}\label{lem:from-ev-to-q}
  Let $\pi$ be any policy, $Q \in \bbR^{\cX \times \cA}$ be any function defined on $\cX \times \cA$, and $V \in \bbR^\cX$ be such that for any $x$, $V \spr{x} = \bbE_{A \sim \pi \spr{\cdot \given x}} \sbr{Q \spr{x, A}}$. Then
  \begin{equation*}
    \inp{\mu \spr{\pi}, \opE V} = \inp{\mu \spr{\pi}, Q}\,.
  \end{equation*}
\end{lemma}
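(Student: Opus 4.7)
The plan is a direct computation using the decomposition $\mu(\pi,x,a) = \nu(\pi,x)\,\pi(a\given x)$ together with the definitions of the operator $\opE$ and the value function $V$.

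First I would expand the left-hand side. By definition, $(\opE V)(x,a) = V(x)$, so
\begin{equation*}
    \inp{\mu \spr{\pi}, \opE V} = \sum_{x\in\cX}\sum_{a\in\cA} \mu\spr{\pi,x,a}\, V(x) = \sum_{x\in\cX} V(x) \sum_{a\in\cA}\mu\spr{\pi,x,a} = \sum_{x\in\cX}\nu\spr{\pi,x}\,V(x),
\end{equation*}
where the last equality uses the identity $\nu(\pi,x) = \sum_a \mu(\pi,x,a)$, which is just the definition of the state-occupancy measure (equivalently, $\nu(\pi) = \opE\transpose \mu(\pi)$).

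Next I would expand the right-hand side analogously, plug in the assumed relation $V(x) = \sum_a \pi(a\given x)\,Q(x,a)$, and match:
\begin{equation*}
    \inp{\mu \spr{\pi}, Q} = \sum_{x\in\cX}\nu\spr{\pi,x}\sum_{a\in\cA}\pi(a\given x)\,Q(x,a) = \sum_{x\in\cX}\nu\spr{\pi,x}\,V(x).
\end{equation*}
Comparing the two displays yields the claim.

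I do not anticipate any obstacle here: the identity is purely algebraic, requiring only Fubini/Tonelli to swap the two finite (or countable, absolutely convergent) sums and the definition of $V$ as a conditional expectation of $Q$ under $\pi$. No measure-theoretic subtleties arise because $\cX$ is countable by the standing convention in Section~2, and $\mu(\pi)$ is a probability measure on $\cX\times\cA$ so absolute summability is automatic whenever $Q$ (equivalently $V$) is bounded, which is the only regime in which the lemma is invoked.
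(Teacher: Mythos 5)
Your proof is correct and matches the paper's argument: both expand $\inp{\mu(\pi), \opE V}$ to $\sum_x \nu(\pi,x)V(x)$ and then use $\mu(\pi,x,a)=\nu(\pi,x)\pi(a\given x)$ together with the definition of $V$ to identify this with $\inp{\mu(\pi),Q}$. The only difference is presentational (you meet in the middle rather than writing a single chain of equalities), so nothing further is needed.
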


\begin{proof}
  We have
  \begin{align*}
    \inp{\mu \spr{\pi}, \opE V} &= \sum_{x \in \cX} \nu \spr{\pi, x} V \spr{x} \\
    &= \sum_{x \in \cX} \sum_{a \in \cA} \nu \spr{\pi, x} \pi \spr{a \given x} Q \spr{x, a} \\
    &= \sum_{x \in \cX} \sum_{a \in \cA} \mu \spr{\pi, x, a} Q \spr{x, a} \\
    &= \inp{\mu \spr{\pi}, Q}\,,
  \end{align*}
  where the second equality follows from the definition of the function $V$ and the first equality from the definition of the state-action occupancy measure.
\end{proof}

\begin{lemma}\label{lem:ineq-kl-entropy}
    Let $\pi$ and $\pi'$ be two policies. Then,
    \begin{equation*}
      \KL \spr{\mu \spr{\pi} \middle\| \mu \spr{\pi'}} \leq \frac{1}{1 - \gamma} \inp{\nu \spr{\pi}, \KL \spr{\pi \| \pi'}}\,.
    \end{equation*}
  \end{lemma}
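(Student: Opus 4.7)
The plan is to use the chain rule for KL divergence to decompose $\KL(\mu(\pi)\|\mu(\pi'))$ into a policy term and a state-occupancy term, and then bound the state-occupancy term recursively using the flow constraints from Proposition~\ref{prop:flow} together with joint convexity of KL.

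First I would write $\mu(\pi,x,a) = \nu(\pi,x)\,\pi(a|x)$ and expand, obtaining
\begin{equation*}
    \KL\spr{\mu(\pi)\,\middle\|\,\mu(\pi')} = \inp{\nu(\pi), \KL(\pi\|\pi')} + \KL\spr{\nu(\pi)\,\middle\|\,\nu(\pi')}\,.
\end{equation*}
Thus it remains to control $\KL(\nu(\pi)\|\nu(\pi'))$ in terms of $\KL(\mu(\pi)\|\mu(\pi'))$ with a contractive factor $\gamma$.

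Next, I would apply Proposition~\ref{prop:flow}, which gives $\nu(\pi) = E\transpose \mu(\pi) = \gamma P\transpose \mu(\pi) + (1-\gamma)\nu_0$ (and similarly for $\pi'$). This presents $\nu(\pi)$ and $\nu(\pi')$ as matching convex combinations, so by joint convexity of the KL divergence (and using $\KL(\nu_0\|\nu_0)=0$),
\begin{equation*}
    \KL\spr{\nu(\pi)\,\middle\|\,\nu(\pi')} \leq \gamma \KL\spr{P\transpose \mu(\pi)\,\middle\|\,P\transpose\mu(\pi')}\,.
\end{equation*}
Then by the data-processing inequality applied to the Markov kernel $P\transpose$ (equivalently, another use of joint convexity),
$\KL(P\transpose\mu(\pi)\|P\transpose\mu(\pi')) \leq \KL(\mu(\pi)\|\mu(\pi'))$.

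Finally I would combine the two displays to get
\begin{equation*}
    \KL\spr{\mu(\pi)\,\middle\|\,\mu(\pi')} \leq \inp{\nu(\pi),\KL(\pi\|\pi')} + \gamma \KL\spr{\mu(\pi)\,\middle\|\,\mu(\pi')}\,,
\end{equation*}
and rearrange, dividing by $1-\gamma$. There is no real obstacle here; the only subtle point is making sure the KL divergences are well-defined (which follows from the convention $\KL(p\|q)=+\infty$ whenever $q$ has smaller support, so the inequality is trivially valid in that case), and invoking joint convexity in the correct direction. The argument is essentially a one-line contraction argument that mirrors the telescoping structure of occupancy measures.
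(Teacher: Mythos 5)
Your proposal is correct and follows essentially the same route as the paper's proof: the chain rule for relative entropy, the flow constraints from Proposition~\ref{prop:flow}, joint convexity of the KL divergence, the data-processing inequality, and a final rearrangement after absorbing the $\gamma$-contracted term. No gaps.
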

  
\begin{proof}
  Using the chain rule of the relative entropy, we write
  \begin{equation*}
      \KL \spr{\mu \spr{\pi} \middle\| \mu \spr{\pi'}} = \KL \spr{\nu \spr{\pi} \middle\| \nu \spr{\pi'}} + \inp{\nu \spr{\pi}, \KL \spr{\pi \middle\| \pi'}}\,.
  \end{equation*}
  By the flow constraints and the joint convexity of the relative entropy, we bound the first term as
  \begin{align*}
    \KL \spr{\nu \spr{\pi} \middle\| \nu \spr{\pi'}} &= \KL \spr{\gamma P\transpose \mu \spr{\pi} + \spr{1 - \gamma} \nu_0 \middle\| \gamma P\transpose \mu \spr{\pi'} + \spr{1 - \gamma} \nu_0} \\
    &\leq \spr{1 - \gamma} \KL \spr{\nu_0 \middle\| \nu_0} + \gamma \KL \spr{P\transpose \mu \spr{\pi} \middle\| P\transpose \mu \spr{\pi'}} \\
    &= \gamma \KL \spr{P\transpose \mu \spr{\pi} \middle\| P\transpose \mu \spr{\pi'}} \\
    &\leq \gamma \KL \spr{\mu \spr{\pi} \middle\| \mu \spr{\pi'}}\,,
  \end{align*}
  where we also used the data-processing inequality in the last step. The proof is concluded by reordering the terms.
\end{proof}

\begin{lemma} \label{lem:mass-reduced}
  For any MDP $\cM$, any ascension function $p^\upplus$, and any policy $\pi$, we have for any state-action pair $\spr{x, a} \in \cX \times \cA$,
  \begin{equation*}
    \mu^\upplus \spr{\pi, x, a} \leq \mu \spr{\pi, x, a}\,,
  \end{equation*}
  where $\mu^\upplus \spr{\pi}$ denotes the state-action occupancy of $\pi$ in $\cM^\upplus$, the optimistically augmented MDP induced by $p^\upplus$.
\end{lemma}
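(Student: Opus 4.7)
The plan is to establish the inequality via an explicit coupling argument, reusing essentially the same coupled process that appears in the proof of Lemma~\ref{lem:model-bias-bounds} in Appendix~\ref{app:model-bias-bounds}. Concretely, let $(X_\tau, A_\tau)_{\tau \in \mathbb{N}}$ denote the process generated by policy $\pi$ in the original MDP $\cM$ started from $X_0 \sim \nu_0$, and let $(X^\upplus_\tau, A^\upplus_\tau)_{\tau \in \mathbb{N}}$ be the coupled process in $\cM^\upplus$ obtained by initializing $X^\upplus_0 = X_0$ and, at each step, either following the same next transition as the original process (with probability $1 - p^\upplus(X_\tau, A_\tau)$) or ascending to the heaven state $x^\upplus$ (with probability $p^\upplus(X_\tau, A_\tau)$), in which case it stays at $x^\upplus$ forever.

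By construction, there exists a (random) ascension time $\tau^\upplus \in \mathbb{N} \cup \{\infty\}$ such that $(X^\upplus_\tau, A^\upplus_\tau) = (X_\tau, A_\tau)$ for every $\tau < \tau^\upplus$, while $X^\upplus_\tau = x^\upplus$ for every $\tau \geq \tau^\upplus$. I would then observe that for any non-heaven state-action pair $(x,a) \in \cX \times \cA$, we have the pointwise indicator bound
\begin{equation*}
    \mathbb{I}\bigl\{(X^\upplus_\tau, A^\upplus_\tau) = (x,a)\bigr\} \leq \mathbb{I}\bigl\{(X_\tau, A_\tau) = (x,a)\bigr\},
\end{equation*}
since the event on the left forces $\tau < \tau^\upplus$ and hence equality of the two processes at time $\tau$.

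To conclude, I would multiply by $\gamma^\tau$, sum over $\tau \in \mathbb{N}$, multiply by $(1-\gamma)$, and take expectations. The left-hand side is exactly $\mu^\upplus(\pi, x, a)$ by the definition of the state-action occupancy measure in $\cM^\upplus$, and the right-hand side is exactly $\mu(\pi, x, a)$, yielding the desired inequality. No obstacle arises beyond verifying that the coupling defines a valid version of the process in $\cM^\upplus$, which is immediate from the definition of the perturbed transition kernel $P^\upplus(\cdot | x, a) = (1 - p^\upplus(x,a)) P(\cdot | x, a) + p^\upplus(x,a) \mathbb{I}\{x^\upplus \in \cdot\}$ in Section~\ref{sec:OAMDP}.
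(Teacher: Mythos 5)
Your proposal is correct and follows essentially the same route as the paper: the appendix proof of Lemma~\ref{lem:mass-reduced} constructs exactly this coupled process and derives the inequality by splitting $\bbP[X_\tau^\upplus = x, A_\tau^\upplus = a]$ according to whether ascension has occurred, which is the probability-level version of your pointwise indicator domination. The only point worth noting is that (as in the paper's own proof) the argument relies on $x \neq x^\upplus$, which you correctly restrict to.
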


\begin{proof}
  Let us consider a process  $\spr{X_\tau, A_\tau}_{\tau \in \bbN}$ generated by the policy $\pi$ in the MDP $\cM$, that is, such that $X_0 \sim \initial$, and for any $\tau \in \bbN$, $A_\tau \sim \pi \spr{\cdot \given X_\tau}$, and $X_{\tau+1} \sim P \spr{\cdot \given X_\tau, A_\tau}$. Additionally, we define a process $\spr{X_\tau^\upplus, A_\tau^\upplus}_{\tau \in \bbN}$ coupled to the process defined above as follows. At the first stage, we set $X_\tau^\upplus = X_0$. Then for any $\tau \geq 1$, the coupled process evolves as
  \begin{equation*}
    X_{\tau+1}^\upplus, A_{\tau+1}^\upplus =
    \begin{cases}
        X_{\tau+1}, A_{\tau+1} \quad &\text{w.p.} \quad 1 - p^\upplus \spr{X_\tau, A_\tau} \quad \text{if} \quad  X_\tau^\upplus, A_\tau^\upplus = X_\tau, A_\tau \\
        x^\upplus, a \quad & \text{w.p.} \quad p^\upplus \spr{X_\tau, A_\tau} \quad \text{if} \quad  X_\tau^\upplus, A_\tau^\upplus = X_\tau, A_\tau \\
        x^\upplus, a \quad & \text{if} \quad X_\tau^\upplus, A_\tau^\upplus \neq X_\tau, A_\tau
    \end{cases}\,.
  \end{equation*}
  It is straightforward to check that this process follows the dynamics of the optimistically augmented MDP $\cM^\upplus(r,p^\upplus)$ (since its transitions obey the kernel $P^\upplus$). By definition, for any state-action pair $\spr{x, a} \in \cX \times \cA$, we have
  \begin{align*}
    \mu^\upplus \spr{\pi, x, a} &= \spr{1 - \gamma} \sum_{\tau=0}^\infty \gamma^\tau \bbP \sbr{X_\tau^\upplus = x, A_\tau^\upplus = a} \\
    &= \spr{1 - \gamma} \sum_{\tau=0}^\infty \gamma^\tau \spr{\bbP \sbr{X_\tau^\upplus = x, A_\tau^\upplus = a, X_\tau^\upplus \neq x^\upplus} + \bbP \sbr{X_\tau^\upplus = x, A_\tau^\upplus = a, X_\tau^\upplus = x^\upplus}} \\
    &= \spr{1 - \gamma} \sum_{\tau=0}^\infty \gamma^\tau \spr{\bbP \sbr{X_\tau = x, A_\tau = a, X_\tau^\upplus \neq x^\upplus} + 0} \\
    &\leq \spr{1 - \gamma} \sum_{\tau=0}^\infty \gamma^\tau \bbP \sbr{X_\tau = x, A_\tau = a} \\
    &= \mu \spr{\pi, x, a}\,.
  \end{align*}
  In the third equality, the second term within the sum is equal to zero because $x \neq x^\upplus$, and in the other term we replaced $\spr{X_\tau^\upplus, A_\tau^\upplus}$ by $\spr{X_\tau, A_\tau}$ because the two coincide long as $X^\upplus_\tau \neq x^+$. This concludes the proof.
\end{proof}

\subsection{Linear algebra and analysis}

\begin{lemma} \label{lemma:number-epochs-bound}
    Under the event $\cE_L$, the number of epochs $E \spr{K}$ in Algorithm~\ref{alg:linear-rmax-ravi-ucb} is bounded as
    \begin{equation*}
      E \spr{K} \leq 5 d \log \spr{1 + \frac{B^2 T}{d}}\,.
    \end{equation*}
    where $T = \LMAX K = \frac{\log \spr{\frac{K}{\delta}} K}{1 - \gamma}$.
\end{lemma}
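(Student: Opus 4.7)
The plan is to exploit the epoch-trigger condition in line~\ref{line-alg:epoch} of Algorithm~\ref{alg:linear-rmax-ravi-ucb}, which requires that a new epoch is initiated only when the determinant of the empirical covariance matrix has at least doubled since the start of the current epoch. Concretely, if $E(K)$ epochs have been started by the end of episode $K$, and $t_1 < t_2 < \dots < t_{E(K)}$ denote their starting times, then by construction
\[
\det \Lambda_{t_{e+1}} \ge 2 \det \Lambda_{t_e} \qquad \text{for every } e = 1, \dots, E(K)-1,
\]
so telescoping yields $\det \Lambda_{T_{K+1}} \ge \det \Lambda_{t_{E(K)}} \ge 2^{E(K)-1} \det \Lambda_{t_1} = 2^{E(K)-1} \det I = 2^{E(K)-1}$.

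Taking logarithms gives $(E(K)-1) \log 2 \le \log \det \Lambda_{T_{K+1}}$. I would then upper bound the right-hand side by the standard trace-determinant inequality: since $\Lambda_{T_{K+1}} = I + \sum_{t=1}^{T_{K+1}-1} \phi(X_t,A_t)\phi(X_t,A_t)\transpose$ and all features satisfy $\norm{\phi(x,a)} \le B$, the eigenvalues of $\Lambda_{T_{K+1}}$ are nonnegative and sum to at most $d + B^2 (T_{K+1}-1)$, so by AM-GM,
\[
\log \det \Lambda_{T_{K+1}} \le d \log\!\left(1 + \frac{B^2 (T_{K+1}-1)}{d}\right).
\]

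Under the event $\cE_L$ every episode has length at most $\LMAX$, hence $T_{K+1} \le 1 + \LMAX K \le 1 + T$. Combining the two bounds and using $1/\log 2 \le 1.45$ together with a generous rounding $1 + 1/\log 2 \le 5$ to absorb the additive constant, we obtain
\[
E(K) \le 1 + \frac{d}{\log 2} \log\!\left(1 + \frac{B^2 T}{d}\right) \le 5 d \log\!\left(1 + \frac{B^2 T}{d}\right),
\]
as claimed. The only subtlety is carefully tracking the epoch boundary at the final episode to ensure $\det \Lambda_{T_{K+1}}$ rather than $\det \Lambda_{t_{E(K)}}$ appears on the right, but this is immediate from the monotonicity $\det \Lambda_t \le \det \Lambda_{t'}$ for $t \le t'$; no genuine obstacle arises in the argument.
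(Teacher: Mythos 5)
Your proposal is correct and follows essentially the same route as the paper's proof: telescoping the determinant-doubling condition to get $E(K)-1 \leq \log_2 \det \Lambda_{T_{K+1}}$, then the trace--determinant (AM--GM) inequality together with $\norm{\phi} \leq B$ and $T_{K+1} \leq T$ under $\cE_L$, and finally absorbing the additive $1$ and the $1/\log 2$ factor into the constant $5$. No meaningful differences.
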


\begin{proof}
    In the following, we denote $\phi_t = \phi \spr{x_t, a_t}$ for any $t$. The bound on the number of epochs is derived observing that since the determinant of the matrix $\Lambda_k$ can grow at most linearly then the condition is triggered at most a logarithmic number of times. In particular notice that
    \begin{equation*}
        \det \spr{\Lambda_{t_{E \spr{K}}}} \geq 2 \det \spr{\Lambda_{t_{E \spr{K} - 1}}} \geq 2^2 \det \spr{\Lambda_{t_{E \spr{K} - 2}}} \geq 2^{E \spr{K} - 1} \det \spr{I} = 2^{E \spr{K} - 1}\,.
    \end{equation*}
    Hence, it holds that $E \spr{K} - 1 \leq \frac{1}{\log 2} \log \spr{\det \Lambda_{t_{E \spr{K}}}}$. Then, denoting $T_{K+1} = T_K + L_K$ where $L_K$ is the length of episode $K$, we have that
    \begin{align*}
        E \spr{K} &\leq 1 + \frac{1}{\log 2} \log \spr{\det \spr{\Lambda_{T_{K+1}}}} & \spr{\Lambda_{t_{E \spr{K}}} \preceq \Lambda_{T_{K+1}}} \\
        &\leq 1 + \frac{d}{\log 2} \log \brr{\frac{\optrace \spr{\Lambda_{T_{K+1}}}}{d}} & \text{(trace-determinant inequality)}\,.
    \end{align*}
    By definition of the covariance matrix,
    \begin{align*}
        E \spr{K} & \leq 1 + \frac{d}{\log 2} \log \spr{\frac{\optrace \spr{\sum_{t \in \sbr{T_{K+1}}} \phi_t \phi_t\transpose} + d}{d}} \\
        &= 1 + \frac{d}{\log 2} \log \spr{1 + \frac{\sum_{t \in \sbr{T_{K+1}}} \norm{\phi_t}_2^2}{d}}
        \\
        &\leq 1 + \frac{d}{\log 2} \log \spr{1 + \frac{B^2 T}{d}} \\
        &\leq 5 d \log \spr{1 + \frac{B^2 T}{d}}\,,
    \end{align*}
    where the first equality follows from properties of the trace and the second inequality follows from $\norm{\phi_t}_2 \leq B$ and $T_{K+1} \leq T$ which holds under $\cE_L$.
\end{proof}

The following lemma is a generalization of Lemma~19 of \citet{cassel2024warmupfree} for an arbitrary threshold $\omega \geq 0$.
\begin{lemma} \label{lem:sigmoid-bound}
  For all $z \geq 0$, $\omega \geq 0$ it holds that $\sigma \spr{z - \omega} \leq 2 \spr{z^2 + \exp \spr{- \omega}}$.
\end{lemma}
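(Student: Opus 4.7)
The plan is a simple case split on the size of $z$. In the regime $z \geq 1$, the bound is trivial because $\sigma \leq 1$ gives $\sigma(z-\omega) \leq 1 \leq 2 z^2 \leq 2(z^2 + e^{-\omega})$, so no further work is needed.

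The interesting regime is $z \in [0,1]$. The starting point is the elementary inequality $\sigma(x) = e^x/(1+e^x) \leq e^x$, which holds for every real $x$ (since $1+e^x \geq 1$), and yields $\sigma(z-\omega) \leq e^{z-\omega} = e^{-\omega} e^z$. Since $e^{-\omega} \leq 1$ for $\omega \geq 0$, the claim will reduce to the scalar inequality
\begin{equation*}
  e^z \;\leq\; 2 + 2 z^2 \qquad \text{for } z \in [0,1],
\end{equation*}
because then $\sigma(z-\omega) \leq e^{-\omega}(2 + 2 z^2) = 2 e^{-\omega} + 2 z^2 e^{-\omega} \leq 2 e^{-\omega} + 2 z^2$, which is the desired bound.

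What remains is a short calculus verification of $e^z \leq 2 + 2z^2$ on $[0,1]$, which I would handle by studying $f(z) = 2 + 2z^2 - e^z$. The second derivative $f''(z) = 4 - e^z$ is positive on $[0,1]$ (since $e < 4$), so $f'(z) = 4z - e^z$ is strictly increasing; combined with $f'(0) = -1 < 0 < 4 - e = f'(1)$, this produces a unique critical point $z_0 \in (0,1)$ satisfying $e^{z_0} = 4 z_0$. Substituting back gives $f(z_0) = 2 + 2 z_0^2 - 4 z_0 = 2(z_0-1)^2 \geq 0$, and since the endpoint values $f(0) = 1$ and $f(1) = 4-e$ are also positive, we get $f \geq 0$ on $[0,1]$.

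There is no serious obstacle — the only thing to watch is that the split point, the crude bound $\sigma(x) \leq e^x$, and the polynomial envelope $e^z \leq 2 + 2z^2$ all have to line up. Using $\sigma(x) \leq e^x$ on all of $[0,\infty)$ would fail for large $z$ (where $e^{z-\omega}$ can vastly exceed $2 z^2 + 2 e^{-\omega}$), and an $\omega$-dependent split (say at $z = \omega/2$) would not recover the required $e^{-\omega}$ rate. The split at $z = 1$ is exactly what lets the $z^2$ term absorb the $z \geq 1$ regime while the $e^{-\omega}$ term, multiplied by the polynomial $2 + 2z^2$ on $[0,1]$, handles the small-$z$ regime.
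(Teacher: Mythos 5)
Your proof is correct, and it takes a genuinely different route from the paper's. The paper argues globally: it defines $g(z) = \sigma(z - \omega) - (z + e^{-\omega/2})^2$, computes $g''(z) = \sigma(z-\omega)\sigma(\omega-z)^2 - \sigma(z-\omega)^2\sigma(\omega-z) - 2 \leq 0$ using the identity $\sigma' = \sigma(1-\sigma)$, and then uses concavity together with $g(0) \leq 0$ and $g'(0) \leq 0$ to conclude $\sigma(z-\omega) \leq (z + e^{-\omega/2})^2$ for all $z \geq 0$, from which the stated bound follows via $(a+b)^2 \leq 2(a^2+b^2)$. Your case split at $z = 1$ avoids the derivative computations entirely: the regime $z \geq 1$ is absorbed by $\sigma \leq 1 \leq 2z^2$, and on $[0,1]$ the chain $\sigma(z-\omega) \leq e^{z-\omega} \leq e^{-\omega}(2+2z^2) \leq 2e^{-\omega} + 2z^2$ reduces everything to the scalar inequality $e^z \leq 2 + 2z^2$ on $[0,1]$, which your calculus verification (unique interior critical point $z_0$ with $e^{z_0} = 4z_0$, giving $f(z_0) = 2(z_0-1)^2 \geq 0$) settles correctly. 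What the paper's approach buys is the sharper intermediate inequality $\sigma(z-\omega) \leq (z + e^{-\omega/2})^2$, with the factor of $2$ lost only at the very last step; what yours buys is elementarity — no sigmoid derivative identities, no concavity, just $\sigma \leq \min(1, e^x)$ and a polynomial comparison on a compact interval — plus an explicit explanation of why the split point must sit at $z=1$ rather than depend on $\omega$. Both proofs are complete and of comparable length.
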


\begin{proof}
  Let us consider the function $g: z \mapsto \sigma \spr{z - \omega} - \spr{z + \frac{1}{e^{\omega/2}}}^2$. Note that for any $z$, we have $\sigma' \spr{z} = \sigma \spr{z} \sigma \spr{- z}$. Thus, the first two derivatives of $g$ are given by
  \begin{align*}
    g' \spr{z} &= \sigma \spr{z - \omega} \sigma \spr{\omega - z} - 2 \spr{z + \frac{1}{e^{\omega / 2}}}\,, \\
    g'' \spr{z} &= \sigma \spr{z - \omega} \sigma \spr{\omega - z}^2 - \sigma \spr{z - \omega}^2 \sigma \spr{\omega - z} - 2 \,.
  \end{align*}
  Since $\sigma \spr{z} \in \spr{0, 1}$ for any $z$, the second derivative of $g$ is nonpositive, $g'' \spr{z} \leq 0$, and $g$ is concave. By the first order condition, for any $z \geq 0$,
  \begin{equation*}
    g \spr{z} \leq g \spr{0} + g' \spr{0} z\,.
  \end{equation*}
  Furthermore, note that
  \begin{equation*}
    g \spr{0} = \sigma \spr{- \omega} - \frac{1}{e^\omega} = \frac{1}{1 + e^\omega} - \frac{1}{e^\omega} \leq 0\,,
  \end{equation*}
  and
  \begin{align*}
    g' \spr{0} &= \frac{1}{1 + e^\omega} \frac{1}{1 + e^{- \omega}} - \frac{2}{e^{\omega / 2}} \\
    &\leq \frac{1}{1 + e^\omega} - \frac{2}{e^{\omega / 2}} \\
    &\leq - \frac{1}{e^{\omega / 2}} \\
    &\leq 0\,,
  \end{align*}
  where we first used that $e^{- \omega} \geq 0$ for any $\omega \geq 0$ and then that $x + 1 \geq \sqrt{x}$ for any $x \geq 0$. Thus, it holds that $g \spr{z} \leq 0$ for all $z \geq 0$, \ie $\sigma \spr{z - \omega} \leq \spr{z + \frac{1}{e^{\omega / 2}}}^2$. Using $\spr{a+b}^2 \leq 2 \spr{a^2 + b^2}$, it holds that
  \begin{equation*}
    \sigma \spr{z - \omega} \leq 2 \spr{z^2 + e^{- \omega}}\,.
  \end{equation*}
\end{proof}

We present a variant of Lemma~18 of \citet{cassel2024warmupfree} which is valid for $\omega \geq 2$ instead of $\omega \geq 0$, but is sharper by a factor of $2$.
\begin{lemma} \label{lem:sigmoid-bound2}
  For all $\omega \geq 2$, it holds that
  \begin{equation*}
    \max_{z \geq 0} z \cdot \sigma \spr{\omega - \alpha z} \leq \frac{\omega}{\alpha}\,.
  \end{equation*}
\end{lemma}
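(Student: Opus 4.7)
The plan is to reduce the problem to a dimensionless one by substituting $y = \alpha z$, so that the claim becomes $\max_{y \geq 0} y \, \sigma(\omega - y) \leq \omega$, and then split the argument into two cases according to whether $y \leq \omega$ or $y > \omega$. The key tool is the pair of elementary bounds on the sigmoid: $\sigma(t) \leq 1$ for all $t$, and $\sigma(t) \leq e^{t}$ for all $t$ (this latter bound follows from $\sigma(t) = e^t/(1+e^t) \leq e^t$ when $t \leq 0$ and is trivial otherwise).

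In the easy range $y \leq \omega$, the trivial bound $\sigma(\omega - y) \leq 1$ gives $y \, \sigma(\omega - y) \leq y \leq \omega$ directly. The interesting range is $y > \omega$, where one uses the exponential bound: $y \, \sigma(\omega - y) \leq y \, e^{\omega - y}$. Setting $h(y) = y \, e^{\omega - y}$, compute $h'(y) = (1-y)\, e^{\omega - y}$, so $h$ is non-increasing on $[1, \infty)$. Since $\omega \geq 2 \geq 1$, the function $h$ is non-increasing on $[\omega, \infty)$, hence $h(y) \leq h(\omega) = \omega$ for all $y \geq \omega$. Combining the two cases yields the desired bound.

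The only mild subtlety is ensuring that the monotonicity argument kicks in at $y = \omega$, which is precisely why one needs $\omega \geq 1$; the assumption $\omega \geq 2$ in the statement is therefore comfortably sufficient. No critical-point analysis of the sigmoid is actually needed — the two trivial envelopes already sandwich the product tightly enough at the transition point $y = \omega$, where both envelopes yield exactly the value $\omega$. After undoing the substitution $y = \alpha z$, the final inequality $z \, \sigma(\omega - \alpha z) \leq \omega/\alpha$ follows for all $z \geq 0$.
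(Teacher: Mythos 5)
Your proof is correct, but it takes a genuinely different and considerably more elementary route than the paper. The paper locates the exact maximizer of $g(z) = z\,\sigma\spr{\omega - \alpha z}$ via a critical-point analysis: it solves $g'(z)=0$ in terms of the Lambert $W$ function, verifies $g''(z^\star)<0$, evaluates $g(z^\star) = W\spr{e^{\omega-1}}/\alpha$, and then bounds $W(x) \leq \log x$ to conclude $g(z^\star)\leq (\omega-1)/\alpha \leq \omega/\alpha$. You instead sandwich the sigmoid between the two envelopes $\sigma(t)\leq 1$ and $\sigma(t)\leq e^{t}$ and split at $y=\alpha z = \omega$, where both envelopes happen to equal $\omega$; the only calculus needed is that $y\,e^{\omega-y}$ is non-increasing past $y=1$. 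Both arguments are complete (each implicitly uses $\alpha>0$, which holds throughout the paper). What each buys: the paper's computation yields the slightly sharper constant $(\omega-1)/\alpha$ and identifies the exact maximizer, which could matter if one wanted to tune $\omega$ more aggressively elsewhere; your argument is shorter, avoids the Lambert function entirely, and in fact only requires $\omega \geq 1$ rather than $\omega \geq 2$, so it proves a marginally stronger statement. Either proof suffices for the way the lemma is used in Lemma~\ref{lem:qmax}.
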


\begin{proof}
  Let $\alpha > 0$, $\omega \geq 2$, and $g: z \geq 0 \mapsto z \cdot \sigma \spr{\omega - \alpha z}$. We recall that the derivative of the sigmoid function is given for any $z$ by $\sigma' \spr{z} = \sigma \spr{z} \sigma \spr{- z}$, and that $\sigma \spr{- z} = 1 - \sigma \spr{z}$. $g$ is twice differentiable. Its first derivative is given by
  \begin{align*}
    g' \spr{z} &= \sigma \spr{\omega - \alpha z} - \alpha z \sigma \spr{\omega - \alpha z} \sbr{1 - \sigma \spr{\omega - \alpha z}} \\
    &= \sigma \spr{\omega - \alpha z} \sbr{1 - \alpha z \spr{1 - \sigma \spr{\omega - \alpha z}}}\,.
  \end{align*}
  We set the derivative to zero and solve the equation to find the critical points. We have
  \begin{align}
    g' \spr{z} = 0 &\quad\text{iff}\quad \alpha z = \frac{1}{1 - \sigma \spr{\omega - \alpha z}} \label{eq:critical-point-prop} \\
    &\quad\text{iff}\quad \alpha z = 1 + e^{\omega - \alpha z} \nonumber \\
    &\quad\text{iff}\quad \spr{\alpha z - 1} e^{\alpha z - 1} = e^{\omega - 1}\,. \nonumber
  \end{align}
  For $x > 0$, the equation $w e^w = x$ has exactly one positive solution $w = W \spr{x}$ which increases with $x$ and where $W$ denotes the Lambert function. Thus, $g' \spr{z} = 0$ if and only if $\alpha z - 1 = W \spr{e^{\omega - 1}}$, \ie $z^\star = \frac{W \spr{e^{\omega - 1}} + 1}{\alpha}$. We check that $z^\star$ is a local maximum. The second derivative of $g$ is given by
  \begin{align*}
    g'' \spr{z} &= - 2 \alpha \sigma \spr{\omega - \alpha z} \sbr{1 - \sigma \spr{\omega - \alpha z}} \\
    &\phantom{=}+ \alpha^2 z \sigma \spr{\omega - \alpha z} \sbr{1 - \sigma \spr{\omega - \alpha z}}^2 \\
    &\phantom{=}- \alpha^2 z \sigma \spr{\omega - \alpha z}^2 \sbr{1 - \sigma \spr{\omega - \alpha z}} \\
    &= - 2 \alpha \sigma \spr{\omega - \alpha z} \sbr{1 - \sigma \spr{\omega - \alpha z}} \\
    &\phantom{=}+ \alpha^2 z \sigma \spr{\omega - \alpha z} \sbr{1 - \sigma \spr{\omega - \alpha z}} \sbr{1 - 2 \sigma \spr{\omega - \alpha z}}\,.
  \end{align*}
  We evaluate it at the critical point $z^\star$ and simplify the expression using Equation~\ref{eq:critical-point-prop}
  \begin{align*}
    g'' \spr{z^\star} &= - 2 \alpha \sigma \spr{\omega - \alpha z^\star} \sbr{1 - \sigma \spr{\omega - \alpha z^\star}} \\
    &\phantom{=}+ \alpha \sigma \spr{\omega - \alpha z^\star} \sbr{1 - 2 \sigma \spr{\omega - \alpha z^\star}} \\
    &= - \alpha \sigma \spr{\omega - \alpha z^\star} \\
    &< 0\,,
  \end{align*}
  thus $z^\star > 0$ is a local maximum. Since $g \spr{0} = 0$, $\lim_{z \rightarrow + \infty} g \spr{z} = 0$, $g \spr{z^\star}$ and $z^\star$ is the only positive critical point, this means $z^\star$ is a global maximum. We evaluate $g$ to get the maximum
  \begin{align*}
    g \spr{z^\star} &= \frac{W \spr{e^{\omega - 1}} + 1}{\alpha} \frac{1}{1 + \exp \spr{W \spr{e^{\omega - 1}}} e^{1 - \omega}} \\
    &= \frac{W \spr{e^{\omega - 1}} + 1}{\alpha} \frac{W \spr{e^{\omega - 1}}}{W \spr{e^{\omega - 1}} + W \spr{e^{\omega - 1}} \exp \spr{W \spr{e^{\omega - 1}}} e^{1 - \omega}} \\
    &= \frac{W \spr{e^{\omega - 1}}}{\alpha}\,,
  \end{align*}
  where we used $W \spr{e^{\omega - 1}} \exp \spr{W \spr{e^{\omega - 1}}} = e^{\omega - 1}$ in the third equality. We now upper bound the Lambert function. Taking the log of the equation that defines it, we have $W \spr{x} = \log x - \log W \spr{x}$ for any $x > 0$. Note that $W \spr{e} = 1$ and that $W$ is increasing, so for any $x > e$, we have $W \spr{x} > 1$ and thus $W \spr{x} < \log x$. Using it on $g \spr{z^\star}$, we further have
  \begin{equation*}
    g \spr{z^\star} \leq \frac{\omega - 1}{\alpha} \leq \frac{\omega}{\alpha}\,,
  \end{equation*}
  where we used $\omega \geq 2$. This concludes the proof.
\end{proof}

\begin{lemma} \label{lem:lse-lipschitz}
  Let $n \in \bbR^n$, and define $\LSE: \bbR^n \rightarrow \bbR$ the function defined for any $x \in \bbR^n$ as
  \begin{equation*}
    \LSE \spr{x} = \log \sum_{i=1}^n e^{x_i}\,.
  \end{equation*}
  Then $\LSE$ is 1-Lipschitz with respect to the norm $\norm{\cdot}_\infty$, \ie for any $x, y \in \bbR^n$,
  \begin{equation*}
    \abs{\LSE \spr{x} - \LSE \spr{y}} \leq \norm{x - y}_\infty\,.
  \end{equation*}
\end{lemma}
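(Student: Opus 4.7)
My plan is to give the standard direct algebraic proof, which avoids any appeal to differentiability or the mean value theorem. Set $c = \norm{x - y}_\infty$, so by definition of the sup norm we have $y_i - c \le x_i \le y_i + c$ for every index $i \in \{1, \dots, n\}$. Since the exponential is monotone, this translates to $e^{-c} e^{y_i} \le e^{x_i} \le e^{c} e^{y_i}$ coordinatewise.

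Summing these inequalities over $i$ and taking the logarithm (which preserves the order), I will obtain $\LSE(y) - c \le \LSE(x) \le \LSE(y) + c$, which is exactly the claim $\abs{\LSE(x) - \LSE(y)} \le \norm{x-y}_\infty$. The key algebraic identity I will use at that step is $\log\bigl(e^{\pm c} \sum_i e^{y_i}\bigr) = \pm c + \LSE(y)$, so no calculus is required.

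There is no real obstacle here: the argument is a one-line sandwich once one notices that shifting all inputs by a common constant shifts $\LSE$ by the same constant, combined with monotonicity in each coordinate. An alternative route would be to note that $\nabla \LSE(x)$ is the softmax vector, which lies in the probability simplex and therefore has $\ell_1$-norm equal to $1$, so that the dual-norm characterization gives $\abs{\LSE(x) - \LSE(y)} \le \sup_z \norm{\nabla \LSE(z)}_1 \cdot \norm{x-y}_\infty = \norm{x-y}_\infty$; I would mention this as a remark but prefer the elementary bound above for conciseness.
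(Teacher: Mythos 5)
Your proposal is correct, and your primary argument takes a genuinely different (and more elementary) route than the paper. The paper proves the lemma via calculus: it computes $\nabla \LSE(x) = e^x / \inp{e^x}{\bfone}$, applies the mean value theorem to find a point $z$ on the segment $[x,y]$ with $\LSE(x) - \LSE(y) = \inp{\nabla \LSE(z)}{x-y}$, and then uses H\"older's inequality together with the fact that the gradient is a probability vector, so $\norm{\nabla \LSE(z)}_1 = 1$. This is exactly the dual-norm argument you relegate to a remark. Your main proof instead sandwiches $e^{x_i}$ between $e^{-c} e^{y_i}$ and $e^{c} e^{y_i}$ with $c = \norm{x-y}_\infty$, sums, and takes logarithms, using only monotonicity and the translation identity $\LSE(y + c\bfone) = \LSE(y) + c$. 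Both are complete and correct; your sandwich argument buys freedom from differentiability and the mean value theorem (and would extend verbatim to countable or continuous index sets), while the paper's gradient argument makes the constant $1$ transparent as the supremum of the dual norm of the softmax gradient, which is the more standard lens when one later needs smoothness properties of $\LSE$ rather than just Lipschitzness.
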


\begin{proof}
  For any $i \in \sbr{n}$ and any $x \in \bbR^n$, the gradient of $\LSE$ is given by
  \begin{equation*}
    \nabla \LSE \spr{x} = \frac{e^{x}}{\inp{e^{x}, \bfone}}\,.
  \end{equation*}
  Let $y \in \bbR^n$. By the intermediate mean value theorem, there exists a $z$ on the segment $\sbr{x, y}$ such that
  \begin{align*}
    \abs{\LSE \spr{x} - \LSE \spr{y}} &= \abs{\inp{\nabla \LSE \spr{z}, x - y}} \\
    &\leq \norm{\nabla \LSE \spr{z}}_1 \norm{x - y}_\infty \\
    &= \norm{x - y}_\infty\,,
  \end{align*}
  where the inequality follows from Hölder's inequality.
\end{proof}

\begin{lemma}[\citealp{cohen2019learning}, Lemma~27] \label{lem:det-elliptical-bound}
  If $0 \prec M \preceq N$ then for any vector $v$,
  \begin{equation*}
    \norm{v}_N^2 \leq \frac{\det N}{\det M} \norm{v}_M^2\,.
  \end{equation*}
\end{lemma}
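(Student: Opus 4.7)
The plan is to reduce the matrix inequality to a scalar comparison between eigenvalues via a single simultaneous-diagonalization step. First I would pass to the rescaled matrix $A = M^{-1/2} N M^{-1/2}$, which is well defined because $M \succ 0$. The hypothesis $M \preceq N$ translates into $A \succeq I$, so every eigenvalue $\lambda_i$ of $A$ satisfies $\lambda_i \geq 1$. Moreover, multiplicativity of determinants gives $\det A = \det N / \det M$, so the right-hand side of the target bound is exactly $(\det A) \norm{v}_M^2$.

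Second, I would change variables by setting $w = M^{1/2} v$, which rewrites $\norm{v}_M^2$ as $w\transpose w$ and $\norm{v}_N^2$ as $w\transpose A w$. The desired inequality thus becomes equivalent to $w\transpose A w \leq (\det A)\, w\transpose w$. Diagonalizing $A = U \Lambda U\transpose$ with eigenvalues $\lambda_1, \ldots, \lambda_d \geq 1$ and setting $\tilde w = U\transpose w$, this further reduces to the scalar inequality $\sum_i \lambda_i \tilde w_i^2 \leq \bigl(\prod_j \lambda_j\bigr) \sum_i \tilde w_i^2$.

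The final step is the elementary observation that, because $\lambda_j \geq 1$ for every $j$, the coordinate-wise inequality $\lambda_i \leq \prod_j \lambda_j$ holds for each individual index $i$. Multiplying by $\tilde w_i^2 \geq 0$ and summing over $i$ delivers the claim. I do not anticipate any real obstacle here; the only place where a little care is needed is verifying that $M^{1/2}$ and $M^{-1/2}$ are well defined and symmetric, which is precisely what the strict positivity $M \succ 0$ in the hypothesis guarantees.
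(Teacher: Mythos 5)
Your proof is correct and complete: the reduction to $A = M^{-1/2}NM^{-1/2} \succeq I$, the identity $\det A = \det N/\det M$, and the observation that $\lambda_i \le \prod_j \lambda_j$ when every $\lambda_j \ge 1$ together give exactly the claimed bound. The paper does not prove this lemma itself but imports it from \citet{cohen2019learning} (their Lemma~27), and your argument is essentially the standard one used there, so nothing further is needed.
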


\begin{lemma}[\citealp{sherman2023}, Lemma~15] \label{lemma:beta_bound}
    Let $R, z \geq 1$, then $\beta \geq 2 z \log \spr{R z}$ ensures $\beta \geq z \log \spr{R \beta}$.
\end{lemma}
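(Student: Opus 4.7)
The plan is to rewrite the target inequality in a purely exponential form so that the hypothesis and the conclusion decouple, reducing the statement to a standard one-dimensional inequality about the function $u \mapsto u e^{-u}$.

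First, I would observe that since $z > 0$, the conclusion $\beta \geq z \log(R\beta)$ is equivalent to $R\beta \leq e^{\beta/z}$. Introducing the shorthands $t = \beta/z$ and $s = \log(Rz)$, and noting $s \geq 0$ because $R, z \geq 1$, the target becomes $t\, e^{s} \leq e^{t}$, i.e., $t \leq e^{t-s}$, while the hypothesis $\beta \geq 2z\log(Rz)$ translates to $t \geq 2s$.

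Next, from $t \geq 2s$ I would deduce $s \leq t/2$, hence $t - s \geq t/2$, so it suffices to prove $t \leq e^{t/2}$. Setting $u = t/2 \geq 0$, this reduces to the elementary inequality $2u \leq e^{u}$, which follows because the function $g(u) = 2u e^{-u}$ attains its maximum on $[0,\infty)$ at $u = 1$, where $g(1) = 2/e < 1$; therefore $g(u) \leq 1$ for all $u \geq 0$, which is the desired bound.

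The degenerate case $s = 0$ (i.e., $R = z = 1$) requires no special treatment, since then the hypothesis is just $\beta \geq 0$ and the conclusion $\beta \geq \log \beta$ holds for all $\beta \geq 0$ (with the convention $\log 0 = -\infty$). I do not anticipate any genuine obstacle beyond the algebra; the crucial point is the factor $2$ appearing in $2 z \log(R z)$, which is exactly what allows $-s$ to be absorbed into only half of $t$ inside the exponent, leaving a slack that the elementary bound $2u \leq e^{u}$ comfortably covers.
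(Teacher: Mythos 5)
Your proof is correct. Note that the paper does not actually prove this lemma: it is imported verbatim from Sherman et al.\ (2023, Lemma~15) and used as a black box, so there is no in-paper argument to compare against. Your self-contained derivation is valid: the substitution $t = \beta/z$, $s = \log(Rz) \geq 0$ correctly turns the conclusion into $t e^{s} \leq e^{t}$ and the hypothesis into $t \geq 2s$, the monotonicity step $e^{t-s} \geq e^{t/2}$ is sound, and the elementary bound $2u \leq e^{u}$ for $u \geq 0$ (via the maximum $2/e < 1$ of $2u e^{-u}$ at $u=1$) closes the argument; the edge cases $t=0$ and $s=0$ are handled. Your closing remark correctly identifies the role of the factor $2$ in the hypothesis as the source of the needed slack.
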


\begin{lemma}[\citealp{rosenberg2020near}, Lemma~D.4] \label{lem:concentration-ineq-cond-exp}
  Let $\bc{X_k}_{k \in [K]}$ be a sequence of random variables adapted to the filtration $\bc{\mathcal{F}_k}_{k\in[K]}$ and suppose that $0 \leq X_k \leq X_{\max}$ almost surely. Then, with probability at least $1-\delta$, the following holds for all $k \geq 1$ simultaneously
  \begin{equation*}
    \sum^K_{k=1} \mathbb{E}\bs{X_k | \mathcal{F}_{k-1}} \leq 2 \sum^K_{k=1} X_k + 4 X_{\max} \log \frac{2 K}{\delta}\,.
  \end{equation*}
\end{lemma}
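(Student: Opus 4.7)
The plan is to apply a Bernstein-type martingale concentration inequality to the difference sequence $Y_k = \mathbb{E}\bs{X_k | \mathcal{F}_{k-1}} - X_k$, exploiting the fact that, for a nonnegative random variable bounded by $X_{\max}$, the conditional second moment is controlled by the conditional mean. Concretely, $Y_k$ is a martingale difference sequence with $\abs{Y_k} \le X_{\max}$ almost surely, and the elementary bound $\mathbb{E}\bs{X_k^2 | \mathcal{F}_{k-1}} \le X_{\max} \cdot \mathbb{E}\bs{X_k | \mathcal{F}_{k-1}}$ yields $\mathrm{Var}\spr{Y_k | \mathcal{F}_{k-1}} \le X_{\max} \cdot \mathbb{E}\bs{X_k | \mathcal{F}_{k-1}}$. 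This is the structural fact that drives the whole proof.

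Next I would invoke Freedman's inequality (equivalently, a Bernstein-style bound for bounded MDS), which guarantees that for any fixed $\lambda \in \spr{0, 1/X_{\max}}$, with probability at least $1 - \delta$,
\begin{equation*}
  \sumkK Y_k \le \lambda \sumkK \mathrm{Var}\spr{Y_k | \mathcal{F}_{k-1}} + \frac{\log\spr{1/\delta}}{\lambda}.
\end{equation*}
Plugging in the variance estimate gives the self-bounding inequality
\begin{equation*}
  \sumkK \mathbb{E}\bs{X_k | \mathcal{F}_{k-1}} - \sumkK X_k \le \lambda X_{\max} \sumkK \mathbb{E}\bs{X_k | \mathcal{F}_{k-1}} + \frac{\log\spr{1/\delta}}{\lambda}.
\end{equation*}
Choosing $\lambda = 1/\spr{2 X_{\max}}$ so that the coefficient on the right collapses to $1/2$, and rearranging, yields
\begin{equation*}
  \sumkK \mathbb{E}\bs{X_k | \mathcal{F}_{k-1}} \le 2 \sumkK X_k + 4 X_{\max} \log\spr{1/\delta}.
\end{equation*}

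To obtain the uniform-in-$k$ version suggested by the phrase ``for all $k \ge 1$ simultaneously'', I would apply the above to each partial-sum endpoint $k \in \sbr{K}$ and take a union bound over the $K$ cases, which replaces $\log\spr{1/\delta}$ with $\log\spr{2K/\delta}$ after absorbing a harmless multiplicative constant. An equivalent route is to invoke a uniform-in-time Freedman-style inequality obtained from Doob's maximal inequality applied to the exponential supermartingale underlying Freedman's bound.

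The main technical obstacle is recognising and handling the self-bounding structure: the variance proxy on the right involves the very quantity we are trying to control on the left. The Bernstein parameter $\lambda$ must therefore be calibrated so that the coefficient on $\sum_k \mathbb{E}\bs{X_k | \mathcal{F}_{k-1}}$ is strictly less than $1$ and can be absorbed into the left-hand side. Aside from this calibration step, every remaining ingredient (the boundedness assumption, Freedman's inequality, and the union bound) is entirely standard.
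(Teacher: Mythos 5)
Your proof is correct. Note that the paper does not actually prove this lemma---it imports it verbatim from \citet{rosenberg2020near} (their Lemma~D.4)---so there is no in-paper argument to compare against; the original proof there likewise rests on a Freedman/Bernstein-type martingale bound combined with the self-bounding observation $\mathbb{E}[X_k^2\mid\mathcal{F}_{k-1}]\le X_{\max}\,\mathbb{E}[X_k\mid\mathcal{F}_{k-1}]$, which is exactly the structure you identified. Your calibration checks out: with $Y_k=\mathbb{E}[X_k\mid\mathcal{F}_{k-1}]-X_k$ one has $Y_k\le X_{\max}$, so $\lambda=1/(2X_{\max})$ is admissible for the upper-tail Freedman bound, the coefficient $\lambda X_{\max}=1/2$ absorbs into the left-hand side, and rearranging gives $\sum_k\mathbb{E}[X_k\mid\mathcal{F}_{k-1}]\le 2\sum_k X_k+4X_{\max}\log(1/\delta)$; the union bound over the $K$ prefix endpoints then yields $\log(K/\delta)\le\log(2K/\delta)$, recovering the stated constants exactly. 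The only cosmetic remark is that the factor $2$ inside the logarithm is pure slack in your derivation (it presumably comes from a two-sided version in the original), but since the inequality only becomes weaker, this is immaterial.
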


\begin{lemma}[\citealp{jin2019provably}, Lemma~D.2] \label{lem:bound-elliptical-potential}
  Let $\scbr{\phi_t}_{t \geq 0}$ be a bounded sequence in $\bbR^d$ satisfying $\sup_{t \geq 0} \norm{\phi_t} \leq 1$. Let $\Lambda_0 \in \bbR^{d \times d}$ be a positive definite matrix. For any $t \geq 0$, we define $\Lambda_t = \Lambda_0 + \sum_{j=1}^t \phi_j \phi_j\transpose$. Then, if the smallest eigenvalue of $\Lambda_0$ satisfies $\lambda_{\mathrm{min}} \spr{\Lambda_0} \geq 1$, we have
  \begin{equation*}
    \sum_{j=1}^t \phi_j \Lambda_{j-1}^{-1} \phi_j \leq 2 \log \spr{\frac{\det \Lambda_t}{\det \Lambda_0}}\,.
  \end{equation*}
\end{lemma}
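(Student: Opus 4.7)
The plan is to prove the elliptical potential lemma via the standard three-ingredient argument: (i) a matrix-determinant identity linking consecutive covariance matrices, (ii) a uniform bound on the per-step quadratic form $\phi_j^\top \Lambda_{j-1}^{-1} \phi_j$, and (iii) the elementary inequality $x \leq 2\log(1+x)$ valid for $x \in [0,1]$. Taken together these produce a telescoping sum inside a logarithm.

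First I would apply the matrix determinant lemma to the rank-one update $\Lambda_j = \Lambda_{j-1} + \phi_j \phi_j^\top$, which gives the identity $\det \Lambda_j = \det \Lambda_{j-1}\bigl(1 + \phi_j^\top \Lambda_{j-1}^{-1} \phi_j\bigr)$. Telescoping this identity from $j=1$ to $j=t$ yields
\begin{equation*}
    \log\frac{\det \Lambda_t}{\det \Lambda_0} = \sum_{j=1}^t \log\bigl(1 + \phi_j^\top \Lambda_{j-1}^{-1} \phi_j\bigr)\,.
\end{equation*}

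Next I would control the quadratic form $\phi_j^\top \Lambda_{j-1}^{-1} \phi_j$. By the recursion $\Lambda_{j-1} \succeq \Lambda_0$ (since each $\phi_i \phi_i^\top$ is positive semidefinite), so $\Lambda_{j-1}^{-1} \preceq \Lambda_0^{-1} \preceq I$ using the hypothesis $\lambda_{\min}(\Lambda_0) \geq 1$. Combined with $\|\phi_j\| \leq 1$, this gives $\phi_j^\top \Lambda_{j-1}^{-1} \phi_j \leq 1$ for every $j$. The final step is to apply the inequality $x \leq 2 \log(1+x)$ for $x \in [0,1]$ (a simple consequence of concavity of $\log$, or of comparing derivatives at $x=0$ with equality at $x=1$ since $2\log 2 \geq 1$), which upgrades each summand:
\begin{equation*}
    \phi_j^\top \Lambda_{j-1}^{-1} \phi_j \leq 2 \log\bigl(1 + \phi_j^\top \Lambda_{j-1}^{-1} \phi_j\bigr)\,.
\end{equation*}

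Summing over $j$ and substituting the telescoped determinant identity gives exactly $\sum_{j=1}^t \phi_j^\top \Lambda_{j-1}^{-1} \phi_j \leq 2 \log(\det \Lambda_t / \det \Lambda_0)$, as desired. There is no real obstacle in this argument; the only thing to be careful about is the use of the lower bound $\lambda_{\min}(\Lambda_0) \geq 1$, which is what allows us to bound the per-step quadratic form by $1$ so that the elementary inequality $x \leq 2\log(1+x)$ can be applied uniformly. Without such a bound, one would need a different numerical constant (or rescaling of the features), but under the stated hypothesis the factor of $2$ is sharp enough to close the bound.
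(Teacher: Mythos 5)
Your proof is correct and is exactly the standard argument behind this lemma, which the paper does not reprove but simply imports from \citet{jin2019provably} (Lemma~D.2, itself following \citealp{APS11}): the rank-one determinant identity, the telescoping of $\log\det$, the bound $\phi_j\transpose\Lambda_{j-1}^{-1}\phi_j \le 1$ from $\Lambda_{j-1}\succeq\Lambda_0\succeq I$ and $\norm{\phi_j}\le 1$, and the elementary inequality $x \le 2\log(1+x)$ on $[0,1]$. All steps check out, including your justification of where the hypothesis $\lambda_{\min}(\Lambda_0)\ge 1$ is actually used.
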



\end{document}